\definecolor{light-gray}{gray}{0.9}
\icmltitlerunning{From Importance Sampling to Doubly Robust Policy Gradient}
\begin{document}

\newcommand{\defeq}{\mathrel{\mathop:}=}
\newcommand{\veps}{\varepsilon}
\renewcommand{\epsilon}{\varepsilon}

\newcommand{\trans}{^{\top}}

\newcommand{\cA}{\mathcal{A}}
\newcommand{\cS}{\mathcal{S}}
\newcommand{\Scal}{\mathcal{S}}
\newcommand{\Acal}{\mathcal{A}}
\newcommand{\EE}{\mathbb{E}}
\newcommand{\VV}{\mathbb{V}}

\newcommand{\btheta}{{\bm{\theta}}}
\newcommand{\pa}{{\pi_{\btheta+\Delta \btheta}}}
\newcommand{\pb}{{\pi_{\btheta}}}

\newcommand{\pta}[1]{{\pi_{\btheta+\Delta \btheta}^{#1}}}
\newcommand{\ptb}[1]{{\pi_{\boldsymbol{\theta}}^{#1}}}

\newcommand{\dt}{{\Delta\btheta}}
\newcommand{\bt}{{\bm\theta}}
\newcommand{\dpt}{{\Delta\pb}}
\newcommand{\hv}{\hat{V}}
\newcommand{\hq}{\hat{Q}}
\newcommand{\tv}{\tilde{V}}
\newcommand{\tq}{\tilde{Q}}
\newcommand{\tR}{\tilde{R}}
\newcommand{\tp}{\tilde{p}}
\newcommand{\hd}{\widehat{\text{DR}}}
\newcommand{\tpz}{\lim_{\dt\rightarrow \bm{0}}}
\newcommand{\gbt}{\nabla_{\btheta}}

\newcommand{\cov}{{\rm Cov}}

\newcommand\numberthis{\addtocounter{equation}{1}\tag{\theequation}}
\newcommand{\me}{{\mathbb{E}}}
\newcommand{\mv}{{\mathbb{V}}}
\newcommand{\trp}[1]{\Big(#1\Big)^2}

\newcommand{\si}{s_{\text{init}}}

\let\hat\widehat
\let\tilde\widetilde
\let\check\widecheck
\def\given{{\,|\,}}
\def\biggiven{{\,\big|\,}}
\def\Biggiven{{\,\Big|\,}}
\def\bigggiven{{\,\bigg|\,}}
\def\Bigggiven{{\,\Bigg|\,}}
\def\ds{\displaystyle}
\newcommand\wtilde{\stackrel{\sim}{\smash{\mathcal{W}}\rule{0pt}{1.1ex}}}

\newtheorem{theorem}{Theorem}[]
\newtheorem{lemma}[theorem]{Lemma}
\newtheorem{corollary}[theorem]{Corollary}
\newtheorem{remark}[theorem]{Remark}
\newtheorem{fact}[theorem]{Fact}
\newtheorem{property}[theorem]{Property}
\newtheorem{claim}{Remark}
\newtheorem{proposition}[theorem]{Proposition}
\theoremstyle{definition}
\newtheorem{definition}[theorem]{Definition}
\newtheorem{condition}[theorem]{Condition}
\newtheorem{example}[theorem]{Example}
\newtheorem{assumption}{Assumption}
\renewcommand*{\theassumption}{\Alph{assumption}}

\def\##1\#{\begin{align}#1\end{align}}
\def\$#1\${\begin{align*}#1\end{align*}}

\newcommand{\blue}[1]{{\color{blue} #1}}

\newcommand{\pt}[2]{\frac{\partial #1}{\partial \theta_{#2}}}

\newcommand{\dti}[1]{{\epsilon_{#1}}}

\newcommand{\dpi}[2]{{\Delta\pi^{#1}_{\theta_{#2}}}}

\newcommand{\turb}[1]{\btheta+\dti{#1}\bm{e}_{#1}}
\newcommand{\pai}[1]{{\pi_{\turb{#1}}}}

\newcommand{\tpzi}[1]{\lim_{\dti{#1}\rightarrow 0}}

\newcommand{\vect}[1]{\ensuremath{\mathbf{#1}}}

\newcommand{\ptai}[2]{{\pi_{\btheta+\dti{#2}\bm{e}_{#2}}^{#1}}}

\newcommand{\jpi}[1]{\hat{J}(#1)}

\newcommand{\diag}{{\rm Diag}}

\newcommand{\orange}[1]{{\color{orange} #1}}

\twocolumn[
\icmltitle{From Importance Sampling to Doubly Robust Policy Gradient}





\begin{icmlauthorlist}
\icmlauthor{Jiawei Huang}{UIUC}
\icmlauthor{Nan Jiang}{UIUC}
\end{icmlauthorlist}

\icmlaffiliation{UIUC}{Department of Computer Science, University of Illinois Urbana-Champaign}

\icmlcorrespondingauthor{Nan Jiang}{nanjiang@illinois.edu}

\icmlkeywords{reinforcement learning, off-policy evaulation, policy gradient}

\vskip 0.3in
]



\printAffiliationsAndNotice{}  


\begin{abstract}
We show that on-policy policy gradient (PG) and its variance reduction variants can be derived by taking finite difference of function evaluations supplied by estimators from the importance sampling (IS) family for off-policy evaluation (OPE). Starting from the doubly robust (DR) estimator \citep{jiang2016doubly}, we provide a simple derivation of a very general and flexible form of PG, which subsumes the state-of-the-art variance reduction technique  \citep{cheng2019trajectory} as its special case and immediately hints at further variance reduction opportunities overlooked by existing literature. We analyze the variance of the new DR-PG estimator, compare it to existing methods as well as the Cramer-Rao lower bound of policy gradient, and empirically show its effectiveness. 
\end{abstract}

\section{Introduction}
In reinforcement learning, policy gradient (PG) refers to the family of algorithms that estimate the gradient of the expected return w.r.t.~the policy parameters, often from on-policy Monte-Carlo trajectories. Off-policy evaluation (OPE) refers to the problem of evaluating a policy that is different from the data generating policy, often by \emph{importance sampling} (IS) techniques. 

Despite the superficial difference that standard PG is on-policy while IS for OPE is off-policy by definition, they share many similarities: both PG and IS are arguably based on the Monte-Carlo principle (as opposed to the dynamic programming principle); both of them often suffer from high variance, and variance reduction techniques have been studied extensively for PG and IS separately in the literature. Given these similarities, one may naturally wonder: \emph{is there a deeper connection between the two topics?}

\paragraph{Summary of the Paper} We provide a simple and positive answer to the above question in the episodic RL setting. In particular, one can write down the policy gradient as (we informally illustrate the idea with scalar $\theta$ for now)
\begin{align} \label{eq:grad}
\lim_{\Delta\theta\to 0} \frac{J(\pi_{\theta + \Delta\theta}) - J(\pi_{\theta})}{\Delta\theta},
\end{align}
where $\theta$ is the current policy parameter, and $J(\cdot)$ is the expected return of a policy w.r.t.~some initial state distribution. The connection between IS and PG is extremely simple: using any method in the IS family to estimate $J(\cdot)$ in Eq.\eqref{eq:grad} will lead to a version of PG, and most unbiased PG estimators---with different variance reduction techniques---can be recovered in this way. Furthermore, by deriving PG from the doubly robust (DR) estimator for OPE \citep{jiang2016doubly}, we obtain a very general and flexible form of PG with variance reduction, which immediately subsumes the state-of-the-art technique by \citet{cheng2019trajectory} as its special case. In fact, the resulting estimator can achieve more variance reduction than \citet{cheng2019trajectory} given additional side information. See Table~\ref{tab:big} for some highlighted results. 

\section{Related Work}

To the best of our knowledge,  \citet{jie2010connection} was the first to explicitly mention the connection between (per-trajectory) IS and PG, which corresponds to the first row of our Table~\ref{tab:big}. 
The connection between DR and PG was lightly touched by \citet{tucker2018mirage}, although the authors' main goal was to challenge the success of state-action-dependent baseline methods in benchmarks, and did not give a more detailed analysis on this connection.

More recently, \citet{cheng2019trajectory} noticed that the previous variance reduction methods in PG overlooked the correlation across the times steps and ignored the randomness in the future steps \citep[e.g.,][]{gu2017q, liu2018actiondependent, grathwohl2018backpropagation, wu2018variance}. They used the law of the total variance to derive a trajectory-wise control variate estimator, which is subsumed by our general form of PG derived from DR in  Section~\ref{sec:dr-pg} as a special case.

\onecolumn
\begin{landscape}
\begin{table}

\renewcommand\arraystretch{3}
\caption{\label{tab:big} OPE estimators and their corresponding PG estimators. Time index in the subscript often specifies the omitted function arguments, e.g., $\tv^{\pi'}_t := \tv^{\pi'}(s_t)$; see Section~\ref{sec:notation} for details. Also note that $b\equiv V^\pi$ considered in the variance column is for simplicity and is not the optimal baseline \cite{jie2010connection}.}

\newcommand{\tabincell}[2]{
\begingroup
\renewcommand*{\arraystretch}{1.3}
\begin{tabular}{@{}#1@{}}#2\end{tabular}
\endgroup
}

\centering
\renewcommand*{\arraystretch}{1.8}
\begin{tabular}{c|l|c|l|c}
\toprule[1.0pt]
~ & \multicolumn{1}{c|}{OPE} & Finite Diff $\Rightarrow$ & \multicolumn{1}{c|}{PG} & \tabincell{c}{(Co)Variance of PG in Deterministic MDPs,\\ with $b \equiv V^\pi$ and $\tilde Q^{(\cdot)} \equiv Q^{(\cdot)}$}\\
\hline
Traj-IS & $\displaystyle \rho_{[0:T]} \sum_{t=0}^T\gamma^t r_t$ & \tabincell{c}{\blue{Proposition~\ref{prop:traj_ope2pg}} \\ (\citeauthor{jie2010connection})} & $\displaystyle \sum_{t=0}^T\nabla \log \ptb{t}\sum_{t'=0}^T\gamma^{t'}r_{t'}$ &
Omitted (worse than step-IS) \\
\hline
Step-IS & $\displaystyle \sum_{t=0}^T\gamma^t\rho_{[0:t]} r_t$ &  \blue{Proposition~\ref{prop:stepIS_ope2pg}} & $\displaystyle\sum_{t=0}^T \nabla \log\ptb{t} \sum_{t'=t}^T \gamma^{t'} r_{t'}$ & $\displaystyle \me[\sum_{t=0}^{T}\gamma^{2t} \cov_{t}\big[\nabla Q^\pb_{t}+Q^\pb_{t}\sum_{t'=0}^t\nabla\log\ptb{t'}|s_{t}\big]]$ \\
\hline
Baseline & \tabincell{l}{$\displaystyle b_0 + \sum_{t=0}^T \gamma^{t}\rho_{[0:t]}\Big( r_{t} + \gamma b_{t+1} -b_t\Big)$} & \blue{Proposition~\ref{prop:baseline_pg2ope}} & $\displaystyle \sum_{t=0}^T \nabla \log \ptb{t} \Big(\sum_{t'=t}^{T}\gamma^{t'} r_{t'}-\gamma^t b_t\Big)$ & $\displaystyle \me[\sum_{t=0}^{T}\gamma^{2t} \cov_{t}\big[\nabla Q^\pb_{t}+A^\pb_{t}\sum_{t'=0}^t\nabla\log\ptb{t'}|s_{t}\big]]$ \\
\hline
\multirow{8}*{\tabincell{c}{Doubly\\ Robust}} & \tabincell{l}{Recursive Version \\ \cite{jiang2016doubly}} &~& \tabincell{l}{$\tq$ does not change with $\btheta$ \\ \cite{cheng2019trajectory}} \\
~ & \tabincell{l}{$\displaystyle \hd^{\pi'}_{t} := \tv^{\pi'}_t$ \\ \hspace{1em} $+\rho_t\Big(r_t + \gamma \hd^{\pi'}_{t+1}-\tq^{\pi'}_t\Big)$} &~& $\displaystyle \sum_{t=0}^T\Big\{\nabla\log\ptb{t}\Big[\sum_{t'=t}^T\gamma^{t'}r_{t'}$ & $\displaystyle \me[\sum_{t=0}^{T}\gamma^{2t} \cov_{t}\big[\nabla Q^\pb_{t}|s_{t}]]$\\ 
~ & Expanded Version &~& $\displaystyle +\sum_{t'=t+1}^T\gamma^{t'}\Big(\tv^\pb_{t'}-\tq^\pb_{t'}\Big)\Big]$ & \\

~ &  $\displaystyle \tv^{\pi'}_0 + \sum_{t=0}^T\gamma^t \rho_{[0:t]}\Big(r_t +  \gamma \tv^{\pi'}_{t+1}-\tq^{\pi'}_t\Big)$ &  \blue{Theorem~\ref{thm:dr_ope2pg}} & $\displaystyle+\gamma^t\Big(\nabla\tv^\pb_t-\tq^\pb_t\nabla\log\ptb{t}\Big)\Big\}$ &  \\ 
\cline{4-5}
~ &  & ~& $\tq$ is a function of $\btheta$ \blue{(new)} \\
~ &  & ~& $\displaystyle \sum_{t=0}^T\Big\{\nabla\log\ptb{t}\Big[\sum_{t'=t}^T\gamma^{t'}r_{t'}$ & $\mathbf{0}$ (zero matrix) \\
~ & ~ & ~& $\displaystyle +\sum_{t'=t+1}^T\gamma^{t'}\Big(\tv^\pb_{t'}-\tq^\pb_{t'}\Big)\Big]$ & \\
~& ~ &~& $\displaystyle +\gamma^t\Big({\color{blue}\nabla\tv^\pb_t-\nabla\tq^\pb_t}$\\
~&~&~&$\displaystyle-\tq^\pb_t\nabla\log\ptb{t}\Big)\Big\}$ &\\

\hline
\tabincell{c}{Actor \\ Critic} & $\displaystyle  \sum_{t=0}^T\gamma^t\rho_{[0:t]}\Big(f_t-\gamma f_{t+1}\Big)$ &  \blue{Proposition~\ref{prop:ac_pg2ope}} & $\displaystyle\sum_{t=0}^T \gamma^t \nabla\log \ptb{t} \cdot  f_t$ &
Omitted (biased estimator) \\
\hline

\toprule[1.0pt]
\end{tabular}
\end{table}
\end{landscape}

\twocolumn

\section{Preliminaries}
\subsection{Markov Decision Processes (MDPs)} \label{sec:mdp}
We consider episodic RL problems with a fixed horizon, formulated as an  MDP $M = (\Scal, \Acal, P, R, T, \gamma, s_0)$, where $\Scal$ is the state space and $\Acal$ is the action space. For the ease of exposition we assume both $\Scal$ and $\Acal$ are finite and discrete.\footnote{Note that both PG and IS occur no explicit dependence on $|\Scal|$ or $|\Acal|$, and estimators derived for the discrete case can be extended to continuous state and action spaces.} $P: \Scal\times\Acal \to \Delta(\Scal)$ is the transition function, $R: \Scal\times\Acal \to \Delta(\mathbb{R})$ is the reward function, and $T$ is the horizon (or episode length). It is optional but we also include a discount factor $\gamma \in [0, 1)$ for more flexibility, which will later allow us to express the estimators in the IS and the PG literature in consistent notations. $s_0$ is the deterministic start state, which is without loss of generality. We will also assume that state contains the time step information (so that value functions are stationary); in other words, each state can only appear at a particular time step. Overall, these assumptions are only made for notational simplicity, and do not limit the generality of our derivations. 

A (stochastic) policy $\pi: \Scal\to\Delta(\Acal)$ induces a random trajectory $
s_0, a_0, r_0, s_1, a_2, r_2, s_3, \ldots, s_T, a_T, r_T
$, 
where $a_t \sim \pi(s_t)$, $r_t \sim R(s_t, a_t)$, and $s_{t+1} \sim P(s_t, a_t)$ for all $0 \le t \le T$. The ultimate measure of the performance of $\pi$ is the expected return, defined as
$$ \textstyle
J(\pi) := \EE\left[\sum_{t=0}^T \gamma^{t} r_t \,|\,a_{0:T} \sim \pi\right],
$$
where $a_{0:T}$ is the shorthand for $a_t \sim \pi(s_t)$ for $0\le t\le T$. 
It will be useful to define the state-value and Q-value functions:
for $s$ that may appear in time step $t$ (recall that we assume $t$ is encoded in $s$), 
\begin{align*}\textstyle
& V^\pi(s) := \EE\left[\sum_{t'=t}^\infty \gamma^{t'-t} r_{t'} \,|\, s_t = s, a_{t:T} \sim \pi\right], \\
& Q^\pi(s, a) := \EE\left[\sum_{t'=t}^\infty \gamma^{t'-t} r_{t'} \,|\, s_t = s, a_t = a, a_{t+1:T} \sim \pi \right].
\end{align*}
For simplicity we treat $s_{T+1}$ as a special terminal (absorbing) state, such that any (approximate or estimated) value function always evaluates to $0$ on $s_{T+1}$. 

\subsection{Off-Policy Evaluation and Importance Sampling}
Off-policy evaluation (OPE) is the problem of estimating the expected return of a policy $\pi'$ from data collected using a different policy $\pi$. Importance sampling (IS) is a standard technique for OPE. Given a trajectory $
s_0, a_0, r_0, s_1, a_2, r_2, s_3, \ldots, s_T, a_T, r_T
$ where all actions are taken according to $\pi$, (step-wise) IS forms the following unbiased estimate of $J(\pi')$ \citep{precup2000eligibility}:
\begin{align}
\jpi{\pi'} = \sum_{t=0}^T \gamma^t \prod_{t'=0}^t \frac{\pi'(a_{t'}|s_{t'})}{\pi(a_{t'}|s_{t'})} r_t. 
\end{align}
The estimator for a dataset of multiple trajectories will be simply the average of the above estimator applied to each trajectory. Since such a pattern is found in all estimators we consider (including the PG estimators), we will always consider only a single trajectory in the analyses.

The term $\frac{\pi'(a_{t}|s_{t})}{\pi(a_{t}|s_{t})}$ is often called the importance weight/ratio. We will use $\rho_t$ as its shorthand, and $\rho_{[t_1:t_2]}$ is the shorthand for its cumulative product, $\prod_{t'=t_1}^{t_2} \rho_{t'}$, with $\rho_{[t_1:t_2]} := 1$ when $t_1 > t_2$. With the above shorthand, the step-wise IS estimator can be succinctly expressed as 
\begin{align} \label{eq:step_is}
\jpi{\pi'} = \sum_{t=0}^T \gamma^t \rho_{[0:t]} r_t.
\end{align} 
We will be referring to multiple OPE estimators throughout the paper. Instead of giving each estimator a separate variable name, we will just use a generic notation $\jpi{\cdot}$, and the specific estimator it refers to should be clear from the surrounding text and theorem statements.

\paragraph{Doubly Robust (DR) Estimator \citep{jiang2016doubly, thomas2016data}}~\\
The DR estimator uses an approximate value function $\tq^{\pi'}$ to reduce the variance of IS via control variates. In its expanded form, the estimator is
\begin{align} \label{eq:dr}
\hat{J}(\pi') = \tv^{\pi'}(s_0) + \sum_{t=0}^T\gamma^t \rho_{[0:t]}\Big(r_t + \gamma \tv^{\pi'}(s_{t+1}) \nonumber \\ -\tq^{\pi'}(s_t, a_t)\Big),
\end{align}
where $\tv^{\pi'}(s) := \EE_{a \sim \pi'(s)}[\tq^{\pi'}(s, a)].$ \citet{jiang2016doubly} showed that DR has maximally reduced variance, in the sense that when $\tq^{\pi'}$ is accurate, there exists RL problems (typically tree-MDPs) where the variance of the estimator is equal to the Cramer-Rao lower bound of the estimation problem. As we will see later in Section~\ref{sec:dr-pg}, the PG estimator induced by DR also achieves the state-of-the-art variance reduction, and the variance when both $\tilde Q$ and $\nabla_\btheta \tilde Q$ are accurate also coincides with the C-R bound for PG. 

\subsection{Policy Gradient}
Consider the problem of finding a good policy over a parameterized class, $\{\pi_{\btheta}: \btheta \in \Theta\}$. Each policy $\pi_{\btheta}: \Scal \to \Delta(\Acal)$ is stochastic and we assume that $\pi_\btheta(a|s)$ is differentiable w.r.t.~$\btheta$. Policy gradient algorithms \citep{williams1992simple} perform (stochastic) gradient descent on the objective $J(\pi_\btheta)$, and the following expression is an unbiased gradient based on a single trajectory \citep{sutton2000policy}:  
\begin{align}\label{eq:pg}
\sum_{t=0}^T \left(\gbt \log\pi_{\btheta}(a_t|s_t) \sum_{t'=t}^T \gamma^{t'} r_{t'}\right). 
\end{align}
Note that although most PG results are derived for the infinite-horizon discounted case, they can be immediately applied to our setup, since our formulation in Section~\ref{sec:mdp} can be turned into an infinite-horizon discounted MDP by treating $s_{T+1}$ as an absorbing state. 

\subsection{Further Notations} \label{sec:notation}
Since we always consider the estimators based on a single on-policy trajectory, all expectations $\EE[\cdot]$ are w.r.t.~that on-policy distribution induced by $\pi$ (for OPE) or $\pi_\btheta$ (for PG). Following the notations in \citet{jiang2016doubly}, we use $\EE_t[\cdot]$ as a shorthand for the conditional expectation $\EE[\cdot | s_0, a_0, \ldots, s_{t-1}, a_{t-1}]$, and similarly $\VV_t[\cdot]$ and $\cov_t[\cdot]$ for the conditional (co)variance. We will often see the usage $\EE_t[\cdot | s_t]$, which simply means $\EE[\cdot | s_0, a_0, \ldots, s_{t-1}, a_{t-1}, s_t]$.

\paragraph{Omitted function arguments} Since all value-functions of the form $V^\pi$ (or $Q^\pi$) are always applied on $s_t$ (or $s_t, a_t$) in the trajectory, we will sometimes omit such arguments and use  $V^\pi_t$ as a shorthand for $V^\pi(s_t)$ (and $Q^\pi_t$ for $Q^\pi(s_t, a_t)$). Similarly, we write $\pi_t$ as a shorthand for $\pi(a_t|s_t)$, and $\pi_\btheta^t$ as a shorthand for $\pi_\btheta(a_t|s_t)$.

\section{Warm-up: Deriving PG from IS} \label{sec:warmup}
In this section we show how the most common forms of PG can be derived from the corresponding IS estimators.
Although these results will be later subsumed by our main theorem in Section~\ref{sec:dr-pg}, it is still instructive to derive the connection between IS and PG from the simpler cases.

\paragraph{Vanilla PG}
\begin{restatable}{proposition}{StepISOPEPG}\label{prop:stepIS_ope2pg}
The standard PG (Eq.\eqref{eq:pg}) can be derived from taking finite difference over step-wise IS (Eq.\eqref{eq:step_is}).
\end{restatable}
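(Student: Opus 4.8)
The plan is to instantiate the step-wise IS estimator (Eq.~\eqref{eq:step_is}) with the \emph{evaluated} policy $\pi'$ taken to be a perturbation $\pi_{\btheta+\dt}$ of the current policy, while the \emph{data-generating} policy $\pi$ remains the on-policy $\pi_\btheta$. The estimate $\jpi{\pi_{\btheta+\dt}}$ then becomes a function of $\dt$ through the importance weights $\rho_{t'} = \pi_{\btheta+\dt}(a_{t'}|s_{t'})/\pi_\btheta(a_{t'}|s_{t'})$, and forming the finite difference as in Eq.~\eqref{eq:grad} (applied componentwise to recover the full gradient) reduces to differentiating this function at $\dt = \bm{0}$.

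First I would record the two evaluations entering the finite difference. The numerator uses $\jpi{\pi_{\btheta+\dt}} = \sum_{t=0}^T \gamma^t \rho_{[0:t]} r_t$, and the baseline is $\jpi{\pi_\btheta} = \sum_{t=0}^T \gamma^t r_t$, obtained by setting $\dt = \bm{0}$ so that every $\rho_{t'} = 1$. Crucially, the trajectory and the rewards $r_t$ are drawn from $\pi_\btheta$ and do not depend on $\dt$; all of the $\dt$-dependence is carried by the cumulative products $\rho_{[0:t]}$.

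Next I would differentiate each $\rho_{[0:t]}$ using the log-derivative (likelihood-ratio) identity, $\gbt \rho_{[0:t]} = \rho_{[0:t]} \sum_{t'=0}^t \gbt \log \pi_\btheta(a_{t'}|s_{t'})$, where I use that the denominators $\pi_\btheta(a_{t'}|s_{t'})$ are constant in the perturbation. Taking the limit $\dt \to \bm{0}$ sets each $\rho_{[0:t]} = 1$, so the finite difference collapses to $\sum_{t=0}^T \gamma^t r_t \sum_{t'=0}^t \gbt \log \ptb{t'}$.

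The only remaining (and entirely routine) step is to interchange the order of the double sum over pairs $(t,t')$ with $0 \le t' \le t \le T$, regrouping by the score index $t'$ so that each $\gbt \log \ptb{t'}$ multiplies the reward-to-go $\sum_{t=t'}^T \gamma^t r_t$; relabeling indices then gives exactly Eq.~\eqref{eq:pg}. I do not expect a genuine obstacle. The one thing to get conceptually right is that the differentiation is with respect to the \emph{evaluated} policy's parameter while the sampling distribution is held fixed, together with the observation that all importance weights equal $1$ at the evaluation point $\dt = \bm{0}$; once this is set up, recovering the reward-to-go form is a matter of swapping two summations.
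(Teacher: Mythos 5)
Your proposal is correct and follows essentially the same route as the paper: perturb the evaluated policy while holding the sampling policy fixed, differentiate the cumulative importance weights (the paper does this via a first-order expansion of the product $\prod_{t'}(1+\Delta\pi^{t'}/\pi_\btheta^{t'})$, which is the same computation as your log-derivative identity evaluated at $\rho_{[0:t]}=1$), and then swap the double sum to obtain the reward-to-go form. The paper just carries this out coordinate-wise with explicit $o(\epsilon_i)$ bookkeeping before stacking the partial derivatives into the gradient vector.
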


\begin{proof}
Denote $\bm{e_i}$ as the i-th standard basis vectors in $\btheta=(\theta_1,\theta_2,...,\theta_d)\in\mathbb{R}^d$ space, and denote $\epsilon_i$ as a small scalar for $i=1,2,...,d$. Then, we apply step-wise IS on the policy $\pi' := \pai{i}$ for arbitrary $i=1,2,...,d$:
{\allowdisplaybreaks \begin{align*}
\jpi{\turb{i}} =&\sum_{t=0}^T \rho_{[0:t]} \gamma^t r_t = \sum_{t=0}^T\gamma^t r_t \prod_{t'=0}^t  \frac{\ptai{t'}{i}}{\ptb{t'}} \\
=&\sum_{t=0}^T\gamma^t r_t \prod_{t'=0}^t(1+\frac{\ptai{t'}{i}-\ptb{t'}}{\ptb{t'}}) \\
=&\sum_{t=0}^T\gamma^t r_t (1+\sum_{t'=0}^t\frac{\dpi{t'}{i}}{\ptb{t'}}) +o(\epsilon_i)\\
=&\sum_{t=0}^T\gamma^t r_t+\sum_{t=0}^T\frac{\dpi{t}{i}}{\ptb{t}}\sum_{t'=t}^T\gamma^{t'}r_{t'}+o(\epsilon_i).
\end{align*}}
where $\dpi{t}{i}$ is a shorthand of $\langle \gbt \pi_{\btheta}^{t}, \dti{i}\bm{e}_i \rangle$. Then,
\begin{align*}
\pt{\jpi{\btheta}}{i}=&\tpzi{i}\frac{\jpi{\turb{i}}-\jpi{\btheta}}{\dti{i}}\\
=&\tpzi{i}\sum_{t=0}^T\frac{\dpi{t}{i}/\ptb{t}}{\dti{i}} \sum_{t'=t}^T \gamma^{t'} r_{t'}\\
=&\sum_{t=0}^T\pt{\log\ptb{t}}{i} \sum_{t'=t}^T \gamma^{t'} r_{t'}.
\end{align*}
As a result, the estimator derived from Eq.(\ref{eq:step_is}) should be:
\begin{align*} &\Big(\pt{\jpi{\btheta}}{1},\pt{\jpi{\btheta}}{2},...,\pt{\jpi{\btheta}}{d}\Big)\trans\\
=&\sum_{t=0}^T \left(\gbt \log\ptb{t} \sum_{t'=t}^T \gamma^{t'} r_{t'}\right). \tag*{\qedhere} 
\end{align*}
\end{proof}

\paragraph{PG with a Baseline} Using a state baseline is a simple and popular form of variance reduction for PG. Below we show that there exists an unbiased OPE estimator (Eq.\eqref{pgwithbaseline:ope}) that yields such a PG estimator via the procedure in Eq.\eqref{eq:grad}.
\begin{restatable}{proposition}{BaselinePGOPE}\label{prop:baseline_pg2ope}
PG with a baseline \citep{greensmith2004variance}
\begin{align} \label{pgwithbaseline:grad}
\sum_{t=0}^T \left(\gbt \log \ptb{t} \Big(\sum_{t'=t}^{T}\gamma^{t'} r_{t'}-\gamma^t b(s_t)\Big)\right)
\end{align}
can be derived by taking finite difference over the following OPE estimator, 
\begin{align} \label{pgwithbaseline:ope}
\jpi{\pi'} = \sum_{t=0}^T \gamma^{t}\rho_{[0:t]}\Big( r_{t}-b(s_{t})+ \gamma  b(s_{t+1})\Big) + b(s_0).
\end{align}
(Recall that  $b(s_{T+1}) = 0$.) Furthermore, Eq.\eqref{pgwithbaseline:ope} is an unbiased estimator of $J(\pi')$.
\end{restatable}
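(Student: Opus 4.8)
The statement has two parts: (i) the finite-difference limit of the OPE estimator \eqref{pgwithbaseline:ope} recovers the baseline PG estimator \eqref{pgwithbaseline:grad}, and (ii) \eqref{pgwithbaseline:ope} is unbiased for $J(\pi')$. I would treat them in that order, since part (i) is a direct analogue of the computation already carried out in Proposition~\ref{prop:stepIS_ope2pg}, and part (ii) is a standalone telescoping argument.

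Let me sketch the plan.

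For part (i), the plan is to mimic Proposition~\ref{prop:stepIS_ope2pg} coordinate by coordinate. First I would perturb along $\turb{i}$ and expand each cumulative ratio $\rho_{[0:t]} = \prod_{t'=0}^t (1 + \dpi{t'}{i}/\ptb{t'})$ to first order, giving $\rho_{[0:t]} = 1 + \sum_{t'=0}^t \dpi{t'}{i}/\ptb{t'} + o(\epsilon_i)$. Substituting into \eqref{pgwithbaseline:ope} and collecting the zeroth-order term, I expect the baseline/telescoping part to reduce to a constant in $\btheta$ (in fact, to $J$-like quantity independent of the perturbation), so it drops out of the finite difference. The first-order term then picks up $\sum_{t} (\dpi{t}{i}/\ptb{t}) \sum_{t'=t}^T \gamma^{t'}(r_{t'} - b(s_{t'}) + \gamma b(s_{t'+1}))$, and after dividing by $\epsilon_i$ and taking $\epsilon_i \to 0$ this gives $\sum_t \partial_{\theta_i}\log\ptb{t} \cdot \big(\sum_{t'=t}^T \gamma^{t'}(r_{t'} - b_{t'} + \gamma b_{t'+1})\big)$. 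The key simplification is that the inner sum telescopes: $\sum_{t'=t}^T \gamma^{t'}(r_{t'} - b_{t'} + \gamma b_{t'+1}) = \sum_{t'=t}^T \gamma^{t'} r_{t'} - \gamma^t b(s_t)$, using $b(s_{T+1})=0$. Stacking the $d$ coordinates reproduces \eqref{pgwithbaseline:grad} exactly.

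For part (ii), I would prove unbiasedness by comparing \eqref{pgwithbaseline:ope} to the plain step-wise IS estimator \eqref{eq:step_is}, whose unbiasedness is already given. It suffices to show that the added baseline terms $\sum_t \gamma^t \rho_{[0:t]}(\gamma b(s_{t+1}) - b(s_t)) + b(s_0)$ have zero expectation under the on-policy distribution of $\pi$. The cleanest route is the standard control-variate argument: for each $t$, condition on the history through $s_t, a_t$ and use $\EE[\rho_{t+1} \cdot g(s_{t+1}) \mid s_t, a_t] = \EE_{a_{t+1}\sim\pi}[\,\cdot\,]$-style importance-weight cancellation so that the $\gamma b(s_{t+1})$ contribution at level $t$ cancels against the $-b(s_t)$ contribution at level $t+1$, leaving only the boundary terms $b(s_0)$ and $\gamma^{T+1} b(s_{T+1}) = 0$, which themselves cancel. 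I expect the main obstacle to be bookkeeping the importance-weight telescoping correctly under the conditional expectations $\EE_t[\cdot\mid s_t]$: one must carefully check that $\EE[\gamma^t \rho_{[0:t]} \gamma b(s_{t+1})]$ and $\EE[\gamma^{t+1}\rho_{[0:t+1]} b(s_{t+1})]$ are equal, which follows because $\EE_{a_{t+1}\sim\pi}[\rho_{t+1}\mid s_{t+1}] = 1$ makes the extra ratio integrate out. Getting the index alignment and the terminal term $b(s_0)$ to close the telescope is the delicate part; the rest is routine.
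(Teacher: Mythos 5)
Your proposal is correct and follows essentially the same route as the paper: the finite-difference step mirrors Proposition~\ref{prop:stepIS_ope2pg}, the identity $\sum_{t'=t}^{T}\gamma^{t'}\big(r_{t'}-b(s_{t'})+\gamma b(s_{t'+1})\big)=\sum_{t'=t}^{T}\gamma^{t'}r_{t'}-\gamma^t b(s_t)$ is exactly the telescoping step the paper uses (merely applied in the opposite direction), and your unbiasedness argument pairing $\gamma b(s_{t+1})$ at level $t$ against $-b(s_{t+1})$ at level $t+1$ via $\EE_{a\sim\pi}[\rho\,|\,s]=1$ is the same control-variate cancellation the paper writes as $\EE_t[1-\rho_t\,|\,s_t]=0$.
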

We defer the proof to Appendix~\ref{appendix:table_missing_proof}, which also includes the connections between a few other pairs of OPE and PG estimators presented in Table~\ref{tab:big}.

\paragraph{Remark} Whenever we encounter unfamiliar OPE or PG estimators in the derivation, we always verify their unbiasedness from scratch. (We omit such verification for those well-known estimators.) However, a PG estimator that is derived from a known and unbiased OPE estimator should be automatically unbiased, thanks to the linearity (and hence exchangeability) of differentiation and expectation; see further details in Appendix~\ref{app:exchange}.

\section{A General Form of PG Derived From DR} \label{sec:dr-pg}
In the previous section we have derived some popular forms of PG from their IS counterparts. However, as \citet{cheng2019trajectory} noticed, the variance reduction in popular PG algorithms are relatively na\"ive. From our perspective, this is evidenced by the fact that the IS counterparts of these popular PG estimators---which often uses a ``baseline'' that carries the semantics of state-value functions---are na\"ive OPE estimators and do not fully exploit variance reduction opportunities in IS. 

In this section, 
we derive a very general form of PG from the unbiased estimator in the IS family that arguably performs the maximal amount of variance reduction, known as the doubly robust estimator \citep{jiang2016doubly, thomas2016data}, which requires an approximate Q-value function of $\pi_\btheta$, $\tilde{Q}^{\pi_\btheta}$. We show that a special case of the resulting PG estimator is exactly equivalent to that derived by \citet{cheng2019trajectory} recently from a control variate perspective. This special case treats $\tilde{Q}^{\pi_\btheta}$ as not varying with $\btheta$, whereas our more general estimator can further leverage the gradient information $\nabla \tilde{Q}^{\pi_\btheta}$ to reduce even more variance. Furthermore, the two popular forms of PG examined in Section~\ref{sec:warmup} are also subsumed as the special cases of our estimator. 

\subsection{Derivation of DR-PG}
\begin{restatable}{theorem}{DROPEPG}\label{thm:dr_ope2pg}
Let $\tv^{\pi_\btheta}_t(s):= \tq^{\pi_\btheta}_t(s, \pi_\btheta(s)) = \EE_{a \sim \pi_\btheta(s)}[\tq^{\pi_\btheta}_t(s, a)] $. The following estimator is an unbiased policy gradient that can be derived by taking finite difference over the doubly robust estimator for OPE:
\begin{align}\label{drpg:maintext:funcQ}
&\sum_{t=0}^T\Big\{\gbt\log\ptb{t}\Big[\sum_{t_1=t}^T\gamma^{t_1}r_{t_1}+\sum_{t_2=t+1}^T\gamma^{t_2}\Big(\tv^\pb_{t_2}-\tq^\pb_{t_2}\Big)\Big]\nonumber\\
&+\gamma^t\Big(\gbt\tv^\pb_t-\gbt\tq^\pb_t-\tq^\pb_t\gbt\log\ptb{t}\Big)\Big\}.
\end{align}
\end{restatable}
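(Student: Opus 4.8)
The plan is to follow the coordinate-wise finite-difference strategy of the warm-up (Proposition~\ref{prop:stepIS_ope2pg}), now applied to the expanded DR estimator of Eq.~\eqref{eq:dr} with target policy $\pi' := \pai{i}$ and on-policy behavior $\pb$. For each coordinate $i$ I would compute $\pt{\jpi{\btheta}}{i} = \tpzi{i}\big(\jpi{\turb{i}}-\jpi{\btheta}\big)/\dti{i}$ and stack the $d$ partials into the gradient. The feature that distinguishes this from the IS warm-up is that the perturbation $\btheta\to\turb{i}$ enters the DR estimator through two channels: the importance weights $\rho_{[0:t]}=\prod_{t'=0}^t \ptai{t'}{i}/\ptb{t'}$, exactly as before, and the approximate value functions $\tv^{\pi'},\tq^{\pi'}$, which carry the target policy in their superscript. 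I would therefore set up a single first-order product-rule expansion that tracks both channels simultaneously.

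For the weights I reuse the warm-up expansion $\rho_{[0:t]} = 1 + \sum_{t'=0}^t \dpi{t'}{i}/\ptb{t'} + o(\dti{i})$, so $(\rho_{[0:t]}-1)/\dti{i}\to\sum_{t'=0}^t\partial_{\theta_i}\log\ptb{t'}$. For the value functions I would write $\tv^{\pi'}_t = \tv^\pb_t + \dti{i}\,\partial_{\theta_i}\tv^\pb_t + o(\dti{i})$ and likewise for $\tq^{\pi'}_t$, noting that $\partial_{\theta_i}\tv^\pb_t$ is a \emph{composite} derivative through both the averaging distribution $\pb(s_t)$ and $\tq^\pb$, whereas $\tq^{\pi'}(s_t,a_t)$ depends on $\btheta$ only through its superscript since $(s_t,a_t)$ is fixed along the on-policy trajectory. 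Evaluating the product rule at $\pi'=\pb$, where every $\rho_{[0:t]}=1$, the finite-difference limit splits into four groups: (A) $\gbt\tv^\pb_0$ from the leading $\tv^{\pi'}(s_0)$ term; (B) $\sum_{t=0}^T\gamma^t\big(\sum_{t'=0}^t\gbt\log\ptb{t'}\big)\big(r_t+\gamma\tv^\pb_{t+1}-\tq^\pb_t\big)$ from differentiating the weights; (C) $\sum_{t=0}^T\gamma^{t+1}\gbt\tv^\pb_{t+1}$ and (D) $-\sum_{t=0}^T\gamma^t\gbt\tq^\pb_t$ from differentiating the value functions.

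The rest is algebraic reorganization. In group (B) I would exchange the order of summation to turn $\sum_t\sum_{t'\le t}$ into $\sum_{t'}\gbt\log\ptb{t'}\sum_{t\ge t'}$, expand the residual, and shift the index $t_2=t_1+1$ in the $\tv^\pb$ piece using the terminal convention $\tv^\pb_{T+1}=0$; this produces exactly the bracket $\big[\sum_{t_1=t}^T\gamma^{t_1}r_{t_1}+\sum_{t_2=t+1}^T\gamma^{t_2}(\tv^\pb_{t_2}-\tq^\pb_{t_2})\big]$ of Eq.~\eqref{drpg:maintext:funcQ}, plus a leftover $-\sum_t\gamma^t\tq^\pb_t\gbt\log\ptb{t}$ that furnishes the matching term on the second line. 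For the value-function gradients I would reindex (C) via $t+1\to t$ and absorb (A), using $\gbt\tv^\pb_{T+1}=0$, to collapse them into $\sum_{t=0}^T\gamma^t\gbt\tv^\pb_t$, and combine with (D) to get $\sum_{t=0}^T\gamma^t(\gbt\tv^\pb_t-\gbt\tq^\pb_t)$. Collecting the pieces and stacking over $i$ gives Eq.~\eqref{drpg:maintext:funcQ}. Unbiasedness then follows for free: the expanded DR estimator is unbiased for $J(\pi')$, and differentiation commutes with expectation (Appendix~\ref{app:exchange}), so the finite-difference limit is an unbiased estimate of $\gbt J(\pb)$.

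I expect the main obstacle to be the careful bookkeeping of the two concurrent dependence channels: in particular keeping $\gbt\tv^\pb_t$ as an indivisible composite gradient, not double-counting the distributional part that is already captured by the score-function terms in group (B), and carrying out the summation swap and reindexing with the $s_{T+1}$ terminal condition without index errors. By contrast, recovering the special case of \citet{cheng2019trajectory}, in which $\tq$ is held fixed in $\btheta$, should then be immediate: setting $\gbt\tq^\pb_t\equiv\mathbf{0}$ in Eq.~\eqref{drpg:maintext:funcQ} drops group (D) and leaves the $\gbt\tv^\pb_t$ term, which still varies through the averaging distribution.
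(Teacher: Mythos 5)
Your proposal is correct and reaches Eq.~\eqref{drpg:maintext:funcQ} by a legitimate route, but it differs from the paper's in two respects. The paper starts from the \emph{recursive} form $\hd^{\pi'}_t = \tv^{\pi'}_t + \rho_t\big(r_t + \gamma\hd^{\pi'}_{t+1} - \tq^{\pi'}_t\big)$, perturbs one coordinate, and obtains a recursion for $\partial \hd^\pb_t/\partial\theta_i$ which it then unrolls (the unrolling itself is not shown); you instead differentiate the \emph{expanded} form of Eq.~\eqref{eq:dr} directly and carry out the summation swap, the index shift $t_2=t_1+1$, and the telescoping against $\tv^\pb_{T+1}=0$ explicitly. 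The two are algebraically equivalent, and your four-group bookkeeping (A)--(D) correctly isolates the two dependence channels; in particular you rightly keep $\gbt\tv^\pb_t$ as a composite gradient through both the averaging distribution and $\tq^\pb$, which is exactly the point of the Remark following the theorem. The second difference is unbiasedness: the paper verifies it from scratch by showing that the terms it calls $p_2^t$ and $p_3^t$ vanish in conditional expectation, whereas you inherit it from the unbiasedness of DR for every target policy via the exchange-of-gradient-and-expectation proposition in Appendix~\ref{app:exchange}; the paper explicitly sanctions this shortcut in the Remark at the end of Section~\ref{sec:warmup}, and it applies here since DR is unbiased for an arbitrary choice of $\tq^{\pi'}$. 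Your version buys a self-contained, fully explicit derivation of the summation identity that the paper leaves implicit; the paper's recursive version buys a per-step structure that is reused later in the variance analysis.
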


\begin{remark}
Since $\gbt\tv^\pb_t(s) = \gbt~  \EE_{a \sim \pi_\btheta(s)}[\tq^{\pb}_t(s, a)]$, the dependencies on $\pb$ in the subscript and the superscript will both contribute to the gradient calculation.\footnote{In contrast, the $\gbt\tv^\pb_t$ term in the estimator of  \citet{cheng2019trajectory} only differentiate w.r.t.~the subscript ($\EE_{a \sim \pi_\btheta(s)}$), as they treat $\tq^\pb_t$ as not varying with $\pb$.} 
\end{remark}

\begin{remark}
While $\tq^{\pb}$ and $\gbt \tq^{\pb}$ look related in notation, they have independent degrees of freedoms and can be estimated using separate procedures; see Appendix~\ref{app:1st_order} for further details.
\end{remark}

\begin{proof}
	We first show how Eq.(\ref{drpg:maintext:funcQ}) can be derived from DR. We start from the recursive form of DR \citep{jiang2016doubly}:
	\begin{align}
	\hd^{\pi'}_t = \tv_{t}^{\pi'} +\frac{\pi'_{t}}{\pi_{t}}\Big(r_t + \gamma \hd^{\pi'}_{t+1}-\tq_t^{\pi'}\Big).
	\end{align}
	where $\tq$ and $\tv$ are the approximate value functions, and $\tv^\pi_t=\sum_a\pi(a|s_t)\tq_t^\pi(s_t,a)$. Note that $\hd^{\pi'}_0$ is equivalent to the expanded form given in Eq.\eqref{eq:dr} \citep{thomas2016data}. Denote $e_i$ as the i-th standard basis vectors in $\btheta\in\mathbb{R}^d$ space, and denote $\epsilon_i$ as a scalar. Let $\dpi{t}{i}$ be the shorthand of $\langle \gbt \pi_{\btheta}^{t}, \dti{i}\bm{e}_i \rangle$. Then, we apply DR on the policy $\pi' := \pai{i}$ for $i=1,2,...,d$:
	
	\begin{align*}
	&\hd^\pai{i}_t - \hd^\pb_t \\
	=& \tv_{t}^\pai{i} - \tv_{t}^\pb+ \frac{\dpi{t}{i}}{\ptb{t}}\Big(r_t + \gamma \hd^\pb_{t+1}-\tq_t^\pb\Big)\\
	&+ \Big(\gamma \hd^\pai{i}_{t+1}-\gamma \hd^\pb_{t+1}-\tq_t^\pai{i}+\tq_t^\pb\Big)+o(\epsilon_i).
	\end{align*}
	Therefore,
	\begin{align*}
	\pt{\hd_t^\pb}{i}=&\tpzi{i}\frac{\hd_t^{\pai{i}}-\hd_t^\pb}{\dti{i}}\\
	=&\pt{\tv_t^\pb}{i}+ \pt{\log\ptb{t}}{i}(r_t+\gamma \hd_{t+1}^\pb-\tq_t^\pb)\\
	&+\gamma \pt{\hd_{t+1}^\pb}{i}-\pt{\tq_t^\pb}{i}.
	\end{align*}
	As a result,
	\begin{align*}\label{dr:gradient}
	\gbt\hd_t^\pb =& \Big(\pt{\hd_t^\pb}{1}, \pt{\hd_t^\pb}{2},..., \pt{\hd_t^\pb}{d}\Big)\trans\\
	=&\gbt\tv_{t}^\pb + \gbt\log\ptb{t}\Big(r_t + \gamma \hd^\pb_{t+1}-\tq_t^\pb\Big)\\
	&+\gamma \gbt \hd^\pb_{t+1} - \gbt\tq_t^\pb.  \numberthis
	\end{align*}
	We can continue to expand (\ref{dr:gradient}) and finally get the following estimator:
	\begin{align*}
	&\sum_{t=0}^T\Big\{\gbt\log\ptb{t}\Big[\sum_{t_1=t}^T\gamma^{t_1}r_{t_1}+\sum_{t_2=t+1}^T\gamma^{t_2}\Big(\tv^\pb_{t_2}-\tq^\pb_{t_2}\Big)\Big]\\
	&+\gamma^t\Big(\gbt\tv^\pb_t-\gbt\tq^\pb_t-\tq^\pb_t\gbt\log\ptb{t}\Big)\Big\}. 
	\end{align*}
	Next, we show that the estimator is unbiased.
	\begingroup
	\allowdisplaybreaks
	\begin{align*}
	&\me[\sum_{t=0}^T\Big\{\gbt\log\ptb{t}\Big[\sum_{t_1=t}^T\gamma^{t_1}r_{t_1}+\sum_{t_2=t+1}^T\gamma^{t_2}\Big(\tv^\pb_{t_2}-\tq^\pb_{t_2}\Big)\Big]\\
	&+\gamma^t\Big(\gbt\tv^\pb_t-\gbt\tq^\pb_t-\tq^\pb_t\gbt\log\ptb{t}\Big)\Big\}]\\
	=&\underbrace{\me\bigg[\sum_{t=0}^T\bigg(\gbt\log\ptb{t}\Big(\sum_{t_1=t}^T\gamma^{t_1}r_{t_1}\Big)\bigg)\bigg]}_{p_1}\\
	&+\me\bigg[\sum_{t=0}^T\bigg(\underbrace{\gbt\log\ptb{t}\Big[\sum_{t_2=t+1}^T\gamma^{t_2}\Big(\tv^\pb_{t_2}-\tq^\pb_{t_2}\Big)\Big]}_{p_2^t}\bigg)\bigg]\\
	&+\me\bigg[\sum_{t=0}^T\gamma^t\bigg(\underbrace{\gbt\tv^\pb_t-\frac{\gbt[\tq^\pb_t\ptb{t}]}{\ptb{t}}}_{p_3^t}\bigg)\bigg].
	\end{align*}
	\endgroup
	Since $p_1$ is the usual PG estimator, it suffices to show that $p_2^t$ and $p_3^t$ are equal to $0$ in expectation. For $p_2^t$,
	\begin{align*}
	\me_t[p_2^t]=&\me_t\Big[\gbt\log\ptb{t}\Big(\sum_{t_2=t+1}^T\gamma^{t_2}\me_{t_2}\Big[\tv^\pb_{t_2}-\tq^\pb_{t_2}\Big| s_{t_2}\Big]\Big)\Big]=0,
	\end{align*}
	where $\me_{t_2}\Big[\tv^\pb_{t_2}-\tq^\pb_{t_2}\Big| s_{t_2}\Big] = 0$ because PG is on-policy ($a_{t_2} \sim \pi_\btheta(s_{t_2})$). Similarly,
	\begin{align*}
	\me_t[p_3^t]=&\sum_a\pb(a|s_t)\gbt\tv^\pb_t-\sum_a \gbt[\tq^\pb_t\ptb{t}]\\
	=&\gbt\tv^\pb_t-\gbt[\sum_a\tq^\pb_t\ptb{t}]=0.\tag*{\qedhere}
	\end{align*}
\end{proof} 

It turns out that our estimator subsumes many previous ones as its special cases.  
\paragraph{Special case when $\tilde{Q}^{\pi_\btheta}$ is not a function of $\btheta$} 
When we treat $\tilde{Q}^{\pi_\btheta}$ not as a function of $\btheta$, i.e., $\nabla \tilde{Q}^{\pi_\btheta} \equiv 0$, $\nabla \tv^{\ptb{}}\equiv \sum_a \tq^{\ptb{}}\nabla \ptb{}$, the estimator becomes\footnote{Note that here $\gbt\tv^\pb_t$ is in general non-zero even when $\gbt\tq^\pb_t \equiv \mathbf{0}$, as $\gbt\tv^\pb_t$ additionally depends on $\btheta$ through the expectation over actions drawn from $\pi_{\btheta}$ when we convert $Q$-value to $V$-value.}
\begin{align}
\label{drpg:maintext:constantQ}
&\sum_{t=0}^T\Big\{\gbt\log\ptb{t}\Big[\sum_{t_1=t}^T\gamma^{t_1}r_{t_1}+\sum_{t_2=t+1}^T\gamma^{t_2}\Big(\tv^\pb_{t_2}-\tq^\pb_{t_2}\Big)\Big]\nonumber\\
&+\gamma^t\Big(\gbt\tv^\pb_t-\tq^\pb_t\gbt\log\ptb{t}\Big)\Big\},
\end{align}
which is exactly the same as the one given by \citet{cheng2019trajectory}. We will compare the variance of the two estimators below and discuss when our general form can reduce more  variance.  

\paragraph{Special case when $\tilde{Q}^{\pi_\btheta}(s, a)$ depends on neither $\btheta$ nor $a$} As a more restrictive special case, when $\tilde{Q}^{\pi_\btheta}$ is only a function of its state argument, we essentially recover the baseline method. This is obvious by comparing the correponding OPE estimator of PG with baseline to DR, and noticing that they are equivalent when we let $\tilde{Q}^{\pi_\btheta}(s,a) := b(s)$. 

\paragraph{Special case when $\tilde{Q}^{\pi_\btheta}\equiv 0$} As a further special case, when the approximate $Q$-value function is always $0$, we recover the standard PG estimator, which corresponds to step-wise IS. 

\subsection{Variance Analysis}
In this section, we analyze the variance of the DR-PG estimator given in Eq.(\ref{drpg:maintext:funcQ}). 

\begin{restatable}{theorem}{DRCovProofOne}\label{maintext:dr_cov_proof1}
The covariance matrix of the estimator Eq.(\ref{drpg:maintext:funcQ}) is
{\allowdisplaybreaks
\begin{align*}\label{drpg:maintext:variance_all}
&\me\bigg[\sum_{n=0}^T\gamma^{2n}\bigg(\mv_{n+1}[r_n]\Big(\sum_{t=0}^{n}\gbt\log\ptb{t}\Big)\Big(\sum_{t=0}^{n}\gbt\log\ptb{t}\Big)\trans\\
&\quad\quad\quad\quad+\cov_n\bigg[\gbt Q^\pb_{n}-\gbt\tq^\pb_{n}\\
&\quad\quad\quad\quad\quad\quad+\Big(\sum_{t=0}^{n}\gbt\log\ptb{t}\Big)\Big(Q^\pb_n-\tq^\pb_n\Big)\bigg|s_n\bigg]\\
&\quad\quad\quad\quad+\cov_n\bigg[\gbt V_n^\pb+\Big(\sum_{t=0}^{n-1}\gbt\log\ptb{t}\Big)V^\pb_n\bigg]\bigg)\bigg].\numberthis
\end{align*}}
where $\cov_n[\cdot]$ denotes the covariance matrix of a column vector (defined as $\cov_n[\vect v]:=\me_n[\vect v \vect v\trans]-\me_n[\vect v]\me_n[\vect v]\trans$), and we omit 0 in $\me_0$ and $\cov_0$.
\end{restatable}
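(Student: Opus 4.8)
The plan is to compute $\cov[G]$, where $G$ denotes the estimator of Eq.\eqref{drpg:maintext:funcQ}, by the law of total variance, decomposing the centered estimator $G-\me[G]$ into a sum of mutually uncorrelated martingale differences, one per elementary source of randomness along the trajectory. Since Theorem~\ref{thm:dr_ope2pg} already identifies $\me[G]$, I would reveal the trajectory in temporal order $s_0,a_0,r_0,s_1,a_1,\dots$, form the Doob martingale $M_k=\me[G\mid\mathcal F_k]$ for the induced filtration $\{\mathcal F_k\}$, and use that its increments $D_k=M_k-M_{k-1}$ satisfy $\me[D_k\mid\mathcal F_{k-1}]=\bm 0$. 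Consequently $\me[D_jD_k\trans]=\bm 0$ for $j\neq k$, the cross terms drop, and $\cov[G]=\sum_k\me[\cov[M_k\mid\mathcal F_{k-1}]]$. It then remains to evaluate the increment caused by revealing each of $r_n$, $a_n$, and $s_n$, and to match the resulting three families of conditional covariances to the three summands of Eq.\eqref{drpg:maintext:variance_all}.

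The bookkeeping is organized by the telescoping identity
\begin{align*}
\sum_{t_1=t}^T\gamma^{t_1}r_{t_1}
&=\gamma^tQ^\pb_t+\sum_{n=t}^T\gamma^n\big(r_n+\gamma V^\pb_{n+1}-Q^\pb_n\big)\\
&\quad+\sum_{n=t+1}^T\gamma^n\big(Q^\pb_n-V^\pb_n\big),
\end{align*}
which rewrites each future-reward sum as its true value plus a reward-and-transition innovation $r_n+\gamma V^\pb_{n+1}-Q^\pb_n$ (mean zero given $s_n,a_n$) and an action innovation $Q^\pb_n-V^\pb_n$ (mean zero given $s_n$); here $V^\pb_{T+1}=0$. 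Substituting this into $G$ and swapping the order of summation, the coefficient multiplying the reward noise $r_n-\me_{n+1}[r_n]$ is exactly $\gamma^n\sum_{t=0}^n\gbt\log\ptb{t}$, which produces the first summand $\gamma^{2n}\mv_{n+1}[r_n]\big(\sum_{t=0}^n\gbt\log\ptb{t}\big)\big(\sum_{t=0}^n\gbt\log\ptb{t}\big)\trans$. The action noise at step $n$ collects the $(Q^\pb_n-V^\pb_n)$ innovations together with the $a_n$-dependent parts of the score-weighted and control-variate terms; invoking the likelihood-ratio gradient identity $\me_{a\sim\pi_\btheta(s)}[\gbt\log\pi_\btheta(a|s)\,f(s,a)]=\gbt\,\me_{a}[f]-\me_a[\gbt f]$ with $f=\tq^\pb$ makes the local terms $\gbt\tv^\pb_n-\gbt\tq^\pb_n-\tq^\pb_n\gbt\log\ptb{n}$ vanish in conditional mean (exactly the $p_3^t$ argument), leaving the conditional covariance $\cov_n[\gbt Q^\pb_n-\gbt\tq^\pb_n+(\sum_{t=0}^n\gbt\log\ptb{t})(Q^\pb_n-\tq^\pb_n)\mid s_n]$. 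The transition noise at step $n$ gathers the $\gamma V^\pb_n$ innovations and the $s_n$-dependent remainder into $\cov_n[\gbt V^\pb_n+(\sum_{t=0}^{n-1}\gbt\log\ptb{t})V^\pb_n]$.

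The hard part is the attribution of the value-gradient terms. The estimator contains no explicit $\gbt Q^\pb_n$, yet the statement places $\gbt Q^\pb_n$ inside the action covariance and $\gbt V^\pb_n$ inside the transition covariance; these appear only after differentiating the conditional means of the future-reward sums, and they must be routed to the correct step and noise type. The key relation is the Bellman-gradient identity $\gbt Q^\pb_n=\gamma\,\me_{n+1}[\gbt V^\pb_{n+1}]$, obtained by differentiating $Q^\pb_n=\me[r_n+\gamma V^\pb_{n+1}\mid s_n,a_n]$ whose conditioning law does not depend on $\btheta$; it links the $\gbt Q^\pb_n$ that must appear in the action term at step $n$ to the centering of $\gbt V^\pb_{n+1}$ in the transition term at step $n+1$. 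I expect the main effort to lie in verifying that, after this re-grouping, the residual value-gradient pieces telescope across adjacent time steps and that all cross terms between distinct noise types and steps vanish by the martingale property, leaving precisely the three conditional covariances above. Carefully tracking these gradient terms, rather than the reward and score combinatorics, is where the real work is.
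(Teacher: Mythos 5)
Your plan is correct and is essentially the paper's own argument: the paper also applies the law of total variance recursively along the trajectory (via auxiliary sequences $Z(n)=X(n)+Y(n-1,n)$, which is the Doob-martingale decomposition in different notation), splits each step's increment into reward, action, and transition noise, kills the control-variate terms with the same likelihood-ratio identity, and routes $\gbt Q^\pb_n$ and $\gbt V^\pb_n$ into the conditional covariances exactly through the conditional means $\me_{n+1}[X(n+1)]=\gamma^n\gbt Q^\pb_n$ and $\me_n[X(n)\given s_n]=\gamma^n\gbt V^\pb_n$, i.e., your Bellman-gradient identity. The only differences are bookkeeping (your explicit telescoping of $\sum_{t_1=t}^T\gamma^{t_1}r_{t_1}$ versus the paper's $X,Y,Z$ sequences), plus the implicit use of reward--transition independence when you reveal $r_n$ before $s_{n+1}$, which the paper also invokes.
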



We defer the proof to Appendix \ref{appendix:dr_cov_proof2}. Besides, since many common estimators are special cases of DR-PG, we can obtain their variances as direct corollaries of Theorem~\ref{thm:dr_ope2pg}.

\paragraph{Discussions} 
As we can see, the approximate value-function can help reduce the second term in Eq.\eqref{drpg:maintext:variance_all} when both $Q^\pb$ and $\gbt Q^\pb$ are well approximated by $\tq^\pb$ and $\gbt\tq^\pb$ respectively. Comparing with \citet{cheng2019trajectory}, which is our special case with $\gbt\tq^\pb \equiv \bm 0$,  we can see that as long as $\gbt \tq^\pb$ is a better approximation of $\gbt Q^{\pb}$ than $\bm 0$, the new estimator will generally have lower variance than the previous one. We note that such a situation is very common in variance reduction by control variates; we refer the readers to \citet{jiang2016doubly} for very similar discussions when they compare DR to step-wise IS. 

\subsection{Cramer-Rao Lower Bound for Policy Gradient}
We now state the Cramer-Rao lower bound for policy gradient, which is a lower bound for \emph{any} unbiased estimator for the PG problem. As we will see, the DR-PG estimator achieves the C-R bound of PG when the MDP has a tree structure and both $\tq^\pb$ and $\gbt \tq^\pb$ are accurate, a property inherited directly from the DR estimator in OPE \citep[Theorem 2]{jiang2016doubly}. As a special case, when we further assume that the environment is fully deterministic (but the policy can still be stochastic), \textbf{DR-PG is the only estimator that achieves $\mathbf{0}$ variance with accurate side information}, and other estimators have non-zero variance in general (see Table~\ref{tab:big}). 

\begin{theorem}[Informal]
For tree-structured MDPs (i.e., each state only appears at a unique time step and can be reached by a unique trajectory), the Cramer-Rao lower bound of PG is	
\begin{align*}
\me\Big[\sum_{t=0}^T\gamma^{2t}\bigg\{\mv_{t+1}[r_t]\Big[\Big(\sum_{t_1=0}^t\frac{\partial \log\ptb{t_1}}{\partial \theta_i}\Big)\Big]^2+ \\
\mv_t\Big[\Big(V_t^\pb\sum_{t_1=0}^{t-1}\frac{\partial \log\ptb{t_1}}{\partial \theta_i}+\frac{\partial V_t^\pb}{\partial \theta_i}\Big)\Big]\bigg\}\Big],
\end{align*}
which coincides with the variance of DR-PG when $\tq^\pb\equiv Q^\pb$ and $\gbt\tq^\pb \equiv \gbt Q^\pb$. 
\end{theorem}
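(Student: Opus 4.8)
The plan is to cast policy-gradient estimation as a parametric estimation problem in which the policy parameter $\btheta$ is \emph{known} and fixed, while the unknowns are the MDP's reward and transition parameters, collected into a vector $\eta$ (the reward distribution of each $(s,a)$, and each transition kernel $P(\cdot\given s,a)$ parametrized in local simplex coordinates). A single trajectory $\tau$ then has a likelihood $p(\tau;\btheta,\eta)$ known up to $\eta$, and the estimand is the smooth functional $g_i(\eta):=\partial J(\pb)/\partial\theta_i$. The multivariate Cramer--Rao inequality lower-bounds the variance of any unbiased estimator of $g_i$ by the quadratic form $(\nabla_\eta g_i)\trans I(\eta)^{-1}(\nabla_\eta g_i)$, where $I(\eta)$ is the Fisher information of $p(\tau;\btheta,\eta)$ with respect to $\eta$. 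The crucial observation is that $I(\eta)$ is \emph{identical} to the Fisher information appearing in the OPE Cramer--Rao bound of \citet[Theorem 2]{jiang2016doubly}, since the sampling distribution is the same; only the estimand changes, from $g_0(\eta)=J(\pb)$ (OPE) to $g_i(\eta)=\partial_{\theta_i}J(\pb)$ (PG). Hence the computation of $I(\eta)^{-1}$ is inherited wholesale, and the single new ingredient is $\nabla_\eta g_i$.

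First I would record the tree structure. Because each state is reached by a unique prefix, $\log p(\tau;\btheta,\eta)$ decomposes as a sum of independent reward and transition log-likelihoods over the visited $(s,a)$ pairs, so $I(\eta)$ is block diagonal across $(s,a)$: a reward block whose inverse is $\mv[r\given s,a]$, and a transition block whose inverse quadratic form, applied to any statistic $f(s')$, returns $\mv_{s'\sim P(s,a)}[f(s')]$ (the standard multinomial information identity). Each block is weighted by the probability of reaching $(s,a)$ under $\pb$, which is the source of the outer $\me[\cdot]$ in the final expression.

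Second, I would compute $\nabla_\eta g_i=\nabla_\eta\partial_{\theta_i}J=\partial_{\theta_i}\nabla_\eta J$, differentiating $J$ first with respect to $\eta$ exactly as in the OPE bound (obtaining the discounted reach probability at each reward parameter and the downstream value $V^\pb$ at each transition parameter), then applying $\partial_{\theta_i}$. The $\theta_i$-derivative hits the reach probability, producing the cumulative score $\sum_{t_1}\partial_{\theta_i}\log\ptb{t_1}$, and for the transition terms additionally hits the value function, producing the extra $\partial_{\theta_i}V^\pb$. Substituting into the block-diagonal quadratic form and taking expectation over the visitation distribution collapses the reward blocks to $\me[\sum_t\gamma^{2t}\mv_{t+1}[r_t](\sum_{t_1=0}^t\partial_{\theta_i}\log\ptb{t_1})^2]$ and the transition blocks to $\me[\sum_t\gamma^{2t}\mv_t[V_t^\pb\sum_{t_1=0}^{t-1}\partial_{\theta_i}\log\ptb{t_1}+\partial_{\theta_i}V_t^\pb]]$, which is exactly the stated bound. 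To confirm tightness, I would specialize Theorem~\ref{maintext:dr_cov_proof1} to the tree case with $\tq^\pb\equiv Q^\pb$ and $\gbt\tq^\pb\equiv\gbt Q^\pb$: this annihilates the middle term $\cov_n[\gbt Q^\pb-\gbt\tq^\pb+(\sum\gbt\log\ptb{})(Q^\pb-\tq^\pb)\given s_n]$, leaving the reward-variance term and the term $\cov_n[\gbt V_n^\pb+(\sum_{t=0}^{n-1}\gbt\log\ptb{t})V_n^\pb]$, whose $i$-th diagonal entries coincide with the two groups above.

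The hard part will be the transition block: showing that the inverse-Fisher quadratic form for the multinomial transition kernel reduces the contribution of the statistic $V^\pb_t\sum_{t_1<t}\partial_{\theta_i}\log\ptb{t_1}+\partial_{\theta_i}V^\pb_t$ to precisely $\mv_t[\cdot]$, and carefully tracking how $\partial_{\theta_i}$ distributes across the reach probability and the downstream value so that no cross terms are lost. A secondary subtlety, and the reason for the \emph{informal} label, is the treatment of the reward distribution's nuisance moments together with the regularity conditions required for the Cramer--Rao inequality; these are handled exactly as in \citet{jiang2016doubly}, from which the bound is inherited rather than re-derived.
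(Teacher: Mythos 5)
Your proposal follows essentially the same route as the paper's proof in Appendix~\ref{app:CR}: treat $\btheta$ as known and the MDP parameters $\eta$ as the unknowns, apply the constrained Cramer--Rao bound with the Fisher-information inversion inherited from \citet[Theorem 2]{jiang2016doubly}, compute the new Jacobian $\nabla_\eta \partial_{\theta_i}J(\pb)$ (whose reward part yields the cumulative score and whose transition part additionally picks up $\partial_{\theta_i}V^\pb$), and verify tightness against Theorem~\ref{maintext:dr_cov_proof1}; the paper merely proves the DAG case first and specializes to trees in Remark~\ref{rem:achieve}. One minor imprecision: the raw Fisher information $I(\eta)$ is \emph{not} block diagonal across $(s,a)$ even for trees (scores at different steps along the same trajectory are correlated, cf.~Table~\ref{tab:CR:elements_of_I}); block diagonality holds only for the constrained quadratic form $U(U\trans I U)^{-1}U\trans$ after the $F\trans X\trans + XF$ diagonalization trick, which is precisely the part you inherit from \citet{jiang2016doubly}, so your downstream computation is unaffected.
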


Please refer to Appendix~\ref{app:CR} for the formal definitions and theorem statements, where we prove the more general results for DAG MDPs (Theorem~\ref{thm:dag}) and induce the lower bound for tree-MDPs as a direct corollary (Remark~\ref{rem:achieve}). 

\subsection{Practical Considerations} \label{sec:practical}
It is worth pointing out that the new estimator requires more information than $\tq^\pb$: it also requires $\gbt\tq^\pb$, which is sometimes not available, e.g., when $\tq^\pb$ is obtained by applying a model-free algorithm on a separate dataset. However, when an approximate dynamics model of the MDP is available (as considered by \citet{jiang2016doubly, cheng2019trajectory}), both $\tq^\pb$ and $\gbt\tq^\pb$ can be computed by running simulations in the approximate model \emph{for each data point}, where the former can be estimated by Monte-Carlo and the latter can be estimated by the PG estimators. Since we need to draw multiple trajectories starting from each $(s_t, a_t)$ in the dataset, the approach will be computationally intensive and not suitable for situations where the original problem is also a simulation. The new estimator is most likely useful when the bottleneck is the sample efficiency in the real environment and computation in the approximate model is relatively cheap.

Despite the computational intensity, in the next section we provide proof-of-concept experimental results showing the variance reduction benefits of the new estimator compared to prior baselines. 

\section{Experiments}
In this section we empirically validate the effectiveness of DR-PG. Most of our experiment settings follow exactly from \citet{cheng2019trajectory} (we reuse their code). 
\begin{figure*}[h!]
	\centering
	\includegraphics[scale=0.35]{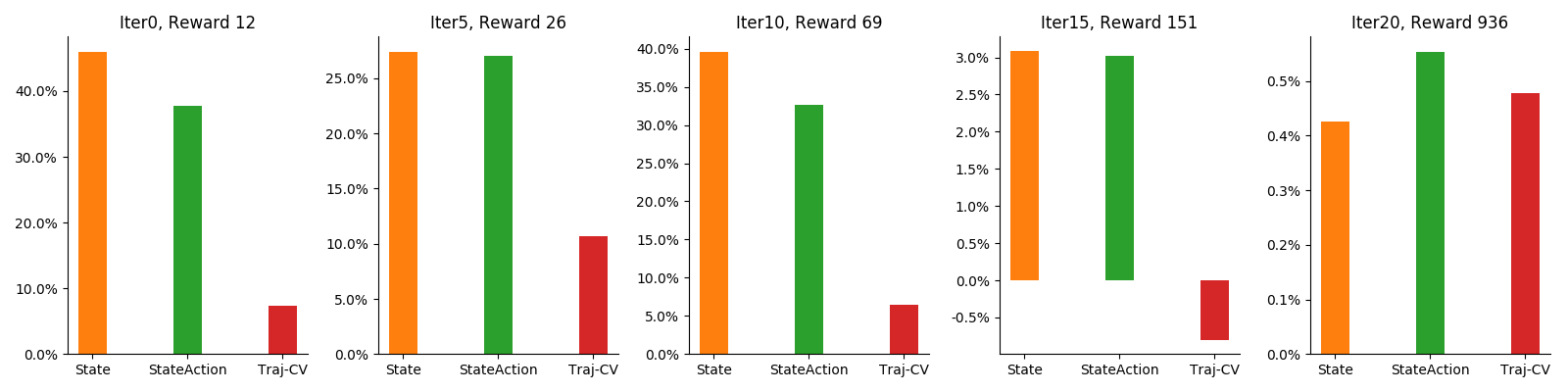}
	\caption{The variance reduction ratio of DR-PG comparing with (1) State-dependent baseline (orange), (2) State-action-dependent baseline (green), (3) Trajectory-wise control variate (red). We omit the results of Standard PG, because DR-PG enjoys a great variance reduction ratio more than 60\% in each iteration. Y-axes show the variance reduction ratio. In each sub-title, we indicate the iteration number and the evaluation results of the policy at that iteration.  }\label{fig:var}
\end{figure*}

\begin{figure}[h!]
	\centering
	\includegraphics[scale=0.25]{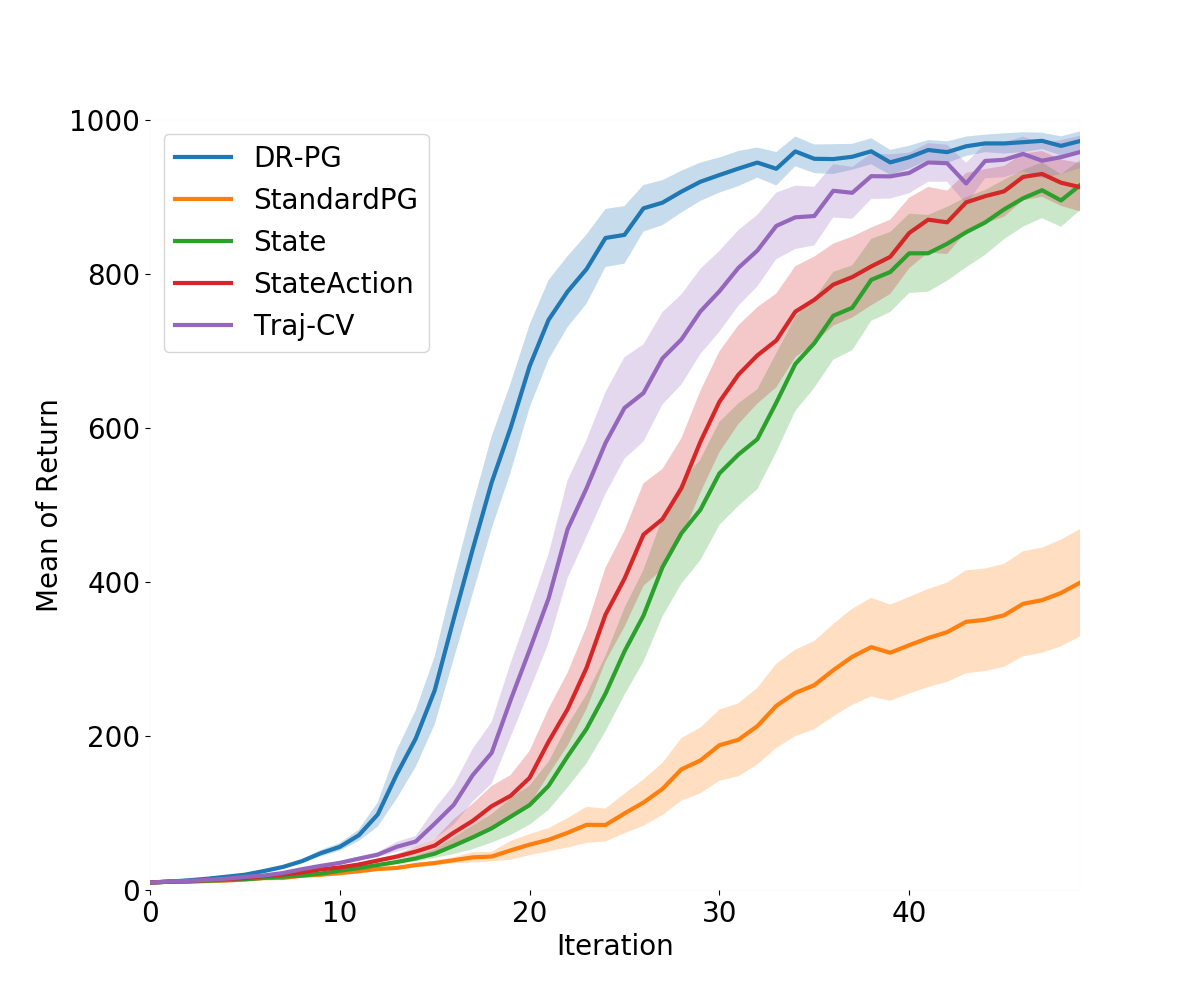}
	\caption{Comparison of different PG estimators in policy optimization. Y-axes show the mean of the expected returns of the policies over 150 trials learned by different PG methods. Error bars show double the standard errors, which correspond to 95\% confidence intervals. \label{fig:opt}}
\end{figure}

\subsection{Setup}
\paragraph{Compared Methods}
We empirically demonstrate the variance reduction effect and the optimization results of the new DR-PG estimator, and compare it to the following methods: (a) Standard PG, (b) Standard PG with state-dependent baseline,  (c) Standard PG with state-action-dependent baseline, (d) Standard PG with trajectory-wise baseline. 
For simplicity we will drop the prefix ``Standard PG'' when referring to the methods (b)--(d). See Appendix \ref{exp:true_estimators} for the detailed implementations of these methods.

\paragraph{Environments and Approximate Models}
We use the CartPole Environment in OpenAI Gym \citep{brockman2016openai} with
DART physics engine \citep{DBLP18}, and set the horizon length to be 1000. We follow \citet{cheng2019trajectory} for the choice of neural network architecture and training methods in building the policy $\pi$, value function estimator $\tilde{V}$, and dynamics model $\tilde{d}$. 


\paragraph{Implementation of the Estimators}
For state-baseline, we use $\tilde{V}$ as the baseline function. For the state-action-baseline, Traj-CV, and DR-PG, we additionally need  $\tq(s_t,a_t)$, which is computed by the combination of $\tilde{V}$ and $\tilde{d}$: $\tilde{Q}(s_t,a_t):=r(s_t,a_t)+\delta \tilde{V}(\tilde{d}(s_t, a_t))$, where $\delta$ is a hyperparameter that plays the role of discount factor introduced by \citet{cheng2019trajectory}.

For each state $s_t$ we use Monte Carlo (1000 samples) to compute the expectation (over $a_t$) of $Q(s_t,a_t)\nabla\log\pi(s_{t'},a_{t'})$, where $t'=t$ for state-action baseline and $t'=t,t+1,...,T$ for Traj-CV and our method. All these design choices are taken from \citet{cheng2019trajectory} as-is.

Our DR-PG requires additional estimation of $\nabla Q^{\ptb{}}$ and its expectation $\me_{\ptb{}}[\nabla Q^{\ptb{}}]$, and we obtain them by Monte Carlo. To estimate $\nabla Q^{\ptb{}}(s,a)$, we sample $n_q$ trajectories with $\pi$ and $\tilde{d}$, starting from $(s,a)$ with the maximum length no larger than $L$. Since we are solving another policy gradient problem now (one in the biased dynamics model), we choose state-baseline with $\tv$ as a computationally-cheap variance reduction method to speed up the computation (c.f.~Section~\ref{sec:practical}), and use a different discount factor $\gamma'$ (again for variance reduction \citep{baxter2001infinite, jiang2015dependence}). As for $\me_{\ptb{}}[\nabla Q^{\ptb{}}(s,a)]$, we first sample $n_v$ actions at state $s$, and then use the above procedure to estimate $\nabla Q^{\ptb{}}(s,a_i)$ for $i\in\{1,2,3,\ldots,n_v\}$. Finally, we compute the mean of them as the expectation. We choose $n_q=20,n_v=20,L=30$ and $\gamma'=0.9$ in the actual experiments. We observe that even with relatively small $n_q$, $n_v$, there is already significant variance reduction effect for our method. See Appendix~\ref{exp:true_estimators} for further details.

\subsection{Variance Reduction Comparison}
We first compare the variance reduction benefits of DR-PG against several baseline methods. The variance reduction ratio is defined as $\frac{\hat{\mv}_G-\hat{\mv}_{DR}}{\hat{\mv}_{G}}$, 
where $\hat{\mv{}}_G$ denotes the sum of the policy gradients estimator $G$'s variance over all 194 parameters (i.e., the trace of the covariance matrix), and $G$ can be Standard PG, state-dependent baseline, state-action-dependent baseline, or trajectory-wise control variate.

The results are shown in Figure \ref{fig:var}. To display the variance reduction results in different training stages, we first train a randomly initialized agent with DR-PG for multiple iterations and stop the training when the policy is near optimal (nearly 20 iterations in total in this case). Every 5 iteration, we save the policy $\pi_i$, as well as the value function estimator $\tilde{V}_i$ and dynamic estimator $\tilde{d}_i$. 

For each ($\pi_i$, $\tilde{V}_i$, $\tilde{d}_i$) tuple, we first use 10000 sampled trajectories to build state-dependent baseline estimators and compute the mean of as the (estimated) groundtruth. (We use state-dependent baseline because this method suffers less variance than standard PG and is computationally cheap compared to other control variate methods). Next, we sample another 500 trajectories and calculate the mean squared error w.r.t.~the approximate true gradient mentioned above for each gradient estimator, which gives an estimation of the estimators' variance (since all estimators considered here are unbiased). 

As we can see in Figure 1, DR-PG has better variance reduction effect than the other methods. At the initial training stage (Iterations 0-10), DR-PG can be uniformly better than the others, and the variance reduction ratio is quite large. 
When the policy is close to optimal (Iterations 15-20), the complexity of the true value function will increase, as the trajectories will last longer and the true values of different states become much more distinct than before. As a result, it is more difficult for the value function estimator to make accurate predictions, hence the estimation of $\tq^{\pi}$ and $\nabla\tq^{\pi}$ are less accurate and the variance reduction ratio decreases. However, our method still has advantage over the others in these cases. Moreover, we will show in the next section that such decrease in the final training stage will not stop DR-PG from achieving a near-optimal policy with less amount of data from real environment, i.e., attaining better sample efficiency.

\subsection{Policy Optimization}

In this experiment, we directly compare the policy optimization performance of different PG methods, i.e., they generate different sequences of policies now (as opposed to computing the gradient for the same policies as in the previous experiment). In each iteration, we record the mean of the accumulated rewards over 5 trajectories sampled from true environment, and use them to represent the performance of the policy. We repeat multiple trials of the entire experiment with different random seeds and plot the mean expected return in Figure \ref{fig:opt}. 
As can be clearly seen, DR-PG achieves a near-optimal performance with a significantly less amount of data drawn from the real environment compared to the baseline algorithms. 

\subsection{Computational Cost} 
To understand the computational overhead of our method due to having to compute $\tilde{Q}$ and $\nabla\tilde{Q}$, we compare the computational cost of applying the PG estimators to train the policy till near-optimal, when the policy value averaged over 5 trajectories exceeds 900 for the first time (the optimal return is around 1000). The total CPU/GPU usage is reported in Figure \ref{fig:usage}, where we omit the standard PG because it costs much more  than the others due to extended number of training iterations. As we can see, DR-PG  requires a reasonable amount of additional computational resources compared to the other estimators.
\begin{figure}[h!]
	\centering
	\includegraphics[scale=0.5]{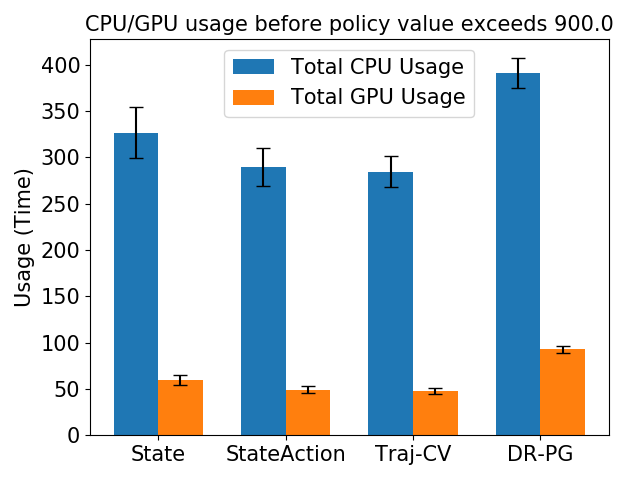}
	\caption{Comparison of computational costs. Y-axes show the total CPU/GPU usage. Error bars show double the standard errors. \label{fig:usage}}
\end{figure}

\section{Conclusion}
This paper investigates a direct connection between variance reduction techniques for on-policy policy gradient and for off-policy evaluation with importance sampling. From the DR estimator for OPE, we derive a very general form of PG that subsumes many previous estimators as special cases, and achieve more variance reduction in the ideal situation with accurate side information.

\section*{Acknowledgement}
The research project is motivated by a question asked by Lihong Li in 2015.  The authors gratefully thank Ching-An Cheng and his coauthors for the code of the trajectory-wise control variates method and the valuable comments. 

\newpage

\bibliography{reference}
\bibliographystyle{icml2020}

\onecolumn
\appendix

\clearpage

\section{Results in Table \ref{tab:big} Not Presented in the Main Text}\label{appendix:table_missing_proof}

\BaselinePGOPE*

\begin{proof}
Notice that
\begin{align*}
\gamma^t b(s_t) =& \gamma^t b(s_t) + \sum_{t'=t+1}^{T+1} \gamma^{t'}\Big(b(s_{t'})-b(s_{t'})\Big)=\sum_{t'=t}^{T}\Big(\gamma^{t'}b(s_{t'})- \gamma^{t'+1} b(s_{t'+1}))\Big).\numberthis\label{pgwithbaseline:sumLemma}
\end{align*}
where we use the fact that $b(s_{T+1})=0$. Use (\ref{pgwithbaseline:sumLemma}) to replace $\gamma^t b(s_t)$ in (\ref{pgwithbaseline:grad}) and we obtain that
\begin{align*}
(\ref{pgwithbaseline:grad}) =& \sum_{t=0}^T \left(\gbt \log \ptb{t} \Big(\sum_{t'=t}^{T}\gamma^{t'} r_{t'}-\gamma^t b(s_t)\Big)\right)\\
=&\sum_{t=0}^T \left(\gbt \log \ptb{t} \Big(\sum_{t'=t}^{T}\gamma^{t'} r_{t'}-\sum_{t'=t}^{T}\Big(\gamma^{t'}b(s_{t'})- \gamma^{t'+1}b(s_{t'+1})\Big)\Big)\right)\\
=&\sum_{t=0}^T \left(\gbt \log \ptb{t} \Big(\sum_{t'=t}^{T}\gamma^{t'}\Big( r_{t'}-b(s_{t'})+ \gamma b(s_{t'+1})\Big)\Big)\right)\\
=&\sum_{t=0}^T \left( \gamma^{t}\Big( r_{t}-b(s_{t})+ \gamma b(s_{t+1})\Big)\Big(\gbt\log\prod_{t'=0}^{t} \ptb{t'}\Big) \right).\numberthis\label{pgwithbaseline:derive}
\end{align*}

Now we show that (\ref{pgwithbaseline:derive}) can be generated by (\ref{pgwithbaseline:ope}), we similarly add a disturbance 
\begin{align*}
\jpi{\turb{i}}=&\sum_{t=0}^T\gamma^t \prod_{t'=0}^t\frac{\ptai{t'}{i}}{\ptb{t'}} \Big( r_{t}-b(s_{t})+ \gamma b(s_{t+1})\Big)+b(s_0)\\
=&\sum_{t=0}^T\gamma^t \Big( r_{t}-b(s_{t})+ \gamma b(s_{t+1})\Big) (1+\sum_{t'=0}^t\frac{\dpi{t'}{i}}{\ptb{t'}})+b(s_0) +o(\epsilon_i).
\end{align*}
where $\dpi{t}{i}$ is a shorthand of $\langle \gbt \pi_{\btheta}^{t}, \dti{i}\bm{e}_i \rangle$.
Then we calculate the partial derivative for each $i=1,2,...,d$,
\begin{align*}
\pt{\jpi{\btheta}}{i}=&\tpzi{i}\frac{\jpi{\turb{i}}-\jpi{\btheta}}{\dti{i}}\\
=&\sum_{t=0}^T\gamma^t \Big( r_{t}-b(s_{t})+ \gamma b(s_{t+1})\Big) \sum_{t'=0}^t\pt{\log\ptb{t}}{i}.
\end{align*}
As a result, the policy gradient estimator derived from (\ref{pgwithbaseline:ope}) should be
$$
\Big(\pt{\jpi{\btheta}}{1},\pt{\jpi{\btheta}}{2},...,\pt{\jpi{\btheta}}{d}\Big)\trans=\sum_{t=0}^T \left( \gamma^{t}\Big( r_{t}-b(s_{t})+ \gamma b(s_{t+1})\Big)\Big(\gbt\log\prod_{t'=0}^{t} \ptb{t'}\Big) \right).
$$
Finally we prove that (\ref{pgwithbaseline:ope}) is an unbiased OPE estimator. 
\begin{align*}
&\me\Big[\sum_{t=0}^T \gamma^{t}\rho_{[0:t]}\Big( r_{t}-b(s_{t})+ \gamma b(s_{t+1})\Big)+b(s_0)\Big]\\
=&\me\Big[\sum_{t=0}^T\gamma^{t}\rho_{[0:t]}r_t\Big]+\me\Big[\sum_{t=0}^T\rho_{[0:t-1]}\Big(b(s_t)-\rho_t b(s_t)\Big)\Big]+\me\Big[\gamma^{T+1}b(s_{T+1})\Big]\\
=&V^{\pi'}_0+\sum_{t=0}^T\me\Big[\rho_{[0:t-1]}b(s_t)\me_t[1-\rho_t|s_t]\Big] + 0 = V^{\pi'}_0. \tag*{\qedhere} 
\end{align*}
\end{proof}


\begin{proposition}\label{prop:traj_ope2pg}
REINFORCE \cite{williams1992simple}
\begin{equation}
\sum_{t=0}^T \left(\gbt \log\ptb{t} \sum_{t'=0}^T \gamma^{t'} r_{t'}\right)
\end{equation}
can be derived by taking finite difference over trajectory-wise IS
\begin{equation}\label{appendix:Traj_IS_PG}
 \rho_{[0:T]} \sum_{t=0}^T  \gamma^t r_t
\end{equation}
where $\rho_{[0:T]}$ is the cumulative product.
\end{proposition}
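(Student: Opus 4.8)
The plan is to mirror almost verbatim the argument used for the step-wise IS case in Proposition~\ref{prop:stepIS_ope2pg}, the only structural difference being that the cumulative importance ratio now runs over the \emph{entire} horizon and multiplies the \emph{full} return rather than each reward individually. Concretely, I would denote $\bm{e}_i$ the $i$-th standard basis vector in $\btheta \in \mathbb{R}^d$ and $\dti{i}$ a small scalar, write $\dpi{t}{i}$ for $\langle \gbt \ptb{t}, \dti{i}\bm{e}_i\rangle$, and evaluate the trajectory-wise IS estimator \eqref{appendix:Traj_IS_PG} on the perturbed policy $\pi' := \pai{i}$ for each $i=1,\ldots,d$.

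First I would expand the full product $\rho_{[0:T]} = \prod_{t'=0}^T \ptai{t'}{i}/\ptb{t'}$ by writing each factor as $1 + \dpi{t'}{i}/\ptb{t'}$ and keeping only first-order terms, so that $\rho_{[0:T]} = 1 + \sum_{t'=0}^T \dpi{t'}{i}/\ptb{t'} + o(\dti{i})$, since all cross-products of two or more perturbation terms are $o(\dti{i})$. Multiplying through by $\sum_{t=0}^T \gamma^t r_t$ gives $\jpi{\turb{i}} = \sum_{t=0}^T \gamma^t r_t + \big(\sum_{t'=0}^T \dpi{t'}{i}/\ptb{t'}\big)\big(\sum_{t=0}^T \gamma^t r_t\big) + o(\dti{i})$. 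Since the unperturbed estimator $\jpi{\btheta}$ equals $\sum_{t=0}^T \gamma^t r_t$, subtracting it, dividing by $\dti{i}$, and taking $\tpzi{i}$ yields $\pt{\jpi{\btheta}}{i} = \big(\sum_{t'=0}^T \pt{\log \ptb{t'}}{i}\big)\big(\sum_{t=0}^T \gamma^t r_t\big)$.

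Stacking these partial derivatives into a column vector over $i=1,\ldots,d$ then recovers $\sum_{t=0}^T \gbt \log \ptb{t} \sum_{t'=0}^T \gamma^{t'} r_{t'}$, which is exactly REINFORCE. Unbiasedness requires no separate verification: trajectory-wise IS is a standard unbiased OPE estimator, so by the exchangeability of differentiation and expectation (the Remark following Proposition~\ref{prop:baseline_pg2ope}) the induced PG estimator is automatically unbiased.

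This is the most elementary of the derivations in the paper, so I do not anticipate a genuine obstacle; the only point demanding any care is the bookkeeping in the first-order product expansion, namely confirming that the single full product $\rho_{[0:T]}$ contributes the factor $\sum_{t'=0}^T \gbt\log \ptb{t'}$ multiplying the \emph{whole} return. This is precisely what distinguishes REINFORCE---full return against every score term---from the step-wise variant in Proposition~\ref{prop:stepIS_ope2pg}, where truncating the product at $t$ produces the reward-to-go $\sum_{t'=t}^T \gamma^{t'} r_{t'}$ instead.
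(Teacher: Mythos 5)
Your proposal is correct and follows essentially the same route as the paper's own proof: perturb $\btheta$ along each coordinate direction, expand the full product $\rho_{[0:T]}$ to first order so that the entire return is multiplied by $\sum_{t'=0}^T \gbt\log\ptb{t'}$, take the finite-difference limit coordinate-wise, and stack. The only difference is cosmetic (you factor the score sum outside the return rather than re-indexing the double sum), and your closing remark correctly identifies the structural contrast with the step-wise case.
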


\begin{proof}
Denote $e_i$ as the i-th standard basis vectors in $\btheta\in\mathbb{R}^d$ space, and denote $\epsilon_i$ as a scalar. Besides, use $\dpi{t}{i}$ as a shorthand of $\langle \gbt \pi_{\btheta}^{t}, \dti{i}\bm{e}_i \rangle$. Then, we apply trajectory-wise IS on the policy $\pi' := \pai{i}$ for arbitrary $i=1,2,...,d$:
\begin{align*}
\jpi{\turb{i}} =&\rho_{[0:T]} \sum_{t=0}^T  \gamma^t r_t = \sum_{t=0}^T\gamma^t r_t \prod_{t'=0}^T\frac{\ptai{t'}{i}}{\ptb{t'}}\\
=&\sum_{t=0}^T\gamma^tr_t \prod_{t'=0}^T(1+\frac{\ptai{t'}{i}-\ptb{t'}}{\ptb{t'}}) \\
=&\sum_{t=0}^T\gamma^tr_t + \sum_{t=0}^T\frac{\dpi{t}{i}}{\ptb{t}}\sum_{t'=0}^T\gamma^{t'}r_{t'}+o(\epsilon_i).
\end{align*}
Then,
\begin{align*}
\pt{\jpi{\btheta}}{i}=&\tpzi{i}\frac{\jpi{\turb{i}}-\jpi{\btheta}}{\dti{i}}\\
=&\tpzi{i}\sum_{t=0}^T\frac{\dpi{t}{i}/\ptb{t}}{\dti{i}} \sum_{t'=t}^T \gamma^{t'} r_{t'}\\
=&\sum_{t=0}^T\pt{\log\ptb{t}}{i} \sum_{t'=0}^T \gamma^{t'} r_{t'}
\end{align*}
Therefore, the estimator derived from (\ref{appendix:Traj_IS_PG}) should be 
\begin{align*}
&\Big(\pt{\jpi{\btheta}}{1},\pt{\jpi{\btheta}}{2},...,\pt{\jpi{\btheta}}{d}\Big)\trans\\
=&\sum_{t=0}^T\left(\Big(\pt{\log\ptb{t}}{1},\pt{\log\ptb{t}}{2},...,\pt{\log\ptb{t}}{d}  \Big)\trans \sum_{t'=0}^T \gamma^{t'} r_{t'} \right)\\
=&\sum_{t=0}^T \left(\gbt \log\ptb{t} \sum_{t'=0}^T \gamma^{t'} r_{t'}\right) \numberthis\label{perstepIS:fullyexpand} 
\end{align*}
\end{proof}

\begin{proposition}\label{prop:ac_pg2ope}
Recall the policy gradient estimator in Actor-Critic Algorithm
\begin{equation}\label{ac:pgform} 
\sum_{t=0}^T\gamma^t\gbt\log\pb(a_t|s_t) f^w(s_t,a_t)
\end{equation}
where $f^w(\cdot, \cdot)$ is the critic function parameterized by $w$, and we assume $f^w(s_{T+1}, a_{T+1})=0$ for any $(s_{T+1}, a_{T+1})\in \cS\times\cA$. Then Eq.(\ref{ac:pgform}) can be derived by taking finite difference over the following OPE estimator:
\begin{equation}
\jpi{\pi'}=\sum_{t=0}^T\gamma^t\rho_{[0:t]}\Big(f^w(s_{t}, a_{t})-\gamma f^w(s_{t+1}, a_{t+1})\Big). \label{ac:estimator}
\end{equation}
\end{proposition}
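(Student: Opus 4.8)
The plan is to follow the template of Propositions~\ref{prop:stepIS_ope2pg} and~\ref{prop:traj_ope2pg}. First I would apply the OPE estimator in Eq.\eqref{ac:estimator} to the one-coordinate perturbation $\pi' := \pai{i}$ and expand the cumulative importance weight to first order, $\rho_{[0:t]} = \prod_{t'=0}^{t}\frac{\ptai{t'}{i}}{\ptb{t'}} = 1 + \sum_{t'=0}^{t}\frac{\dpi{t'}{i}}{\ptb{t'}} + o(\dti{i})$, where $\dpi{t}{i}$ abbreviates $\langle\gbt\ptb{t},\dti{i}\bm{e}_i\rangle$ and the cross terms are $O(\dti{i}^2)$. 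Because the unperturbed estimator $\jpi{\btheta}$ has every weight equal to $1$, subtracting it removes the constant term; dividing by $\dti{i}$ and letting $\tpzi{i}$ then converts each $\dpi{t'}{i}/(\ptb{t'}\dti{i})$ into $\pt{\log\ptb{t'}}{i}$. Stacking the $d$ coordinates gives the gradient as a double sum,
\[ \gbt\jpi{\btheta} = \sum_{t=0}^{T}\gamma^{t}\Big(f^w(s_t,a_t)-\gamma f^w(s_{t+1},a_{t+1})\Big)\sum_{t'=0}^{t}\gbt\log\ptb{t'}. \]

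The only genuinely nonroutine step is to collapse this double sum into the single-sum Actor-Critic form in Eq.\eqref{ac:pgform}. I would swap the order of summation so that the score terms factor out: for a fixed $t'$ the index $t$ runs over $t'\le t\le T$, yielding
\[ \gbt\jpi{\btheta} = \sum_{t'=0}^{T}\gbt\log\ptb{t'}\sum_{t=t'}^{T}\Big(\gamma^{t}f^w(s_t,a_t)-\gamma^{t+1}f^w(s_{t+1},a_{t+1})\Big). \]
The inner sum now telescopes: consecutive terms cancel and only the endpoints $\gamma^{t'}f^w(s_{t'},a_{t'})-\gamma^{T+1}f^w(s_{T+1},a_{T+1})$ survive. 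Invoking the boundary convention $f^w(s_{T+1},a_{T+1})=0$ stated in the proposition eliminates the upper endpoint, leaving $\gamma^{t'}f^w(s_{t'},a_{t'})$, so that $\gbt\jpi{\btheta} = \sum_{t'=0}^{T}\gamma^{t'}\gbt\log\ptb{t'}\,f^w(s_{t'},a_{t'})$, which is exactly Eq.\eqref{ac:pgform}.

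I expect the summation swap followed by telescoping to be the main (and essentially only) obstacle; everything preceding it is the same mechanical first-order expansion used repeatedly throughout Section~\ref{sec:warmup}. Unlike the other rows of Table~\ref{tab:big}, I would not attempt an unbiasedness claim here: the Actor-Critic gradient is in general biased when the critic $f^w$ differs from $Q^\pb$, so the proposition asserts only the finite-difference correspondence between Eq.\eqref{ac:estimator} and Eq.\eqref{ac:pgform}, which the computation above establishes.
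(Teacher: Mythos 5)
Your proposal is correct and matches the paper's proof in all essentials: the same first-order expansion of $\rho_{[0:t]}$, the same summation swap, and the same telescoping identity using $f^w(s_{T+1},a_{T+1})=0$, merely run in the opposite direction (you collapse the double sum into Eq.\eqref{ac:pgform}, while the paper expands Eq.\eqref{ac:pgform} into the double sum and then matches the finite difference to it). Your decision not to claim unbiasedness is also consistent with the paper, which treats this estimator as biased.
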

\begin{proof}
Given (\ref{ac:pgform}), we can rewrite $\gamma^t f^w(s_t, a_t)$ as:
\begin{align*}
&\gamma^t f^w(s_t, a_t) \\
= & \gamma^t f^w(s_t, a_t) + \sum_{t'=t}^{T} \gamma^{t'+1} \Big( f^w(s_{t'+1}, a_{t'+1})-  f^w(s_{t'+1}, a_{t'+1})\Big)\\
=&\sum_{t'=t}^{T} \Big(\gamma^{t'} f^w(s_{t'}, a_{t'}) - \gamma^{t'+1}  f^w(s_{t'+1}, a_{t'+1})\Big) + \gamma^{T+1}f^w(s_{T+1}, a_{T+1})\\
=&\sum_{t'=t}^{T}  \Big(\gamma^{t'}f^w(s_{t'}, a_{t'}) -  \gamma^{t'+1}f^w(s_{t'+1}, a_{t'+1})\Big).
\end{align*}
The last equation is because the fact that $f^w(s_{T+1},a_{T+1})=0$ for any $(s_{T+1},a_{T+1})\in \cS\times\cA$.
Then, we can rewrite (\ref{ac:pgform}) to
\begin{align*} &\sum_{t=0}^T\gamma^t\gbt\log\pb(a_t|s_t) f^w(s_t,a_t)\\
=&\sum_{t=0}^T\gbt\log\pb(a_t|s_t)\sum_{t'=t}^{T} \Big(\gamma^{t'} f^w(s_{t'}, a_{t'}) -  \gamma^{t'+1} f^w(s_{t'+1}, a_{t'+1})\Big)\\
=&\sum_{t=0}^T\gamma^t \Big(\sum_{t'=0}^{t}\gbt\log\pb(a_{t'}|s_{t'})\Big)\Big( f^w(s_{t}, a_{t})-\gamma f^w(s_{t+1}, a_{t+1})\Big).\numberthis\label{ac:grad}\\
\end{align*}
To see how (\ref{ac:grad}) can be generated by (\ref{ac:estimator}), we similarly add a small disturbance on $\btheta$ along the direction of $\bm{e}_i$:
\begin{align*}
\jpi{\turb{i}}=&\sum_{t=0}^T\gamma^t\prod_{t'=0}^t \frac{\ptai{t'}{i}}{\ptb{t'}}\Big(f^w(s_{t}, a_{t})-\gamma f^w(s_{t+1}, a_{t+1})\Big)\\
=&\sum_{t=0}^T\gamma^t(1+\sum_{t'=0}^t\frac{\dpi{t'}{i}}{\ptb{t'}})\Big(f^w(s_{t}, a_{t})-\gamma f^w(s_{t+1}, a_{t+1})\Big)+o(\epsilon_i).
\end{align*}
Then follow the same steps as Proposition \ref{prop:traj_ope2pg} and finally obtain (\ref{ac:grad}).
\end{proof}

\section{On the Unbiasedness of PG Derived from Unbiased OPE Estimators} \label{app:exchange}
Here we show that an PG estimator derived by differentiating a knowingly unbiased OPE estimator is always unbiased (despite that we verified the unbiasedness of DR-PG in Theorem~\ref{thm:dr_ope2pg} independently). 

\begin{proposition}
Suppose $\hv^{\pi}$ is an unbiased OPE estimator for any target policy $\pi$. The PG estimator obtained by  $\nabla_{\btheta'} \hv^{\pi_{\btheta'}}|_{\btheta'=\btheta}$  is also unbiased.
\end{proposition}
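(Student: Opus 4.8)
The plan is to exploit the linearity of expectation together with the interchangeability of differentiation and expectation, since the gradient is a linear operator and $\EE[\cdot]$ is an integral (or finite sum, given our discrete $\Scal$ and $\Acal$). The key observation is that $\hv^{\pi_{\btheta'}}$ is a random variable that depends on a trajectory, and the expectation $\EE[\hv^{\pi_{\btheta'}}]$ is taken with respect to the \emph{on-policy} distribution induced by $\pi_\btheta$ (the data-generating policy at the current parameter), which is \emph{fixed} as $\btheta'$ varies in a neighborhood of $\btheta$. This decoupling of the evaluation parameter $\btheta'$ from the sampling parameter $\btheta$ is exactly what makes the argument go through cleanly.

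First I would write out the target quantity explicitly as
\begin{align*}
\EE\big[\nabla_{\btheta'} \hv^{\pi_{\btheta'}}\big|_{\btheta'=\btheta}\big] = \EE\Big[\lim_{\Delta\to 0}\tfrac{\hv^{\pi_{\btheta'+\Delta}} - \hv^{\pi_{\btheta'}}}{\Delta}\Big|_{\btheta'=\btheta}\Big],
\end{align*}
interpreting the gradient componentwise via the finite-difference definition used throughout the paper. Since $\hv^\pi$ is assumed to be an unbiased estimator of $J(\pi)$ for every target policy $\pi$, we have $\EE[\hv^{\pi_{\btheta'}}] = J(\pi_{\btheta'})$ identically in $\btheta'$ (crucially, the outer expectation is over the fixed data distribution and does not depend on $\btheta'$). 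Then I would interchange the $\nabla_{\btheta'}$ and $\EE[\cdot]$ operators to obtain
\begin{align*}
\EE\big[\nabla_{\btheta'} \hv^{\pi_{\btheta'}}\big|_{\btheta'=\btheta}\big] = \nabla_{\btheta'}\,\EE\big[\hv^{\pi_{\btheta'}}\big]\big|_{\btheta'=\btheta} = \nabla_{\btheta'} J(\pi_{\btheta'})\big|_{\btheta'=\btheta} = \nabla_\btheta J(\pi_\btheta),
\end{align*}
which is precisely the true policy gradient, establishing unbiasedness.

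The main obstacle is justifying the interchange of the limit (differentiation) and the expectation. In the discrete and finite-horizon setting assumed in Section~\ref{sec:mdp}, the trajectory space is finite, so $\EE[\cdot]$ is a finite sum and the exchange is trivially valid by linearity — no dominated-convergence or uniform-integrability machinery is needed. I would therefore emphasize that the finiteness of $\Scal$, $\Acal$, and $T$ reduces the expectation to a finite sum $\sum_\tau p(\tau)\hv^{\pi_{\btheta'}}(\tau)$ over trajectories $\tau$, where the sampling probabilities $p(\tau)$ are determined by $\pi_\btheta$ and are constant in $\btheta'$; differentiating a finite sum term-by-term is immediate. For completeness I would remark that the same conclusion extends to continuous spaces under standard regularity conditions (e.g., $\hv^{\pi_{\btheta'}}$ differentiable in $\btheta'$ with a dominating integrable envelope for its derivative), but that such conditions are not required here given our standing finiteness assumptions.
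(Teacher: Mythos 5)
Your argument is correct and is essentially identical to the paper's own proof in Appendix~\ref{app:exchange}: both write the expectation as a sum over trajectories weighted by $p(\tau|\pi_\btheta)$ (which is constant in $\btheta'$), exchange $\nabla_{\btheta'}$ with the sum, and invoke unbiasedness to conclude the result equals $\nabla_{\btheta'} J(\pi_{\btheta'})|_{\btheta'=\btheta}$. Your additional remarks on why the interchange is trivially valid in the finite setting, and on the regularity conditions needed for continuous spaces, are a welcome elaboration but do not change the route.
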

\begin{proof} Let $p(\tau|\pi_{\btheta})$ denote the probability of seeing trajectory $\tau$ when taking actions according to policy $\pi_{\btheta}$. 
	\begin{align*}
	&\EE_{\tau\sim p(\tau|\pi_\btheta)}[\nabla_{\btheta'} \hv^{\pi_{\btheta'}}|_{\btheta'=\btheta}]
	=\sum_\tau p(\tau|\pi_\btheta) (\nabla_{\btheta'} \hv^{\pi_{\btheta'}})|_{\btheta'=\btheta}\\
	=&(\nabla_{\btheta'} \sum_\tau p(\tau|\pi_\btheta)  \hv^{\pi_{\btheta'}})|_{\btheta'=\btheta}
	=(\nabla_{\btheta'}\EE_{\tau\sim p(\tau|\pi_\btheta)}[\hv^{\pi_{\btheta'}}])|_{\btheta'=\btheta}\\
	=&\nabla_{\btheta'} V_0^{\pi_{\btheta'}}|_{\btheta'=\btheta}=\gbt J(\pi_{\btheta}). \tag*{($\hv$ is unbiased) \qedhere}
	\end{align*}
\end{proof}

\section{Clarification on the Relationship between $\tq^\pb$ and $\gbt \tq^\pb$} \label{app:1st_order}

To explain why $\tq^\pb$ and $\gbt \tq^\pb$ do not have to be related in any ways, we temporarily deviate from the notations in the main text and adopt more complete albeit lengthy notations for clarification purposes. For now we let $\btheta'$ be the free variable that represents the policy parameters, and $\btheta$ be the constant vector that represents the \emph{current} policy we are computing gradient for. The goal of PG is then to estimate
$
\nabla_{\btheta'} J(\pi_{\btheta'}) |_{\btheta' = \btheta}.
$ 
In the main text we omit ``$|_{\btheta' = \btheta}$'' and overload $\btheta$ with $\btheta'$, which is standard, but here distinguishing between them helps explanation. Now consider the $\tq^\pb$ and $\gbt\tq^\pb$ terms in Eq.\eqref{drpg:maintext:funcQ}; they are actually
$
\tq^{\pi_{\btheta'}} |_{\btheta' = \btheta}
$ 
and
$
\nabla_{\btheta'}\tq^{\pi_{\btheta'}} |_{\btheta' = \btheta}
$ 
respectively. It should be clear then, that these two terms are the 0-th order and the 1-st order information of $\tq^{\pi_{\btheta'}}$, respectively,  \emph{at the current policy parameters $\btheta$}, so they have independent degrees of freedom. In practice, we often only specify the 0-th and the 1-st order information without ever fully specifying how $\tq^{\pi_{\btheta'}}$ changes with $\btheta'$, and the object $\tq^{\pi_{\btheta'}}$ is introduced purely for mathematical convenience in our derivation.

\section{Proof of Theorem \ref{maintext:dr_cov_proof1}}\label{appendix:dr_cov_proof2}
Below we first provide a relatively concise proof of this theorem; an alternative proof based on recursion and induction is given later.
\DRCovProofOne*

\begin{proof}
We first rewrite DR-PG estimator (Eq.(\ref{drpg:maintext:funcQ})) into an equivalent form:
\begin{align*}
\sum_{t=0}^T\Big\{\gbt\log\ptb{t}\Big[\sum_{t'=t}^T\gamma^{t'}\Big(r_{t'}+\tv^\pb_{t'}-\tq^\pb_{t'}\Big)\Big]+\gamma^t\Big(\gbt\tv^\pb_t-\tv_t^\pb\gbt\log\ptb{t}-\gbt\tq^\pb_t\Big)\Big\}.\numberthis\label{drpg:eq_form}
\end{align*}
Define sequence $X(n)$ as
\begin{align*}
X(n)=\sum_{t=n}^T\Big\{\gbt\log\ptb{t}\Big[\sum_{t'=t}^T\gamma^{t'}\Big(r_{t'}+\tv^\pb_{t'}-\tq^\pb_{t'}\Big)\Big]+\gamma^t\Big(\gbt\tv^\pb_t-\tv_t^\pb\gbt\log\ptb{t}-\gbt\tq^\pb_t\Big)\Big\}.
\end{align*}
According to the definition of $X(n)$, we observe that $X(n)$ is the DR-PG estimator in step $n$, obtained by dropping the first $n$ component of the summation in (\ref{drpg:eq_form}). Then we have
\begin{align}
\me_{n}[X(n)]=&\gamma^n\gbt Q^\pb_{n-1}\label{drpg:maintext:expxn}\\
\me_n[X(n)|s_n]=&\gamma^n\gbt V^\pb_n \label{drpg:maintext:expxn_sn}
\end{align}
Define sequence $Y(n_1,n_2)$ as 
\begin{align*}
Y(n_1,n_2)=
\sum_{t=0}^{n_1}\Big\{\gbt\log\ptb{t}\Big[\sum_{t'=n_2}^T\gamma^{t'}\Big(r_{t'}+\tv^\pb_{t'}-\tq^\pb_{t'}\Big)\Big]\Big\}+\sum_{t=n_2}^{n_1}\gamma^{t}\Big(\gbt\tv^\pb_{t}-\tv_{t}^\pb\gbt\log\ptb{{t}}-\gbt\tq^\pb_{t}\Big).
\end{align*}
We take the convention that $\sum_{t=n_2}^{n_1}(\cdot) = 0$ when $n_1 < n_2$. According to the definition of $Y$,
\begin{align*}
\me_{n+1}[Y(n,n)]=&\me_{n+1}\bigg[\sum_{t=0}^{n}\Big\{\gbt\log\ptb{t}\Big[\sum_{t'=n}^T\gamma^{t'}\Big(r_{t'}+\tv^\pb_{t'}-\tq^\pb_{t'}\Big)\Big]\Big\}+\gamma^{n}\Big(\gbt\tv^\pb_{n}-\tv_{n}^\pb\gbt\log\ptb{{n}}-\gbt\tq^\pb_{n}\Big)\bigg]\\
=&\gamma^n\me_{n+1}\bigg[\Big(\sum_{t=0}^{n}\gbt\log\ptb{t}\Big)\Big(\sum_{t'=n}^T\gamma^{t'-n}r_{t'}-\tq_n^\pb\Big)+\tv^\pb_{n}\Big(\sum_{t=0}^{n-1}\gbt\log\ptb{t}\Big)+\gbt\tv^\pb_{n}-\gbt\tq^\pb_{n}\bigg]\\
&+\gamma^n\me_{n+1}\bigg[\Big(\sum_{t=0}^{n}\gbt\log\ptb{t}\Big)\Big(\sum_{t'=n+1}^T\gamma^{t'-n}\Big[\tv^\pb_{t'}-\tq^\pb_{t'}\Big]\Big)\bigg]\\
=&\gamma^n\me_{n+1}\bigg[\Big(\sum_{t=0}^{n}\gbt\log\ptb{t}\Big)\Big(Q^\pb_n-\tq^\pb_n\Big)+\tv^\pb_{n}\Big(\sum_{t=0}^{n-1}\gbt\log\ptb{t}\Big)+\gbt\tv^\pb_{n}-\gbt\tq^\pb_{n}\bigg]. 
\end{align*}
Moreover,
\begin{align*}
\me_{n}[Y(n-1,n)|s_n]=&\me_{n}\bigg[\Big(\sum_{t=0}^{n-1}\gbt\log\ptb{t}\Big)\Big(\sum_{t'=n}^T\gamma^{t'}r_{t'}\Big)\bigg|s_n\bigg]+\me_{n}\bigg[\Big(\sum_{t=0}^{n-1}\gbt\log\ptb{t}\Big)\Big(\sum_{t'=n}^T\gamma^{t'}(\tv^\pb_{t'}-\tq^\pb_{t'})\Big)\bigg|s_n\bigg] \\
=&\gamma^n\Big(\sum_{t=0}^{n-1}\gbt\log\ptb{t}\Big)V^\pb_n. 
\end{align*}
Define sequence $Z(n)$ as
\begin{align*}
Z(n)=X(n)+Y(n-1,n).
\end{align*}
According to this definition, it is easy to verify that
$$
Z(n)=X(n+1)+Y(n,n).
$$
Besides, $Z(0)$ is exactly the DR-PG estimator at step 0 and $Z(T+1)=0$.
Next, we consider the variance of $Z(n)$ given $s_0,a_0,...s_{n-1},a_{n-1}$. Use the law of the total variance (since the expectation of each component is independent), we have that
\begin{align*}
\cov_n[Z(n)]=\me_n[\cov_{n+1}[Z(n)]]+\me_n[\cov_n(\me_{n+1}[Z(n)]|s_n)]+\cov_n(\me_n[Z(n)|s_n]).
\end{align*}
We take a look at these three terms one by one. 
\paragraph{The First Term}
\begin{align*}
\me_n[\cov_{n+1}[Z(n)]] =& \me_n[\cov_{n+1}[Z(n+1)+Y(n,n)-Y(n,n+1)]]\\
=&\me_n\Big[\cov_{n+1}\Big[Z(n+1)+\sum_{t=0}^{n}\Big\{\gbt\log\ptb{t}\Big[\gamma^{n}\Big(r_{n}+\tv^\pb_{n}-\tq^\pb_{n}\Big)\Big]\Big\}\\
&+\gamma^{n}\Big(\gbt\tv^\pb_{n}-\tv_{n}^\pb\gbt\log\ptb{{n}}-\gbt\tq^\pb_{n}\Big)\Big]\Big]\\
=&\me_n[\cov_{n+1}[Z(n+1)]]+\gamma^{2n}\me_n[\cov_{n+1}[r_n\sum_{t=0}^{n}\gbt\log\ptb{t}]]\\
=&\me_n[\cov_{n+1}[Z(n+1)]]+\gamma^{2n}\me_n\Big[\mv_{n+1}[r_n]\Big(\sum_{t=0}^{n}\gbt\log\ptb{t}\Big)\Big(\sum_{t=0}^{n}\gbt\log\ptb{t}\Big)\trans\Big].
\end{align*}
In the last but two equation, we dropped the terms that are deterministic conditioned on $s_0,a_0,...,s_n,a_n$, and used the fact that the randomness of reward is independent of the randomness in the transition. 
\paragraph{The Second Term}
\begin{align*}
\me_n[\cov_n(\me_{n+1}[Z(n)]|s_n)]=&\me_n[\cov_n(\me_{n+1}[X(n+1)+Y(n,n)]|s_n)]\\
=&\gamma^{2n}\me_n\Big[\cov_n\Big[\gbt Q^\pb_{n}-\gbt\tq^\pb_{n}+\Big(\sum_{t=0}^{n}\gbt\log\ptb{t}\Big)\Big(Q^\pb_n-\tq^\pb_n\Big)\\
&+\gbt\tv^\pb_{n}+\tv^\pb_{n}\Big(\sum_{t=0}^{n-1}\gbt\log\ptb{t}\Big)\Big|s_n\Big]\Big]\\
=&\gamma^{2n}\me_n\Big[\cov_n\Big[\gbt Q^\pb_{n}-\gbt\tq^\pb_{n}+\Big(\sum_{t=0}^{n}\gbt\log\ptb{t}\Big)\Big(Q^\pb_n-\tq^\pb_n\Big)\Big|s_n\Big]\Big].
\end{align*}
Similarly, in the last step we dropped the terms that are deterministic conditioned on $s_0,a_0,...,s_{n-1},a_{n-1}$.
\paragraph{The Third Term}

\begin{align*}
\cov_n(\me_n[Z(n)|s_n])=&\cov_n(\me_n[X(n)+Y(n-1,n)|s_n])\\
=&\gamma^{2n}\cov_n[\gbt V_n^\pb+\Big(\sum_{t=0}^{n-1}\gbt\log\ptb{t}\Big)V^\pb_n].
\end{align*}
After combining all the expressions, we have
\begin{align*}
\cov_n[Z(n)]=&\me_n[\cov_{n+1}[Z(n+1)]]+\gamma^{2n}\me_n\Big[\mv_{n+1}[r_n]\Big(\sum_{t=0}^{n}\gbt\log\ptb{t}\Big)\Big(\sum_{t=0}^{n}\gbt\log\ptb{t}\Big)\trans\Big]\\
&+\gamma^{2n}\me_n\Big[\cov_n\Big[\gbt Q^\pb_{n}-\gbt\tq^\pb_{n}+\Big(\sum_{t=0}^{n}\gbt\log\ptb{t}\Big)\Big(Q^\pb_n-\tq^\pb_n\Big)\Big|s_n\Big]\Big] \\ 
&+\gamma^{2n}\cov_n\Big[\gbt V_n^\pb+\Big(\sum_{t=0}^{n-1}\gbt\log\ptb{t}\Big)V^\pb_n\Big].
\end{align*}
The theorem follows by expanding this equation for $Z(0)$, which is the DR-PG estimator.
\end{proof}

\paragraph{Alternative Proof} Below we show an alternative proof to Theorem \ref{maintext:dr_cov_proof1} via recursion and induction.

\begin{lemma}[A Simple Recursion]\label{lem:dr_one_step_cov}
Recall the per-step version of the gradient
\begin{align*}
\gbt\hd^\pb_t =\gbt\tv_{t}^\pb + \frac{\gbt\Big(\ptb{t}[r_t + \gamma \hd^\pb_{t+1}-\tq_t^\pb]\Big)}{\ptb{t}}.
\end{align*}
Then we have
\begin{align*}
\cov_t[\gbt\hd^\pb_t]=&\gamma^2\me_t\Big[\cov_{t+1}\Big[\frac{\gbt\Big(\ptb{t}\hd^\pb_{t+1}\Big)}{\ptb{t}}\Big]\Big]\\
&+\me_t\Big[\cov_{t+1}\Big[r_t\gbt\log\ptb{t}\Big]\Big]+\cov_t[\gbt V_{t}^\pb]\\
&+\me_t\Big[\cov_t\Big[(Q_t^\pb-\tq_t^\pb)\gbt\log\ptb{t}+\gbt Q_t^\pb-\gbt\tq_t^\pb\Big|s_t\Big]\Big].
\end{align*}
\end{lemma}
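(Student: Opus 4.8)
The plan is to derive the recursion by applying the law of total (co)variance twice to the per-step identity for $\gbt\hd^\pb_t$, peeling off one layer of conditioning at a time. First I would regroup the recursion so that all the ``future'' randomness is packaged into a single clean object. Expanding the quotient by the product rule gives $\frac{\gbt(\ptb{t}\hd^\pb_{t+1})}{\ptb{t}} = \gbt\log\ptb{t}\,\hd^\pb_{t+1} + \gbt\hd^\pb_{t+1}$ (using $\gbt r_t = 0$), so the recursion can be written as
\[
\gbt\hd^\pb_t = \underbrace{\big(\gbt\tv_t^\pb - \gbt\log\ptb{t}\,\tq_t^\pb - \gbt\tq_t^\pb\big)}_{\text{det.\ given }(s_t,a_t)} + \underbrace{r_t\,\gbt\log\ptb{t}}_{\text{reward}} + \gamma\,\underbrace{\tfrac{\gbt(\ptb{t}\hd^\pb_{t+1})}{\ptb{t}}}_{\text{future}}.
\]

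Next I would apply the law of total covariance with the extra conditioning on $(s_t,a_t)$, namely $\cov_t[\gbt\hd^\pb_t] = \me_t[\cov_{t+1}[\gbt\hd^\pb_t]] + \cov_t[\me_{t+1}[\gbt\hd^\pb_t]]$. In the inner term $\me_t[\cov_{t+1}[\cdot]]$ the deterministic block drops out, leaving the covariance of the reward part plus the future part. Here I would invoke that $r_t$ is conditionally independent of $s_{t+1}$ (hence of everything after step $t$) given $(s_t,a_t)$, so the cross-covariance vanishes and this term splits exactly into $\me_t[\cov_{t+1}[r_t\gbt\log\ptb{t}]]$ and $\gamma^2\me_t[\cov_{t+1}[\frac{\gbt(\ptb{t}\hd^\pb_{t+1})}{\ptb{t}}]]$, which are the first two summands of the claim.

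For the remaining piece I would compute $\me_{t+1}[\gbt\hd^\pb_t]$. Using $\me_{t+1}[r_t + \gamma\hd^\pb_{t+1}] = Q_t^\pb$ and, crucially, $\me_{t+1}[\gamma\gbt\hd^\pb_{t+1}] = \gbt Q_t^\pb$ — the latter because $\gbt Q_t^\pb = \gamma\,\me_{t+1}[\gbt V_{t+1}^\pb]$ (rewards and transitions do not depend on $\btheta$) combined with the unbiasedness of DR-PG applied to the sub-trajectory started at $s_{t+1}$, which gives $\me_{t+1}[\gbt\hd^\pb_{t+1}] = \me_{t+1}[\gbt V_{t+1}^\pb]$ — I obtain $\me_{t+1}[\gbt\hd^\pb_t] = \gbt\tv_t^\pb + (Q_t^\pb-\tq_t^\pb)\gbt\log\ptb{t} + \gbt Q_t^\pb - \gbt\tq_t^\pb$. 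I would then apply the law of total covariance a second time, now conditioning on $s_t$: the inner term discards the $s_t$-measurable $\gbt\tv_t^\pb$ and produces exactly the term $\me_t[\cov_t[(Q_t^\pb-\tq_t^\pb)\gbt\log\ptb{t}+\gbt Q_t^\pb-\gbt\tq_t^\pb\mid s_t]]$, while the outer term requires the identity $\me_t[\me_{t+1}[\gbt\hd^\pb_t]\mid s_t] = \gbt V_t^\pb$, which yields $\cov_t[\gbt V_t^\pb]$.

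The main obstacle is that last identity. I would expand, via the product rule over $a_t\sim\pi_\btheta(s_t)$, both $\gbt V_t^\pb = \me_t[\gbt\log\ptb{t}\,Q_t^\pb + \gbt Q_t^\pb \mid s_t]$ and the analogous $\gbt\tv_t^\pb = \me_t[\gbt\log\ptb{t}\,\tq_t^\pb + \gbt\tq_t^\pb \mid s_t]$, and then verify that the $\tv/\tq$ contributions in $\me_t[\me_{t+1}[\gbt\hd^\pb_t]\mid s_t]$ cancel exactly against $\gbt\tv_t^\pb$, leaving only $\gbt V_t^\pb$. Collecting the four pieces then gives the stated recursion.
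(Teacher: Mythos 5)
Your proposal is correct and follows essentially the same route as the paper's proof: both decompositions condition first on $(s_t,a_t)$ and then on $s_t$, split the reward and transition randomness using their conditional independence, and use the unbiasedness of $\hd^\pb_{t+1}$ and $\gbt\hd^\pb_{t+1}$ to identify the conditional means with $Q_t^\pb$, $\gbt Q_t^\pb$, and $\gbt V_t^\pb$. The only difference is presentational --- you invoke the law of total covariance twice explicitly, whereas the paper carries out the same two-level decomposition by hand via the add-and-subtract terms $p_1$, $p_2$, $p_3$ (your $\me_{t+1}[\gbt\hd^\pb_t]$ is exactly the paper's $p_2$, and your centered residual is its $p_3$).
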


\begin{proof}
For simplicity, we will use $\trp{\bm v}$ to denote the outer product ${\bm v\bm v\trans}$, where ${\bm v}$ is the column vector. Then we have:

\begingroup
\allowdisplaybreaks
\begin{align*}
&\cov_t[\gbt \hd_t^\pb]=\me_t [\trp{\gbt \hd_t^\pb}]-\trp{\me_t[\gbt \hd_t^\pb]}\\
=&\me_t\Big[\trp{\underbrace{\frac{\gbt\Big(\ptb{t}\Big[r_t + \gamma \hd^\pb_{t+1}-Q_t^\pb\Big]\Big)}{\ptb{t}}}_{p1}+\underbrace{\gbt \tv_{t}^\pb +\frac{\gbt\Big(\ptb{t}\Big[Q_t^\pb-\tq_t^\pb\Big]\Big)}{\ptb{t}}}_{p2}}\Big]-\trp{\me_t[\gbt V_t^\pb]}\numberthis \label{dr:split1}\\
=&\me_t\Big[\trp{\underbrace{\frac{\gbt\Big(\ptb{t}\Big[r_t + \gamma \hd^\pb_{t+1}-Q_t^\pb\Big]\Big)}{\ptb{t}}}_{p1}}\Big]+\me_t\Big[\trp{\underbrace{\gbt \tv_{t}^\pb +\frac{\gbt\Big(\ptb{t}\Big[Q_t^\pb-\tq_t^\pb\Big]\Big)}{\ptb{t}}}_{p2}}\Big]-\trp{\me_t[\gbt V_t^\pb]}\numberthis\label{dr:split1result}\\
=&\gamma^2\me_t\Big[\cov_{t+1}\Big[\frac{\gbt\Big(\ptb{t}\hd^\pb_{t+1}\Big)}{\ptb{t}}\Big]\Big]+\me_t\Big[\cov_{t+1}[r_t\gbt\log\ptb{t}]\Big]-\trp{\me_t[\gbt V_t^\pb]}\\
&+\me_t\Big[\trp{
\gbt V_{t}^\pb + \underbrace{\gbt \tv_{t}^\pb-
\gbt V_{t}^\pb +\frac{\gbt\Big(\ptb{t}\Big[Q_t^\pb-\tq_t^\pb\Big]\Big)}{\ptb{t}}}_{p3}}\Big]\numberthis\label{dr:var1}\\
=&\gamma^2\me_t\Big[\cov_{t+1}[\frac{\gbt\Big(\ptb{t}\hd^\pb_{t+1}\Big)}{\ptb{t}}]\Big]+\me_t\Big[\cov_{t+1}[r_t\gbt\log\ptb{t}]\Big]+\me_t[\trp{
\gbt V_{t}^\pb}]-\trp{\me_t[\gbt V_t^\pb]}\\
&+\me_t\Big[\trp{\underbrace{\gbt \tv_{t}^\pb-
\gbt V_{t}^\pb +\frac{\gbt\Big(\ptb{t}\Big[Q_t^\pb-\tq_t^\pb\Big]\Big)}{\ptb{t}}}_{p3}}\Big]\numberthis\label{dr:split2}\\
=&\gamma^2\me_t\Big[\cov_{t+1}\Big[\frac{\gbt\Big(\ptb{t}\hd^\pb_{t+1}\Big)}{\ptb{t}}\Big]\Big]+\me_t[\cov_{t+1}[r_t\gbt\log\ptb{t}]]+\cov_t[\gbt V_{t}^\pb]+\underbrace{\me_t\Big[\cov_t\Big[\frac{\gbt[\ptb{t}(Q_t^\pb-\tq_t^\pb)]}{\ptb{t}}\Big|s_t\Big]\Big]}_{p4}.\numberthis\label{dr:recursive:final}\\
\end{align*}
\endgroup
From (\ref{dr:split1}) to (\ref{dr:split1result}), we use the fact
\begin{align*}
&\me_{t}\Big[\frac{\gbt\Big(\ptb{t}[r_t+\gamma\hd_{t+1}^\pb-Q_t^\pb]\Big)}{\ptb{t}}\Big|s_t,a_t\Big]\\
=&\me_{t}\Big[\Big(r_t+\gamma\hd_{t+1}^\pb-Q_t^\pb\Big)\frac{\gbt\ptb{t}}{\ptb{t}}\Big|s_t,a_t\Big]+\gamma\me_{t}[\Big(\gbt\hd_{t+1}^\pb-\gbt \me_{s_{t+1}}V_{t+1}^\pb\Big)\Big|s_t,a_t\Big]\\
=&\me_{t}\Big[\Big(r_t+\gamma V_{t+1}^\pb-Q_t^\pb\Big)\frac{\gbt\ptb{t}}{\ptb{t}}\Big|s_t,a_t\Big]+\gamma\me_{t}\Big[\Big(\gbt V_{t+1}^\pb-\gbt \me_{s_{t+1}}[V_{t+1}^\pb]\Big)\Big|s_t,a_t\Big]= \bm {0}.
\end{align*}
From (\ref{dr:split1result}) to (\ref{dr:var1}), we point out the first term is actually a variance term, and then split out the randomness of reward. From (\ref{dr:var1}) to (\ref{dr:split2}), we use the fact that $\me_t[p_3|s_t]=\bm 0$. As a result, $p_3$ can be considered as another covariance term and we use $p_4$ to represent it.
\end{proof}

\begin{lemma}[Generally Unbiased]\label{lem:generallyunbiased}
Given any $0\leq k\leq T$, for any $0\leq t \leq T-k$, we have
\begin{equation}
\me_{t+k}[\gbt\Big(\hd^\pb_{t+k}\prod_{t'=0}^{k-1}\ptb{t+t'}\Big)|s_{t+k}]=\gbt\Big(V^\pb_{t+k}\prod_{t'=0}^{k-1}\ptb{t+t'}\Big).\label{dr:prop1}
\end{equation}
\end{lemma}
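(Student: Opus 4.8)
The plan is to reduce the claim to two conditional-unbiasedness facts about the doubly robust estimator and then reassemble them with the product rule. Write $W\defeq\prod_{t'=0}^{k-1}\ptb{t+t'}$ for the product of policy probabilities in the statement. The crucial observation is that $W$ involves only the actions $a_t,\dots,a_{t+k-1}$ and states $s_t,\dots,s_{t+k-1}$, all of which are already fixed once we condition on $s_{t+k}$ (recall that $\me_{t+k}[\cdot\given s_{t+k}]$ conditions on $s_0,a_0,\dots,a_{t+k-1},s_{t+k}$). Hence both $W$ and $\gbt W$ are \emph{deterministic} under this conditioning and can be pulled outside the expectation.

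First I would apply the product rule inside the expectation, $\gbt(\hd^\pb_{t+k}\,W)=(\gbt\hd^\pb_{t+k})\,W+\hd^\pb_{t+k}\,(\gbt W)$, take $\me_{t+k}[\cdot\given s_{t+k}]$, and use the determinism of $W,\gbt W$ to get
\[
\me_{t+k}\big[\gbt(\hd^\pb_{t+k}W)\given s_{t+k}\big]=W\,\me_{t+k}[\gbt\hd^\pb_{t+k}\given s_{t+k}]+(\gbt W)\,\me_{t+k}[\hd^\pb_{t+k}\given s_{t+k}].
\]
The two facts I then need are (i) $\me_{t+k}[\hd^\pb_{t+k}\given s_{t+k}]=V^\pb_{t+k}$ and (ii) $\me_{t+k}[\gbt\hd^\pb_{t+k}\given s_{t+k}]=\gbt V^\pb_{t+k}$. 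Substituting these and reading the product rule backwards yields exactly $\gbt(V^\pb_{t+k}W)$, which is the claim. Fact (i) is the conditional unbiasedness of DR, which I would establish by backward induction on the time index using the on-policy recursion $\hd^\pb_n=\tv^\pb_n+r_n+\gamma\hd^\pb_{n+1}-\tq^\pb_n$ (the importance ratios equal $1$ on-policy): the approximate-value terms cancel since $\me[\tq^\pb_n\given s_n]=\tv^\pb_n$, leaving the Bellman recursion whose solution is $V^\pb$.

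The main obstacle is fact (ii), the conditional unbiasedness of the DR \emph{gradient}. This is precisely the $k=0$ instance of the present lemma, so I can either cite Eq.\eqref{drpg:maintext:expxn_sn} (noting that the $X(n)$ there equals $\gamma^n\gbt\hd^\pb_n$, which gives $\me_n[\gbt\hd^\pb_n\given s_n]=\gbt V^\pb_n$ directly), or prove it self-contained by backward induction on the time index. For the latter I would differentiate the per-step recursion $\gbt\hd^\pb_t=\gbt\tv^\pb_t+\gbt\log\ptb{t}\,(r_t+\gamma\hd^\pb_{t+1}-\tq^\pb_t)+\gamma\gbt\hd^\pb_{t+1}-\gbt\tq^\pb_t$ and take $\me_t[\cdot\given s_t]$, using fact (i) to collapse $\me[r_t+\gamma\hd^\pb_{t+1}-\tq^\pb_t\given s_t,a_t]=Q^\pb_t-\tq^\pb_t$ and the induction hypothesis (with the tower rule) for the $\gamma\gbt\hd^\pb_{t+1}$ term.

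The delicate part is the bookkeeping of the product-rule expansions. Expanding $\gbt\tv^\pb_t=\sum_a\big(\gbt\pb(a|s_t)\big)\tq^\pb(s_t,a)+\me_a[\gbt\tq^\pb]$ and the Bellman identity $\gbt V^\pb_t=\sum_a\big(\gbt\pb(a|s_t)\big)Q^\pb(s_t,a)+\gamma\,\me_a\me_{s'}[\gbt V^\pb(s')]$ (where $\me_a$ is over $a\sim\pb(s_t)$ and $\me_{s'}$ over $s'\sim P(s_t,a)$), the two $\me_a[\gbt\tq^\pb]$ contributions cancel, while the $\tq^\pb$ score terms combine with the $(Q^\pb-\tq^\pb)$ score term to reconstruct $\sum_a\big(\gbt\pb(a|s_t)\big)Q^\pb(s_t,a)$, and the remainder matches $\gamma\,\me_a\me_{s'}[\gbt V^\pb(s')]$; together these give $\gbt V^\pb_t$. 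Once (i) and (ii) are in hand, the product-rule assembly in the second paragraph is immediate, and the convention that the empty product is $1$ handles the $k=0$ edge case.
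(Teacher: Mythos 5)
Your proposal is correct and follows essentially the same route as the paper's proof: apply the product rule, note that $\prod_{t'=0}^{k-1}\ptb{t+t'}$ and its gradient are deterministic under the conditioning, invoke the conditional unbiasedness of $\hd^\pb_{t+k}$ and of $\gbt\hd^\pb_{t+k}$, and reassemble via the product rule. The only difference is that you also sketch backward-induction proofs of the two unbiasedness facts, which the paper simply asserts (citing that both $\hd^\pb_t$ and $\gbt\hd^\pb_t$ are unbiased), and your sketches check out.
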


\begin{proof}
\begin{align*}
&\me_{t+k}[\gbt\Big(\hd^\pb_{t+k}\prod_{t'=0}^{k-1}\ptb{t+t'}\Big)|s_{t+k}]\\
=&\me_{t+k}[\hd^\pb_{t+k}\gbt\prod_{t'=0}^{k-1}\ptb{t+t'}|s_{t+k}] + \me_{t+k}[\Big(\prod_{t'=0}^{k-1}\ptb{t+t'}\Big)\gbt\hd^\pb_{t+k}|s_{t+k}]\\
=&V^\pb_{t+k}\gbt\prod_{t'=0}^{k-1}\ptb{t+t'} + \Big(\prod_{t'=0}^{k-1}\ptb{t+t'}\Big)\gbt V^\pb_{t+k}\\
=&\gbt\Big(V^\pb_{t+k}\prod_{t'=0}^{k-1}\ptb{t+t'}\Big).
\end{align*}
where we use the fact that both $\hd^\pb_t$ and $\gbt\hd^\pb_t$ are unbiased for any $0\leq t\leq T$.
\end{proof}

\begin{lemma}[General Gradient Estimator]\label{lemma:generalEstimator}
Given any $0\leq k\leq T$, for any $0\leq t \leq T-k$, we have:
\begin{align*}
&\frac{\gbt\Big(\hd_{t+k}^\pb\prod_{t'=0}^{k-1}\ptb{t+t'}\Big)}{\prod_{t'=0}^{k-1}\ptb{t+t'}}\\
&\quad\quad\quad=\frac{\gbt\Big(\Big[r_{t+k}+\gamma\hd^\pb_{t+k+1}-\tq^\pb_{t+k}\Big]\prod_{t'=0}^{k}\ptb{t+t'}\Big)}{\prod_{t'=0}^{k}\ptb{t+t'}}+\frac{\gbt(\tv_{t+k}^\pb\prod_{t'=0}^{k-1}\ptb{t+t'})}{\prod_{t'=0}^{k-1}\ptb{t+t'}}.\numberthis\label{eq:generalEstimator}
\end{align*}
\end{lemma}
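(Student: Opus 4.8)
The plan is to derive the $k$-step identity \eqref{eq:generalEstimator} by a direct substitution of the recursive definition of DR, in the same spirit as the one-step gradient formula stated at the top of Lemma~\ref{lem:dr_one_step_cov} (indeed, the $k=0$ case, for which both cumulative products are empty, is exactly that one-step formula). Throughout I write $m := t+k$, and I keep in mind the distinction emphasised in the paper between \emph{target} quantities carrying the superscript $\pb$ (here $\hd^\pb_{m}$, $\tv^\pb_{m}$, $\tq^\pb_{m}$), which vary with $\btheta$ under $\gbt$, and the \emph{behaviour} probability $\pi_m = \pi(a_m|s_m)$ sitting in the denominator of the importance weight, which is held constant during differentiation and is only set equal to $\ptb{m}$ afterwards, at the on-policy evaluation point.

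First I would substitute the recursive form $\hd^\pb_{m} = \tv^\pb_{m} + \frac{\ptb{m}}{\pi_m}\big(r_{m} + \gamma\hd^\pb_{m+1} - \tq^\pb_{m}\big)$ into the numerator $\hd^\pb_{m}\prod_{t'=0}^{k-1}\ptb{t+t'}$ appearing on the left-hand side. The factor $\ptb{m}$ produced by the recursion has index $m = t+k$, one higher than the top index $m-1$ of the cumulative product $\prod_{t'=0}^{k-1}\ptb{t+t'}$, so the two merge into $\prod_{t'=0}^{k}\ptb{t+t'}$; the numerator thereby splits as $\tv^\pb_{m}\prod_{t'=0}^{k-1}\ptb{t+t'} + \frac{1}{\pi_m}\big(r_{m} + \gamma\hd^\pb_{m+1} - \tq^\pb_{m}\big)\prod_{t'=0}^{k}\ptb{t+t'}$. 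Applying $\gbt$ and using its linearity separates these into two gradients, and because $\pi_m$ is constant in $\btheta$ the factor $1/\pi_m$ passes outside the gradient of the residual term.

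It remains to divide by the left-hand denominator $\prod_{t'=0}^{k-1}\ptb{t+t'}$ and carry out the on-policy substitution. The value gradient $\gbt\big(\tv^\pb_{m}\prod_{t'=0}^{k-1}\ptb{t+t'}\big)$, once divided by $\prod_{t'=0}^{k-1}\ptb{t+t'}$, is already the second term of the stated right-hand side. The residual gradient carries the prefactor $\frac{1}{\pi_m\prod_{t'=0}^{k-1}\ptb{t+t'}}$; setting $\pi_m = \ptb{m}$ turns $\pi_m\prod_{t'=0}^{k-1}\ptb{t+t'}$ into $\prod_{t'=0}^{k}\ptb{t+t'}$, which reproduces the first term of the right-hand side exactly. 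The only point requiring care---and the step I would treat as the main obstacle, though it is a bookkeeping matter rather than a conceptual one---is keeping the fixed behaviour factor $\pi_m$ separate from the differentiable target factor $\ptb{m}$ and performing the substitution $\pi_m = \ptb{m}$ only after differentiating; this is precisely the mechanism that makes the denominators in \eqref{eq:generalEstimator} come out as cumulative products of the target probabilities $\ptb{t+t'}$ rather than of the behaviour policy.
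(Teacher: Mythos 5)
Your proof is correct and follows essentially the same route as the paper's: both verify the identity directly from the DR recursion combined with the product rule for $\gbt$, arriving at the same two-term decomposition. The only difference is ordering --- the paper first applies the product rule to the left-hand side and then substitutes the one-step gradient formula for $\gbt\hd^\pb_{t+k}$ (regrouping the resulting $\gbt\log\ptb{t+t'}$ sums back into gradient-of-product ratios), whereas you substitute the recursion into the numerator before differentiating, which merges the index-$(t+k)$ factor into the cumulative product and makes the split immediate.
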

\begin{proof}
\begingroup
\allowdisplaybreaks
\begin{align*}
&\frac{\gbt\Big(\hd_{t+k}^\pb\prod_{t'=0}^{k-1}\ptb{t+t'}\Big)}{\prod_{t'=0}^{k-1}\ptb{t+t'}}\\
=&\gbt \hd^\pb_{t+k}+\hd_{t+k}^\pb\frac{\gbt\Big(\prod_{t'=0}^{k-1}\ptb{t+t'}\Big)}{\prod_{t'=0}^{k-1}\ptb{t+t'}}\\
=&\gbt\tv_{t+k}^\pb + [r_{t+k} + \gamma \hd^\pb_{t+k+1}-\tq_{t+k}^\pb]\frac{\gbt\Big(\ptb{t+k}\Big)}{\ptb{t+k}}+\gbt[r_{t+k} + \gamma \hd^\pb_{t+k+1}-\tq_{t+k}^\pb]\\
&+[\tv_{t+k}^\pb+r_{t+k} + \gamma \hd^\pb_{t+k+1}-\tq_{t+k}^\pb]\frac{\gbt\Big(\prod_{t'=0}^{k-1}\ptb{t+t'}\Big)}{\prod_{t'=0}^{k-1}\ptb{t+t'}}\\
=&[r_{t+k} + \gamma \hd^\pb_{t+k+1}-\tq_{t+k}^\pb]\Big(\sum_{t'=0}^{k-1}\gbt\log\ptb{t+t'}+\gbt\log\ptb{t+k}\Big)+\gbt[r_{t+k}+\gamma \hd^\pb_{t+k+1}-\tq_{t+k}^\pb]\\
&+\gbt\tv_{t+k}^\pb + \tv_{t+k}^\pb\frac{\gbt\Big(\prod_{t'=0}^{k-1}\ptb{t+t'}\Big)}{\prod_{t'=0}^{k-1}\ptb{t+t'}}\\
=&[r_{t+k} + \gamma \hd^\pb_{t+k+1}-\tq_{t+k}^\pb]\frac{\gbt\Big(\prod_{t'=0}^{k}\ptb{t+t'}\Big)}{\prod_{t'=0}^{k}\ptb{t+t'}}+\frac{\prod_{t'=0}^{k}\ptb{t+t'}}{\prod_{t'=0}^{k}\ptb{t+t'}}\gbt[r_{t+k} + \gamma \hd^\pb_{t+k+1}-\tq_{t+k}^\pb]\\
&+\frac{\prod_{t'=0}^{k-1}\ptb{t+t'}}{\prod_{t'=0}^{k-1}\ptb{t+t'}}\gbt\tv_{t+k}^\pb + \tv_{t+k}^\pb\frac{\gbt\Big(\prod_{t'=0}^{k-1}\ptb{t+t'}\Big)}{\prod_{t'=0}^{k-1}\ptb{t+t'}}\\
=&\frac{\gbt\Big(\Big[r_{t+k}+\gamma\hd^\pb_{t+k+1}-\tq^\pb_{t+k}\Big]\prod_{t'=0}^{k}\ptb{t+t'}\Big)}{\prod_{t'=0}^{k}\ptb{t+t'}}+\frac{\gbt(\tv_{t+k}^\pb\prod_{t'=0}^{k-1}\ptb{t+t'})}{\prod_{t'=0}^{k-1}\ptb{t+t'}}.
\end{align*}
\endgroup
\end{proof}

\begin{lemma}[The General Recursion]\label{proexpand}
Given any $0\leq k\leq T$, for any $0\leq t \leq T-k$, we have
\begin{align*}
&\cov_{t+k}\Big[\frac{\gbt\Big(\hd_{t+k}^\pb\prod_{t'=0}^{k-1}\ptb{t+t'}\Big)}{\prod_{t'=0}^{k-1}\ptb{t+t'}}\Big]\\
=&\gamma^2\me_{t+k}\Big[\cov_{t+k+1}\Big[\frac{\gbt\Big(\hd_{t+k+1}^\pb\prod_{t'=0}^{k}\ptb{t+t'}\Big)}{\prod_{t'=0}^{k}\ptb{t+t'}}\Big]\Big] + \me_{t+k}\Big[\cov_{t+k+1}\Big[\frac{r_{t+k}\gbt \prod_{t'=0}^{k}\ptb{t+t'}}{\prod_{t'=0}^{k}\ptb{t+t'}}\Big]\Big]\\
&+\me_{t+k}\Big[\cov_{t+k}\Big[\frac{\gbt\Big(\Big[Q^\pb_{t+k}-\tq^\pb_{t+k}\Big]\prod_{t'=0}^{k}\ptb{t+t'}\Big)}{\prod_{t'=0}^{k}\ptb{t+t'}}\Big|s_{t+k}\Big]\Big] +\cov_{t+k}\Big[\frac{\gbt\Big(V_{t+k}^\pb\prod_{t'=0}^{k-1}\ptb{t+t'}\Big)}{\prod_{t'=0}^{k-1}\ptb{t+t'}}\Big].\numberthis\label{dr:prop2}
\end{align*}
\end{lemma}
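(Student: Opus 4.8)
The plan is to prove this as the naturally reweighted generalization of the one-step recursion in Lemma~\ref{lem:dr_one_step_cov} (to which it reduces when $k=0$), carrying the extra weight $\Pi_k := \prod_{t'=0}^{k-1}\ptb{t+t'}$ through the whole argument and invoking the reweighted analogues in Lemmas~\ref{lem:generallyunbiased} and \ref{lemma:generalEstimator} in place of the per-step recursion and unbiasedness used there. Write $W := \gbt(\hd_{t+k}^\pb\,\Pi_k)/\Pi_k$ for the random vector whose conditional covariance $\cov_{t+k}[W]$ must be expanded, and set $\Pi_{k+1} := \Pi_k\,\ptb{t+k}$.

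First I would apply Lemma~\ref{lemma:generalEstimator} to split $W = A + B$ with $A := \gbt([r_{t+k}+\gamma\hd^\pb_{t+k+1}-\tq^\pb_{t+k}]\Pi_{k+1})/\Pi_{k+1}$ and $B := \gbt(\tv^\pb_{t+k}\Pi_k)/\Pi_k$, and then further write $A = P_1 + C$ by inserting the true value: $P_1$ carries the residual $r_{t+k}+\gamma\hd^\pb_{t+k+1}-Q^\pb_{t+k}$ and $C := \gbt([Q^\pb_{t+k}-\tq^\pb_{t+k}]\Pi_{k+1})/\Pi_{k+1}$. The crucial fact, argued as in Lemma~\ref{lem:dr_one_step_cov}, is $\me_{t+k+1}[P_1] = \bm 0$: conditioning on $(s_{t+k},a_{t+k})$ turns $\Pi_{k+1}$, $Q^\pb_{t+k}$ and $\gbt\log\Pi_{k+1}$ into constants; the scalar residual has zero conditional mean because $\me[r_{t+k}+\gamma V^\pb_{t+k+1}\mid s_{t+k},a_{t+k}]=Q^\pb_{t+k}$ together with $\me[\hd^\pb_{t+k+1}\mid s_{t+k+1}]=V^\pb_{t+k+1}$ (Lemma~\ref{lem:generallyunbiased} at $k=0$); and the gradient part vanishes because $\gbt Q^\pb_{t+k}=\gamma\,\me[\gbt V^\pb_{t+k+1}\mid s_{t+k},a_{t+k}]$ combined with the unbiasedness of $\gbt\hd^\pb$. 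By contrast $B$ is deterministic given $s_{t+k}$, and $C$ is deterministic given $(s_{t+k},a_{t+k})$.

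With those measurability facts in hand, I would apply the three-level law of total variance across $\me_{t+k}\subseteq\me_{t+k}[\cdot\mid s_{t+k}]\subseteq\me_{t+k+1}$, namely $\cov_{t+k}[W]=\me_{t+k}[\cov_{t+k+1}[W]]+\me_{t+k}[\cov_{t+k}[\me_{t+k+1}[W]\mid s_{t+k}]]+\cov_{t+k}[\me_{t+k}[W\mid s_{t+k}]]$ (the same decomposition as in the concise proof of Theorem~\ref{maintext:dr_cov_proof1}), and match the three pieces to the right-hand side. For the innermost term, $B$ and $C$ are constants under $\cov_{t+k+1}$, so $\cov_{t+k+1}[W]=\cov_{t+k+1}[P_1]$; expanding $P_1$ and discarding the $(s_{t+k},a_{t+k})$-measurable constants leaves $r_{t+k}\gbt\log\Pi_{k+1}+\gamma\,\gbt(\hd^\pb_{t+k+1}\Pi_{k+1})/\Pi_{k+1}$, and since the reward noise $r_{t+k}$ is conditionally independent of the transition (hence of $\hd^\pb_{t+k+1}$ and $\gbt\hd^\pb_{t+k+1}$) the cross term drops, yielding the reward term $\me_{t+k}[\cov_{t+k+1}[r_{t+k}\gbt\Pi_{k+1}/\Pi_{k+1}]]$ and the factor-$\gamma^2$ next-step term. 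For the middle term, $\me_{t+k+1}[W]=B+C$, and as $B$ is $s_{t+k}$-measurable it falls out of the inner conditional covariance, leaving precisely the $Q^\pb_{t+k}-\tq^\pb_{t+k}$ term. For the outermost term, Lemma~\ref{lem:generallyunbiased} gives $\me_{t+k}[W\mid s_{t+k}]=\gbt(V^\pb_{t+k}\Pi_k)/\Pi_k$ directly, which is the remaining value-function term.

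I expect the main obstacle to be bookkeeping rather than conceptual. The delicate step is verifying the conditional-mean-zero property of $P_1$, since it is exactly where the two \emph{separate} unbiasedness statements for $\hd^\pb$ and for $\gbt\hd^\pb$, and the differentiated Bellman identity $\gbt Q^\pb_{t+k}=\gamma\,\me[\gbt V^\pb_{t+k+1}\mid s_{t+k},a_{t+k}]$, are simultaneously needed; after that one must track which of $P_1,B,C$ is measurable with respect to each of the three nested conditionings so that the cross terms genuinely vanish and the weights $\Pi_k,\Pi_{k+1}$ align with those appearing in the statement.
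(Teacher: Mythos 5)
Your proposal is correct and takes essentially the same route as the paper's proof: the same split of $W$ via Lemma~\ref{lemma:generalEstimator} into the residual part $P_1$ (with $\me_{t+k+1}[P_1]=\bm 0$ by the Bellman identity, its gradient, and Lemma~\ref{lem:generallyunbiased}), the $Q^\pb-\tq^\pb$ part, and the $\tv^\pb$ part, with the reward/transition conditional independence isolating the reward term. The only difference is presentational: you organize the bookkeeping through the explicit three-level law of total variance (as in the paper's concise proof of Theorem~\ref{maintext:dr_cov_proof1}) and drop the paper's vestigial induction on $k$, whereas the appendix proof expands $\me_{t+k}[\bm v\bm v\trans]-\me_{t+k}[\bm v]\me_{t+k}[\bm v]\trans$ directly and kills the cross terms one at a time.
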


\begin{proof}
We use induction here. From Lemma \ref{lem:dr_one_step_cov}, we know (\ref{dr:prop2}) holds for any $t$ when $k=0$. Suppose we already know that when $k=k'-1$, it holds for any $0\leq t\leq T-k'+1$, next we will prove that when $k=k'$, it holds for $0\leq t\leq T-k'$. Since the limit of space, given a column vector ${\bm v}$, we use $\trp{{\bm v}}$ to denote ${\bm v}{\bm v}\trans$.
\begingroup
\allowdisplaybreaks
\begin{align*}
&\cov_{t+k'}\Big[\frac{\gbt\Big(\hd_{t+k'}^\pb\prod_{t'=0}^{k'-1}\ptb{t+t'}\Big)}{\prod_{t'=0}^{k'-1}\ptb{t+t'}}\Big]=\me_{t+k'}\Big[\trp{\frac{\gbt\Big(\hd_{t+k'}^\pb\prod_{t'=0}^{k'-1}\ptb{t+t'}\Big)}{\prod_{t'=0}^{k'-1}\ptb{t+t'}}}\Big]-\trp{\me_{t+k'}\Big[\frac{\gbt\hd_{t+k'}^\pb\prod_{t'=0}^{k'-1}\ptb{t+t'}}{\prod_{t'=0}^{k'-1}\ptb{t+t'}}\Big]}\\
=&\me_{t+k'}\Big[\trp{\frac{\gbt\Big(\Big[r_{t+k'}+\gamma\hd^\pb_{t+k'+1}-\tq^\pb_{t+k'}\Big]\prod_{t'=0}^{k'}\ptb{t+t'}\Big)}{\prod_{t'=0}^{k'}\ptb{t+t'}}+\frac{\gbt(\tv_{t+k'}^\pb\prod_{t'=0}^{k'-1}\ptb{t+t'})}{\prod_{t'=0}^{k'-1}\ptb{t+t'}}}\Big]\\
&-\trp{\me_{t+k'}\Big[\frac{\gbt\hd_{t+k'}^\pb\prod_{t'=0}^{k'-1}\ptb{t+t'}}{\prod_{t'=0}^{k'-1}\ptb{t+t'}}]}\numberthis\label{dr:expand_second_eq}\\
=&\me_{t+k'}[\trp{\underbrace{\frac{\gbt\Big(\Big[r_{t+k'}+\gamma\hd^\pb_{t+k'+1}-Q^\pb_{t+k'}\Big]\prod_{t'=0}^{k'}\ptb{t+t'}\Big)}{\prod_{t'=0}^{k'}\ptb{t+t'}}}_{p1}\\
&+\underbrace{\frac{\gbt(\tv_{t+k'}^\pb\prod_{t'=0}^{k'-1}\ptb{t+t'})}{\prod_{t'=0}^{k'-1}\ptb{t+t'}}+\frac{\gbt([Q_{t+k'}^\pb-\tq_{t+k'}^\pb]\prod_{t'=0}^{k'}\ptb{t+t'})}{\prod_{t'=0}^{k'}\ptb{t+t'}}}_{p2}}\Big]-\trp{\me_{t+k'}\Big[\frac{\gbt\hd_{t+k'}^\pb\prod_{t'=0}^{k'-1}\ptb{t+t'}}{\prod_{t'=0}^{k'-1}\ptb{t+t'}}\Big]}\numberthis\label{dr:expandp1p2}\\
=&\me_{t+k'}\Big[\trp{\underbrace{\frac{\gbt\Big(\Big[r_{t+k'}+\gamma\hd^\pb_{t+k'+1}-Q^\pb_{t+k'}\Big]\prod_{t'=0}^{k'}\ptb{t+t'}\Big)}{\prod_{t'=0}^{k'}\ptb{t+t'}}}_{p1}}\Big]\\
&+\me_{t+k'}\Big[\trp{\underbrace{\frac{\gbt(\tv_{t+k'}^\pb\prod_{t'=0}^{k'-1}\ptb{t+t'})}{\prod_{t'=0}^{k'-1}\ptb{t+t'}}+\frac{\gbt([Q_{t+k'}^\pb-\tq_{t+k'}^\pb]\prod_{t'=0}^{k'}\ptb{t+t'})}{\prod_{t'=0}^{k'}\ptb{t+t'}}}_{p2}}\Big]-\trp{\me_{t+k'}\Big[\frac{\gbt\hd_{t+k'}^\pb\prod_{t'=0}^{k'-1}\ptb{t+t'}}{\prod_{t'=0}^{k'-1}\ptb{t+t'}}\Big]}\numberthis\label{dr:expandp1p2div}\\
=&\gamma^2\me_{t+k'}\Big[\cov_{t+k'+1}\Big[\frac{\gbt\Big(\hd^\pb_{t+k'+1}\prod_{t'=0}^{k'}\ptb{t+t'}\Big)}{\prod_{t'=0}^{k'}\ptb{t+t'}}\Big]\Big] + \me_{t+k'}\Big[\cov_{t+k'+1}\Big[\frac{\gbt\Big(r_{t+k'}\prod_{t'=0}^{k'}\ptb{t+t'}\Big)}{\prod_{t'=0}^{k'}\ptb{t+t'}}\Big]\Big]\\
&+\me_{t+k'}\Big[\trp{\underbrace{\frac{\gbt([\tv_{t+k'}^\pb-V_{t+k'}^\pb]\prod_{t'=0}^{k'-1}\ptb{t+t'})}{\prod_{t'=0}^{k'-1}\ptb{t+t'}}+\frac{\gbt([Q_{t+k'}^\pb-\tq_{t+k'}^\pb]\prod_{t'=0}^{k'}\ptb{t+t'})}{\prod_{t'=0}^{k'}\ptb{t+t'}}}_{p3}+\frac{\gbt(V_{t+k'}^\pb\prod_{t'=0}^{k'-1}\ptb{t+t'})}{\prod_{t'=0}^{k'-1}\ptb{t+t'}}}\Big]\\
&-\trp{\me_{t+k'}\Big[\frac{\gbt\hd_{t+k'}^\pb\prod_{t'=0}^{k'-1}\ptb{t+t'}}{\prod_{t'=0}^{k'-1}\ptb{t+t'}}\Big]}\numberthis\label{dr:expandp3}\\
=&\gamma^2\me_{t+k'}\Big[\cov_{t+k'+1}\Big[\frac{\gbt\Big(\hd^\pb_{t+k'+1}\prod_{t'=0}^{k'}\ptb{t+t'}\Big)}{\prod_{t'=0}^{k'}\ptb{t+t'}}\Big]\Big] + \me_{t+k'}\Big[\cov_{t+k'+1}\Big[\frac{\gbt\Big(r_{t+k'}\prod_{t'=0}^{k'}\ptb{t+t'}\Big)}{\prod_{t'=0}^{k'}\ptb{t+t'}}\Big]\Big]\\
&+\me_{t+k'}\Big[\trp{\underbrace{\frac{\gbt([\tv_{t+k'}^\pb-V_{t+k'}^\pb]\prod_{t'=0}^{k'-1}\ptb{t+t'})}{\prod_{t'=0}^{k'-1}\ptb{t+t'}}+\frac{\gbt([Q_{t+k'}^\pb-\tq_{t+k'}^\pb]\prod_{t'=0}^{k'}\ptb{t+t'})}{\prod_{t'=0}^{k'}\ptb{t+t'}}}_{p3}}\Big]\\
&+\me_{t+k'}\Big[\trp{\frac{\gbt(V_{t+k'}^\pb\prod_{t'=0}^{k'-1}\ptb{t+t'})}{\prod_{t'=0}^{k'-1}\ptb{t+t'}}}]-\trp{\me_{t+k'}\Big[\frac{\gbt\hd_{t+k'}^\pb\prod_{t'=0}^{k'-1}\ptb{t+t'}}{\prod_{t'=0}^{k'-1}\ptb{t+t'}}\Big]}\numberthis\label{dr:expandp3div}\\
=&\gamma^2\me_{t+k'}\Big[\cov_{t+k'+1}\Big[\frac{\gbt\Big(\hd^\pb_{t+k'+1}\prod_{t'=0}^{k'}\ptb{t+t'}\Big)}{\prod_{t'=0}^{k'}\ptb{t+t'}}\Big]\Big] + \me_{t+k'}\Big[\cov_{t+k'+1}\Big[\frac{\gbt\Big(r_{t+k'}\prod_{t'=0}^{k'}\ptb{t+t'}\Big)}{\prod_{t'=0}^{k'}\ptb{t+t'}}\Big]\Big]\\
&+\me_{t+k'}\Big[\cov_{t+k'}\Big[\frac{\gbt([Q_{t+k'}^\pb-\tq_{t+k'}^\pb]\prod_{t'=0}^{k'}\ptb{t+t'})}{\prod_{t'=0}^{k'}\ptb{t+t'}}\Big|s_{t+k'}\Big]\Big]+\cov_{t+k'}[\frac{\gbt(V_{t+k'}^\pb\prod_{t'=0}^{k'-1}\ptb{t+t'})}{\prod_{t'=0}^{k'-1}\ptb{t+t'}}].\numberthis\label{dr:final}\\
\end{align*}
\endgroup
To obtain (\ref{dr:expand_second_eq}), we use \eqref{eq:generalEstimator} in Lemma \ref{lemma:generalEstimator}. From (\ref{dr:expandp1p2}) to (\ref{dr:expandp1p2div}), we use following resulting from Lemma \ref{lem:generallyunbiased}:
\begin{align*}
&\me_{t+k'}[p_1|s_{t+k'},a_{t+k'}]\\
=&\me_{t+k'+1}\Big[\frac{\gbt\Big(\Big[r_{t+k'}+\gamma\hd^\pb_{t+k'+1}-Q^\pb_{t+k'}\Big]\prod_{t'=0}^{k'}\ptb{t+t'}\Big)}{\prod_{t'=0}^{k'}\ptb{t+t'}}\Big]\\
=&\me_{t+k'+1}\Big[\frac{\gbt\Big(\Big[r_{t+k'}+\gamma V^\pb_{t+k'+1}-Q^\pb_{t+k'}\Big]\prod_{t'=0}^{k'}\ptb{t+t'}\Big)}{\prod_{t'=0}^{k'}\ptb{t+t'}}\Big]\\
=&\bm 0.\\
\end{align*}
and from (\ref{dr:expandp3}) to (\ref{dr:expandp3div}), we use the fact that
\begin{align*}
&\me_{t+k'}[p_3|s_{t+k'}]\\
=&\me_{t+k'}\Big[\frac{\gbt([\tv_{t+k'}^\pb-V_{t+k'}^\pb]\prod_{t'=0}^{k'-1}\ptb{t+t'})}{\prod_{t'=0}^{k'-1}\ptb{t+t'}}+\frac{\gbt([Q_{t+k'}^\pb-\tq_{t+k'}^\pb]\prod_{t'=0}^{k'}\ptb{t+t'})}{\prod_{t'=0}^{k'}\ptb{t+t'}}\Big|s_{t+k'}\Big]\\
=&\sum_{a_{t+k'}}\ptb{t+k'}\frac{\gbt([\tv_{t+k'}^\pb-V_{t+k'}^\pb]\prod_{t'=0}^{k'-1}\ptb{t+t'})}{\prod_{t'=0}^{k'-1}\ptb{t+t'}}+\sum_{a_{t+k'}}\ptb{t+k'}\frac{\gbt([Q_{t+k'}^\pb-\tq_{t+k'}^\pb]\prod_{t'=0}^{k'}\ptb{t+t'})}{\prod_{t'=0}^{k'}\ptb{t+t'}}\\
=&\frac{\gbt([\tv_{t+k'}^\pb-V_{t+k'}^\pb]\prod_{t'=0}^{k'-1}\ptb{t+t'})}{\prod_{t'=0}^{k'-1}\ptb{t+t'}}+\frac{\gbt(\Big[\sum_{a_{t+k'}}\ptb{t+k'}(Q_{t+k'}^\pb-\tq_{t+k'}^\pb)\Big]\prod_{t'=0}^{k'-1}\ptb{t+t'})}{\prod_{t'=0}^{k'-1}\ptb{t+t'}}\\
=&\bm 0.
\end{align*}
\end{proof}

\begin{theorem}[Restatement of Theorem \ref{maintext:dr_cov_proof1}]The covariance matrix of the estimator Eq.(\ref{drpg:maintext:funcQ}) is
\begin{align*}
\cov[\hd_0^\pb]=&\me\Bigg[\sum_{t=0}^T\gamma^{2t}\Bigg(\mv_{t+1}[r_t]\Big(\sum_{t'=0}^{t}\gbt\log\ptb{t'}\Big)\Big(\sum_{t'=0}^{t}\gbt\log\ptb{t'}\Big)\trans\\
&\quad\quad\quad\quad+\cov_t\Bigg[\gbt Q^\pb_{t}-\gbt\tq^\pb_{t}+\Big(\sum_{t'=0}^{t}\gbt\log\ptb{t'}\Big)\Big(Q^\pb_t-\tq^\pb_t\Big)\Bigg|s_t\Bigg]\\
&\quad\quad\quad\quad+\cov_t\Bigg[\gbt V_t^\pb+\Big(\sum_{t'=0}^{t-1}\gbt\log\ptb{t'}\Big)V^\pb_t\Bigg]\Bigg)\Bigg].\numberthis\label{dr:fullyexpansion2}
\end{align*}
\end{theorem}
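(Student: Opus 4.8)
The plan is to obtain the covariance by unrolling the one-step recursion of Lemma~\ref{proexpand} along the whole trajectory. Observe first that the DR-PG estimator in Eq.~\eqref{drpg:maintext:funcQ} is exactly $\gbt\hd_0^\pb$, so the quantity we want is $\cov[\hd_0^\pb]=\cov_0[\gbt\hd_0^\pb]$, which is the $k=0$, $t=0$ instance of
\[
C(k) := \cov_k\Big[\tfrac{\gbt\big(\hd_k^\pb\prod_{t'=0}^{k-1}\ptb{t'}\big)}{\prod_{t'=0}^{k-1}\ptb{t'}}\Big]
\]
appearing on the left-hand side of Lemma~\ref{proexpand} (the empty product for $k=0$ equals $1$). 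I would then instantiate Lemma~\ref{proexpand} at $t=0$, which writes $C(k)=\gamma^2\me_k[C(k+1)]$ plus three explicit ``level-$k$'' covariance terms: the reward term $\me_k[\cov_{k+1}[\cdot]]$, the $Q^\pb-\tq^\pb$ term $\me_k[\cov_k[\cdot\,|s_k]]$, and the $V^\pb_k$ term $\cov_k[\cdot]$. Note the heavy lifting—cancellation of the cross terms via Lemmas~\ref{lem:generallyunbiased} and~\ref{lemma:generalEstimator}, with the base case supplied by Lemma~\ref{lem:dr_one_step_cov}—is already carried out inside Lemma~\ref{proexpand}.

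The core step is to substitute this recursion into itself from $k=0$ up to $k=T$. Each substitution peels off the three level-$k$ terms and multiplies the remaining recursive piece by an extra $\gamma^2$, so after $k$ steps the level-$k$ terms carry the factor $\gamma^{2k}$. The recursion terminates because $s_{T+1}$ is an absorbing state on which all value functions vanish, hence $\hd_{T+1}^\pb\equiv\bm0$ and the residual $\gamma^2\me_T[C(T+1)]$ is zero. The nested conditional expectations accumulated by the unrolling, $\me_0[\me_1[\cdots\me_{k-1}[\,\cdot\,]]]$, collapse to a single unconditional $\me[\,\cdot\,]$ by the tower property, yielding $\cov[\hd_0^\pb]=\sum_{k=0}^T\gamma^{2k}\,\me[\text{level-}k\text{ terms}]$.

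It then remains to simplify the three level-$k$ terms into the stated form, which I would do with the product identity $\tfrac{\gbt(f\prod_{t'=0}^{m}\ptb{t'})}{\prod_{t'=0}^{m}\ptb{t'}}=\gbt f + f\sum_{t'=0}^{m}\gbt\log\ptb{t'}$ applied with $f\in\{r_k,\;Q^\pb_k-\tq^\pb_k,\;V^\pb_k\}$. For the reward term, conditioning at time $k+1$ makes $\sum_{t'=0}^{k}\gbt\log\ptb{t'}$ deterministic, so its covariance reduces to $\mv_{k+1}[r_k]\big(\sum_{t'=0}^{k}\gbt\log\ptb{t'}\big)\big(\sum_{t'=0}^{k}\gbt\log\ptb{t'}\big)\trans$; the other two identities reproduce $\gbt Q^\pb_k-\gbt\tq^\pb_k+(\sum_{t'=0}^{k}\gbt\log\ptb{t'})(Q^\pb_k-\tq^\pb_k)$ inside the $\cov_k[\,\cdot\,|s_k]$ and $\gbt V^\pb_k+(\sum_{t'=0}^{k-1}\gbt\log\ptb{t'})V^\pb_k$ inside the $\cov_k[\,\cdot\,]$, matching Eq.~\eqref{dr:fullyexpansion2} verbatim once the three terms are summed under a single $\me[\,\cdot\,]$.

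I expect the main obstacle to be organizational rather than conceptual, since Lemma~\ref{proexpand} already encapsulates the delicate part. The remaining difficulty is keeping the index bookkeeping of the unrolled recursion straight: verifying that each successive substitution contributes precisely one factor of $\gamma^2$, that the conditioning levels line up so the tower property applies cleanly to both the $\me_k[\cov_{k+1}[\cdot]]$ and the $\cov_k[\cdot]$ pieces, and that the level-$k$ contribution corresponds to time index $k$ in the trajectory so the final summation ranges exactly over $t=0,\dots,T$.
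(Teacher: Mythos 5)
Your proposal is correct and follows essentially the same route as the paper: instantiate Lemma~\ref{proexpand} at $t=0$, unroll the resulting $T+1$ recursions with the terminal covariance vanishing (since all value functions are zero at the absorbing state $s_{T+1}$), collapse the nested conditional expectations by the tower property so that level $k$ carries the factor $\gamma^{2k}$, and finish by applying the product rule to rewrite the gradient ratios as sums of $\gbt\log\ptb{t'}$. This is exactly the paper's own (brief) proof of the restatement, with the same reliance on Lemma~\ref{proexpand} for the substantive cancellations.
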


\begin{proof}
Set $t=0$ in (\ref{dr:prop2}) of Lemma \ref{proexpand}. Then for $k=0,1,2,...,T$, we have the following $T+1$ equations:
\begin{align*}
\cov_{k}[\frac{\gbt\Big(\hd_{k}^\pb\prod_{t'=0}^{k-1}\ptb{t'}\Big)}{\prod_{t'=0}^{k-1}\ptb{t'}}]=&\gamma^2\me_{k}\Big[\cov_{k+1}\Big[\frac{\gbt\Big(\hd_{k+1}^\pb\prod_{t'=0}^{k}\ptb{t'}\Big)}{\prod_{t'=0}^{k}\ptb{t'}}\Big]\Big] + \me_{k}[\cov_{k+1}\Big[\frac{r_{k}\gbt \prod_{t'=0}^{k}\ptb{t'}}{\prod_{t'=0}^{k}\ptb{t'}}\Big]\Big]\\
&+\me_{k}\Big[\cov_{k}\Big[\frac{\gbt\Big(\Big[Q^\pb_{k}-\tq^\pb_{k}\Big]\prod_{t'=0}^{k}\ptb{t'}\Big)}{\prod_{t'=0}^{k}\ptb{t'}}\Big|s_{k}\Big]\Big] +\cov_{k}\Big[\frac{\gbt\Big(V_{k}^\pb\prod_{t'=0}^{k-1}\ptb{t'}\Big)}{\prod_{t'=0}^{k-1}\ptb{t'}}\Big].
\end{align*}
Notice that
$$
\me_{T}\Big[\cov_{T+1}\Big[\frac{\gbt\Big(\hd_{T+1}^\pb\prod_{t'=0}^{T}\ptb{t'}\Big)}{\prod_{t'=0}^{T}\ptb{t'}}\Big]\Big]={\bm O}.
$$
Combine all the above $T+1$ equations can we obtain that
\begin{align*}
\cov[\gbt\hd^\pb_0] =& \me_t\Big[ \sum_{t=0}^T\gamma^{2t}\Bigg(\mv_{t+1}[r_t]\Big(\sum_{t'=0}^{t}\gbt\log\ptb{t'}\Big)\Big(\sum_{t'=0}^{t}\gbt\log\ptb{t'}\Big)\trans\\
&\quad\quad\quad\quad+ \cov_{t}\Big[\frac{\gbt\Big(\Big[Q^\pb_{t}-\tq_{t}^\pb\Big] \prod_{t'=0}^{t}\ptb{t'}\Big)}{\prod_{t'=0}^{t}\ptb{t'}}\Big|s_{t}\Big]\\
&\quad\quad\quad\quad+\cov_{t}\Big[\frac{\gbt\Big(V_{t}^\pb\prod_{t'=0}^{t-1}\ptb{t'}\Big)}{\prod_{t'=0}^{t-1}\ptb{t'}}\Big]\Bigg)\Big]. \numberthis\label{dr:fullyexpansion}
\end{align*}
After applying the product rule of derivative and using the sum of $\gbt\log(\cdot)$ to replace the gradient ratio, we can get (\ref{dr:fullyexpansion2}).
\end{proof}

\section{Cramer Rao Lower Bound} \label{app:CR}
\begin{definition}[Discrete DAG MDP]
An MDP is a discrete Directed Acyclic Graph (DAG) MDP if:
\begin{itemize}
    \item The state space and the action space are finite.
    \item For any $s\in S$, there exists a unique $t\in \mathbb{N}$ such that, $\max_{\pi:\cS\rightarrow\cA} P(s_t=s|\pi)>0$. In other words, a state only occurs at a particular time step.
\end{itemize}
\end{definition}

\begin{theorem} \label{thm:dag}
For discrete DAG MDPs and a policy parameterized by $\btheta\in\mathbb{R}^d$, the variance of any unbiased estimator w.r.t. the $i$-th component of the policy gradient vector is lower bounded by 
\begin{align*}
&\me\Big[\sum_{t=0}^T\gamma^{2t}\bigg\{\mv_{r_t|s_t,a_t}[r_t]\Big[\sum_{\tau_{[0:t]}}\mathbb{I}[(s_t,a_t)\in\tau_{[0:t]}]\frac{P_{\pb}(\tau_{[0:t]})}{P_M(s_t,a_t)}\Big(\sum_{t_1=0}^t\frac{\partial \log\ptb{t_1}}{\partial \theta_i}\Big)\Big]^2\\
&\quad\quad\quad\quad+\mv_{s_t|s_{t-1},a_{t-1}}\Big[\sum_{\tau_{[0:t-1]}}\mathbb{I}[(s_{t-1},a_{t-1})\in\tau_{[0:t-1]}]\frac{P_\pb(\tau_{[0:t-1]})}{P_M(s_{t-1},a_{t-1})}\Big(V_t^\pb\sum_{t_1=0}^{t-1}\frac{\partial \log\ptb{t_1}}{\partial \theta_i}+\frac{\partial V_t^\pb}{\partial \theta_i}\Big)\Big]\bigg\}\Big].\numberthis\label{CR:lowerbound}
\end{align*}
\end{theorem}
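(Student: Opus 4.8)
The plan is to read off the bound as the multivariate Cram\'er--Rao (C-R) inequality for estimating the scalar functional $F_i(\psi):=\partial J(\pi_\btheta)/\partial\theta_i$ of the unknown environment $\psi$, where $\psi$ collects the mean reward $R(s,a)$ and the transition law $P(\cdot\mid s,a)$ at every state--action pair, and the data is a single on-policy trajectory generated by the fixed, known policy $\pi_\btheta$. The starting point is the general inequality $\mv[\hat F_i]\ge(\nabla_\psi F_i)\trans\mathcal{I}(\psi)^{-1}(\nabla_\psi F_i)$, so I would first pin down the Fisher information $\mathcal{I}(\psi)$ and then the functional gradient $\nabla_\psi F_i$; this parallels the C-R analysis of \citet{jiang2016doubly} for the value functional, the only new ingredient being that we differentiate the target in $\btheta$.

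For the Fisher information, the key observation is that the trajectory log-likelihood factorizes into the policy factors $\log\ptb{t}$, which do not depend on $\psi$ and hence drop out of $\mathcal{I}$, plus the reward and transition factors, one per visited $(s,a)$. Because the MDP is a DAG (each state has a unique time index and the subtree below a state is independent of how it was reached), $\mathcal{I}(\psi)$ is block diagonal across $(s,a)$ pairs, and each block is scaled by the visitation probability $P_M(s,a)=\sum_{\tau}\mathbb{I}[(s,a)\in\tau]\,P_{\pb}(\tau)$. Using that a mean parameter carries per-sample Fisher information $1/\mv_{r\mid s,a}[r]$, the reward block equals $P_M(s,a)/\mv_{r\mid s,a}[r]$, while the transition block is the categorical information of $P(\cdot\mid s,a)$ scaled by $P_M(s,a)$; here I would invoke the standard fact that, for a categorical law $p$, the C-R quadratic form of a linear functional with coefficient vector $b$ equals $\mv_{s'\sim p}[b_{s'}]$ (obtained through the pseudo-inverse on the probability simplex).

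For the functional gradient I would use the exchange $\nabla_\psi F_i=\partial_{\theta_i}\nabla_\psi J$ and first compute $\nabla_\psi J$ at fixed $\btheta$: since $J$ is linear in the rewards and in each one-step transition of a DAG, $\partial J/\partial R(s,a)=\gamma^t P_M(s,a)$ and $\partial J/\partial P(s_t\mid s_{t-1},a_{t-1})=\gamma^{t}P_M(s_{t-1},a_{t-1})\,V^\pb(s_t)$, with $V^\pb(s_t)$ free of the incoming transition. Applying $\partial_{\theta_i}$ by the product rule then produces exactly the two ingredients in the statement: the score-function sum $\partial_{\theta_i}\log P_M(s,a)=\sum_{\tau}\mathbb{I}[(s,a)\in\tau]\tfrac{P_{\pb}(\tau)}{P_M(s,a)}\sum_{t_1}\partial_{\theta_i}\log\ptb{t_1}$ from differentiating $P_M$, and the value gradient $\partial_{\theta_i}V^\pb(s_t)$ from differentiating $V^\pb(s_t)$.

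Finally I would assemble the C-R quadratic form block by block. The reward blocks contribute $\sum_{(s,a)}\frac{(\gamma^t\partial_{\theta_i}P_M)^2}{P_M/\mv[r]}=\sum\gamma^{2t}\mv[r]\,P_M(\partial_{\theta_i}\log P_M)^2$, which after recognizing $P_M$ as the trajectory weight is the first (squared) term. The transition blocks, applying the categorical identity with coefficient vector $b_{s_t}=\gamma^{t}(\partial_{\theta_i}P_M\cdot V^\pb(s_t)+P_M\,\partial_{\theta_i}V^\pb(s_t))$, contribute $\sum P_M\,\gamma^{2t}\mv_{s_t\mid s_{t-1},a_{t-1}}[\partial_{\theta_i}\log P_M\cdot V^\pb(s_t)+\partial_{\theta_i}V^\pb(s_t)]$, i.e.\ the second term. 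Re-expressing both $P_M$-weighted sums as $\me[\sum_t\gamma^{2t}\{\cdots\}]$ over the on-policy trajectory yields the claimed form. The hard part will be the transition block: correctly computing the categorical Fisher information on the constrained simplex, justifying the reduction of its C-R quadratic form to the conditional variance $\mv_{s_t\mid s_{t-1},a_{t-1}}[\cdot]$, and keeping the $\gamma$-powers and the $P_M$-normalization consistent when passing between per-$(s,a)$ sums and the trajectory expectation; the reward block and the derivative-exchange step are comparatively routine.
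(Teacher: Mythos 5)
Your overall strategy is the same as the paper's: treat the conditional reward and transition distributions at each $(s,a)$ as the unknown parameters, compute the (constrained) Fisher information and the Jacobian of the functional $\partial J/\partial\theta_i$, and assemble the Cram\'er--Rao quadratic form block by block, with each block contributing a conditional variance scaled by the visitation probability $P_M(s_t,a_t)$. Your computation of the functional gradient (the product rule producing the score sum $\partial_{\theta_i}\log P_M$ and the value gradient $\partial_{\theta_i}V^\pb$) and your final assembly into $\me[\sum_t\gamma^{2t}\{\cdots\}]$ match the paper's $\kappa^r,\kappa^s$ calculation and Eqs.~\eqref{CR:reward}--\eqref{CR:transition}.

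The genuine gap is your assertion that $\mathcal{I}(\psi)$ ``is block diagonal across $(s,a)$ pairs'' because the MDP is a DAG. For the natural (overcomplete) parameterization $\eta_{s_t,a_t,s_{t+1}}=P(s_{t+1}|s_t,a_t)$, $\eta_{s_t,a_t,r_t}=P(r_t|s_t,a_t)$, this is false: the raw scores $\mathbb{I}[(s_t,a_t,s_{t+1})\in\tau]/P(s_{t+1}|s_t,a_t)$ do not have mean zero, and the cross-moment between the blocks of $(s_{t'},a_{t'})$ and a downstream $(s_t,a_t)$ equals $P_M(s_{t'},a_{t'})P_M(s_t,a_t|s_{t'+1})\neq 0$ --- these are exactly the off-diagonal entries the paper tabulates in its computation of $I$. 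The DAG property alone does not remove them; what does is restricting to the tangent space of the simplex constraints. The bulk of the paper's proof is devoted to precisely this step: it invokes the \emph{constrained} C--R bound $KU(U\trans IU)^{-1}U\trans K\trans$ and constructs an explicit matrix $X$ so that $U\trans(F\trans X\trans+I+XF)U$ is block diagonal with the desired blocks $B_s,B_r\propto(\diag(P)-PP\trans)/P_M$. Your sketch skips this entirely; to close it you would either have to replicate that elimination, or switch to a minimal parameterization of each categorical and argue that the resulting scores form conditional martingale differences (zero conditional mean given the history, and each $(s,a)$ visited at most once in a DAG MDP), which does yield block diagonality --- but that argument is not the one you gave. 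A second, smaller issue: parameterizing the reward by its mean and asserting per-sample information $1/\mv_{r|s,a}[r]$ is not valid for a general family (the information of a location parameter is only lower-bounded by $1/\mv[r]$, which would weaken the bound in the wrong direction); to recover exactly $\mv_{r|s,a}[r]$ you should treat $P(r|s,a)$ nonparametrically as a categorical law and apply the same linear-functional variance identity you already use for the transition block, as the paper does.
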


\begin{proof}
We parameterize the MDP by $\mu(s_0)$, $P(s_{t+1}|s_t,a_t)$ and $P(r_t|s_t,a_t)$, for $t=0,1,2,...,T$. In convenience, we consider $\mu(s_0)$ as $P(s_0|\emptyset)$, so all the parameters can be represented as $P(s_t|s_{t-1},a_{t-1})$, (for t = 0 there is a single $s_{-1}$ and $a_{-1}$). These parameters are constraint by
\begin{align*}
\sum_{s_{t+1}}P(s_{t+1}|s_t,a_t)=1,~~~~\sum_{r_t}P(r_t|s_t,a_t)=1.
\end{align*}
\begin{equation}\label{CR:F_eta}
\left[ \begin{array}{ccccc}
\underbrace{1...1}_{s_0} & ~ & ~ & ~ &\\
~& \underbrace{1...1}_{r_0} & ~ & &\\
~&~& \ddots & ~ &\\
~&~& ~ & \underbrace{1...1}_{s_T} &\\
&~&~& ~ & \underbrace{1...1}_{r_T}\\
\end{array} 
\right ]\eta=\left[\begin{array}{c}
     1 \\
     1 \\
     \vdots  \\
     1 \\
     1
\end{array}\right].
\end{equation}
where $\eta_{s_t,a_t,s_{t+1}}=P(s_{t+1}|s_t,a_t)$ and $\eta_{s_t,a_t,r_t}=P(r_{t+1}|s_t,a_t)$, and we denote the matrix on the left hand side as $F$.
As mentioned in \cite{moore2010theory}, the constrained Cramer-Rao Bound is:
\begin{equation}
KU(U\trans I U)^{-1}U\trans K\trans.
\end{equation}
where $K$ is the Jacobian of the quantity we want to estimate, and $I$ is the Fisher Information Matrix computed by
\begin{equation}
I=\me\Big[\left(\frac{\partial \log P_\pb(\tau)}{\partial \eta}\right)\left(\frac{\partial \log P_\pb(\tau)}{\partial \eta}\right)\trans\Big].
\end{equation}
where $\tau=(s_0,a_0,r_0,...,s_T,a_T,r_T)$ is a sample trajectory under policy $\pb$ and $P_\pb(\tau)$ is the probability to obtain such a sample. 
$$
P_\pb(\tau)=\mu(s_0)\pb(a_0|s_0)P(r_0|s_0,a_0)P(s_1|s_0,a_0)...\pb(a_T|s_T)P(r_T|s_T,a_T).
$$
Particularly, we use $\tau_{[0:t]}$ to denote $(s_0,a_0,r_0,...,s_t,a_t,r_t)$, whose probability is
$$
P_\pb(\tau_{[0:t]})=\mu(s_0)\pb(a_0|s_0)P(r_0|s_0,a_0)P(s_1|s_0,a_0)...P(s_t|s_{t-1},a_{t-1})\pb(a_t|s_t)P(r_t|s_t,a_t).
$$
\paragraph{Calculate FIM $I$}
Define $g(\tau)$ as an indicator vector, with $g(\tau)_{s_t,a_t,s_{t+1}}=1$ if $(s_t,a_t,s_{t+1})\in \tau$, and $g(\tau)_{s_t,a_t,r_t}=1$ if $(s_t,a_t,r_t)\in \tau$. Then, we have
\begin{equation}
\frac{\partial \log P_\pb(\tau)}{\partial \eta}=\eta^{\circ -1}\circ g(\tau).
\end{equation}
where $\circ$ denotes the element-wise power/multiplication. Then we can rewrite $I$ to be

\begin{equation}
I=\me\Big[[\frac{1}{\eta_i\eta_j}]_{ij}\circ (g(\tau)g(\tau)\trans)\Big]=[\frac{1}{\eta_i\eta_j}]_{ij}\circ\me\Big[ g(\tau)g(\tau)\trans\Big].
\end{equation}
where $[\frac{1}{\eta_i\eta_j}]_{ij}$ is a matrix expressed by its (i,j)-th element. 
\begin{table}[!h]
    \centering
    \begin{tabular}{l|c}
    \toprule[1.0pt]
    Indexing Tuple & Elements Value\\
    \toprule[1.0pt]
    Diagonal   & $\frac{P_M(s_t,a_t)}{P(s_{t+1}|s_t,a_t)}$ or $\frac{P_M(s_t,a_t)}{P(r_t|s_t,a_t)}$\\
    \hline
    Row$(s_t,a_t,s_{t+1})$\quad Column$(s_{t'},a_{t'},s_{t'+1})$  & \multirow{2}{*}{$P_M(s_{t'},a_{t'})P_M(s_t,a_t|s_{t'+1})$}\\
    \cline{1-1}
    Row$(s_t,a_t,r_t)$\quad\quad Column$(s_{t'},a_{t'},s_{t'+1})$ & \\
    \hline
    Row$(s_t,a_t,r_{t})$\quad\quad Column$(s_{t'},a_{t'},r_{t'})$ & \multirow{2}{*}{$P_M(s_{t'},a_{t'})P_M(s_t,a_t|s_{t'},a_{t'})$}\\
    \cline{1-1}
    Row$(s_{t},a_{t},s_{t+1})$\quad Column$(s_{t'},a_{t'},r_{t'})$ & \\
    \hline
    Row$(s_t,a_t,r_t)$\quad \quad Column$(s_{t},a_{t},s_{t+1})$ & $P_M(s_t,a_t)$\\
    \hline
    Others & 0\\
    \toprule[1.0pt]
    \end{tabular}
    \caption{The elements of matrix $I$, where we assume $t'<t$ w.l.o.g.}
    \label{tab:CR:elements_of_I}
\end{table}

\noindent In the following, we provide the method to calculate the elements of $I$, the results are shown in the Table \ref{tab:CR:elements_of_I}.
\begin{itemize}
    \item We first take a look at the diagonal of $I$. Since the diagonal of $\me[g(\tau)g(\tau)\trans]$ consists of the marginal distribution $P(s_t,a_t,s_{t+1})$ and $P(s_t,a_t,r_t)$, the diagonal of $I$ should be $\frac{P_M(s_t,a_t)}{P(s_{t+1}|s_t,a_t)}$ or $\frac{P_M(s_t,a_t)}{P(r_t|s_t,a_t)}$, where we use $P_M$ to denote the marginal distribution. 

    \item Next, we calculate the element of $I$ whose row and column index are $(s_t,a_t,s_{t+1})$ and $(s_{t'},a_{t'},s_{t'+1})$, respectively. Notice that those non-diagonal elements with $t=t'$ equals to 0, w.l.o.g., we only consider the case with $t'<t$, and the entry should be $\frac{P_M(s_t,a_t,s_{t+1},s_{t'},a_{t'},s_{t'+1})}{P(s_{t+1}|s_t,a_t)P(s_{t'+1}|s_{t'},a_{t'})}=P_M(s_{t'},a_{t'})P_M(s_t,a_t|s_{t'+1})$. In fact, for those whose row and column indexing tuples are respectively $(s_t,a_t,r_t)$, $(s_{t'},a_{t'},r_{t'})$, we have a similar discussion. The only difference is that, $s_t,a_t$ do not depend on $r_{t'}$. Therefore, for those $t'<t$, the corresponding entry of $I$ should be $P_M(s_{t'},a_{t'})P_M(s_t,a_t|s_{t'},a_{t'})$.

    \item Then, let's focus on those case when both row and column are indexed by tuples $(s_t,a_t,r_t)$ and $(s_{t'},a_{t'},s_{t'+1})$ separately, with $t'\leq t$. For $t=t'$, the entry should be $P_M(s_t,a_t)$. For $t'<t$, then entry should be $\frac{P_M(s_t,a_t,r_t,s_{t'},a_{t'},s_{t'+1})}{P(r_t|s_t,a_t)P(s_{t'+1}|s_{t'},a_{t'})}=P_M(s_{t'},a_{t'})P_M(s_t,a_t|s_{t'+1})$. 
    
    \item Finally, as for those elements indexed by  $(s_{t},a_{t},s_{t+1})$ and $(s_{t'},a_{t'},r_{t'})$, with $t'<t$, we have $\frac{P_M(s_t,a_t,s_{t+1},s_{t'},a_{t'},r_{t'})}{P(s_{t+1}|s_t,a_t)P(r_{t'}|s_{t'},a_{t'})}=P_M(s_{t'},a_{t'})P_M(s_{t},a_{t}|s_{t'},a_{t'})$.
\end{itemize}

\paragraph{Calculate $(U\trans IU)^{-1}$}
We use a similar strategy as \citep{jiang2016doubly} to diagonalize $I$, in order to avoid taking inverse of non-diagonal matrix. Notice that
\begin{align*}
U\trans I U =U\trans (F\trans X\trans +I+XF)U.
\end{align*}
where $X$ can be arbitrary, and $F$ is the matrix on the l.h.s of (\ref{CR:F_eta}). Denote $(F\trans X\trans +I+XF)$ as $D$, our goal is to find a $X$ to make $D$ a diagonal matrix, whose diagonal is the same as $I$'s. Notice that $F\trans X\trans$ and $XF$ are symmetry, so we can design $XF$ to eliminate all non-zero value in the upper triangle of $I$ and keep the other components unchanged. Then $F\trans X\trans$ will eliminate the lower triangle part and do not change the rest. Easy to verify that we can choose such a $X$:
\begin{table}[!h]
    \centering
    \begin{tabular}{l|c}
    \toprule[1.0pt]
    Indexing Tuple($t'\neq t$) & Elements Value\\
    \toprule[1.0pt]
    Row$(s_{t'},a_{t'},s_{{t'}+1})$\quad Column$(s_{t},a_{t},1)$  & \multirow{2}{*}{$-P_M(s_{t'},a_{t'})P_M(s_t,a_t|s_{t'+1})\mathbb{I}(t'<t)$}\\
    \cline{1-1}
    Row$(s_{t'},a_{t'},s_{t'+1})$\quad Column$(s_{t},a_{t},2)$ & \\
    \hline
    Row$(s_{t'},a_{t'},r_{t'})$\quad\quad Column$(s_{t},a_{t},1)$ & \multirow{2}{*}{$-P_M(s_{t'},a_{t'})P_M(s_t,a_t|s_{t'},a_{t'})\mathbb{I}(t'<t)$}\\
    \cline{1-1}
    Row$(s_{t'},a_{t'},r_{t'})$\quad\quad Column$(s_{t},a_{t},2)$ & \\
    \hline
    Row$(s_{t'},a_{t'},s_{t'+1})$\quad Column$(s_{t'},a_{t'},2)$ & $-P_M(s_{t'},a_{t'})$\\
    \hline
    Others & 0 \\
    \toprule[1.0pt]
    \end{tabular}
    \caption{The elements of matrix $X$}
    \label{tab:CR:elements_of_X}
\end{table}

\noindent where the column indexing tuple consists of one state and one action at step $t$ and one integer 1 or 2. $(s_t,a_t,1)$ will only multiply with the row of $F$ corresponding to the constraint condition of the state transition given $(s_t,a_t)$. Similarly, $(s_t,a_t,2)$ will only multiply with the row of $F$ corresponding to the constraint condition of the reward distribution given $(s_t,a_t)$.

\paragraph{Calculate the Lower Bound}
Let's use $\diag(\mathcal{B})$ to denote a block diagonal matrix consists of the matrices in set $\mathcal{B}$. After a similar discuss as \cite{jiang2016doubly}, with the proper choice of $U$, the value of matrix $U(U\trans I U)^{-1}U\trans$ should be:
\begin{equation}
U(U\trans I U)^{-1}U\trans={\diag}(\{B_s(s_t,a_t),B_r(s_t,a_t)\}_{t=-1}^{T}).
\end{equation}
and
\begin{align*}
B_s(s_t,a_t)=&\frac{\diag(P_s(\cdot|s_t,a_t))-P_s(\cdot|s_t,a_t)P_s(\cdot|s_t,a_t)\trans}{P_M(s_t,a_t)}. \\
B_r(s_t,a_t)=&\frac{\diag(P_r(\cdot|s_t,a_t))-P_r(\cdot|s_t,a_t)P_r(\cdot|s_t,a_t)\trans}{P_M(s_t,a_t)}.
\end{align*}
where $P_s(\cdot|s_t,a_t)$ and $P_r(\cdot|s_t,a_t)$ denote the transition and reward probability vector, respectively. 

Next, we divide the column vector $K$ into multiple continuous parts. Denote $\kappa^r_{(s_t,a_t,:)}$ as the vector fragment consists of the components of $K$, whose index is an state-action-reward tuple starting with $s_t,a_t$. Similarly, we use $\kappa^s_{(s_t,a_t,:)}$ to denote the vector fragment consists of the component of $K$, whose index is an state-action-state tuple starting with $s_t,a_t$. As a result,
\begin{equation}
KU(U\trans I U)^{-1}U\trans K\trans=\sum_{t=-1}^{T-1}\sum_{s_t,a_t} (\kappa^s_{(s_t,a_t,:)})\trans B_s(s_t,a_t)\kappa^s_{(s_t,a_t,:)}+\sum_{t=0}^T\sum_{s_t,a_t}(\kappa^r_{(s_t,a_t,:)})\trans B_r(s_t,a_t)\kappa^r_{(s_t,a_t,:)}.
\end{equation}
We first take a look at the reward part. Recall what we want to estimate is
\begin{equation}
\frac{\partial V}{\partial \theta_i}=P_{\pb}(\tau)\Big(\sum_{t_1=0}^T\frac{\partial \log\ptb{t_1}}{\partial \theta_i}\sum_{t_2=t_1}^T\gamma^{t_2}r_{t_2}\Big).
\end{equation}
Then the partial derivative of $P(r_t|s_t,a_t)$, i.e. $K_{(s_t,a_t,r_t)}$, should be
\begin{align*}
&\frac{1}{P(r_t|s_t,a_t)}\sum_{\tau}\mathbb{I}[(s_t,a_t,r_t)\in\tau]P_{\pb}(\tau)\Big(\sum_{t_1=0}^T\frac{\partial \log\ptb{t_1}}{\partial \theta_i}\sum_{t_2=t_1}^T\gamma^{t_2}r_{t_2}\Big)\\
=&\frac{\gamma^t r_t}{P(r_t|s_t,a_t)}\sum_{\tau}\mathbb{I}[(s_t,a_t,r_t)\in\tau]P_{\pb}(\tau)\Big(\sum_{t_1=0}^t\frac{\partial \log\ptb{t_1}}{\partial \theta_i}\Big)+C_r(s_t,a_t)\\
=&\underbrace{\gamma^t r_t\sum_{\tau_{[0:t]}}\mathbb{I}[(s_t,a_t)\in\tau_{[0:t]}]P_{\pb}(\tau_{[0:t]})\Big(\sum_{t_1=0}^t\frac{\partial \log\ptb{t_1}}{\partial \theta_i}\Big)}_{x_{(s_t,a_t,r_t)}}+C_r(s_t,a_t).\numberthis\label{kappa_r}
\end{align*}
where we use $C_r(s_t,a_t)$ to represent the part which is determined given $(s_t,a_t)$, and use $\mathbb{I}[\cdot]$ as indicator function. Then we have

\begin{align*}
&(\kappa^r_{(s_t,a_t,:)})\trans B_r(s_t,a_t)\kappa^r_{(s_t,a_t,:)}\\
=&\frac{1}{P_M(s_t,a_t)}\Big[\sum_{r_t}P(r_{t}|s_t,a_t)\Big(C_r(s_t,a_t)+x_{(s_t,a_t,r_{t})}\Big)^2-\Big(\sum_{r_t}P(r_{t}|s_t,a_t)\Big(C_r(s_t,a_t)+x_{(s_t,a_t,r_{t})}\Big)\Big)^2\Big]\\
=&\frac{1}{P_M(s_t,a_t)}\Big[\sum_{r_t}P(r_t|s_t,a_t)x^2_{(s_t,a_t,r_t)}-\Big(\sum_{r_t}P(r_t|s_t,a_t)x_{(s_t,a_t,r_t)}\Big)^2\Big]\\
&+\frac{2C_r(s_t,a_t)}{P_M(s_t,a_t)}\Big[\sum_{r_t}P(r_t|s_t,a_t)x_{(s_t,a_t,r_t)}-\Big(\sum_{r_t}P(r_t|s_t,a_t)\Big)\Big(\sum_{r_t}P(r_t|s_t,a_t)x_{(s_t,a_t,r_t)}\Big)\Big]\\
&+\frac{C_r^2(s_t,a_t)}{P_M(s_t,a_t)}\Big[\sum_{r_t}P(s_t,a_t)-\Big(\sum_{r_t}P(s_t,a_t)\Big)^2\Big]\\
=&\frac{\gamma^{2t}}{P_M(s_t,a_t)}\Big[\sum_{\tau_{[0:t]}}\mathbb{I}[(s_t,a_t)\in\tau_{[0:t]}]P_{\pb}(\tau_{[0:t]})\Big(\sum_{t_1=0}^t\frac{\partial \log\ptb{t_1}}{\partial \theta_i}\Big)\Big]^2\Big(\sum_{r_t}P(r_t|s_t,a_t)r_t^2-\Big(\sum_{r_t}P(r_t|s_t,a_t)r_t\Big)^2\Big)\\
=&\frac{\gamma^{2t}\mv_{r_t|s_t,a_t}[r_t]}{P_M(s_t,a_t)}\Big[\sum_{\tau_{[0:t]}}\mathbb{I}[(s_t,a_t)\in\tau_{[0:t]}]P_{\pb}(\tau_{[0:t]})\Big(\sum_{t_1=0}^t\frac{\partial \log\ptb{t_1}}{\partial \theta_i}\Big)\Big]^2.
\end{align*}
Therefore,

\begin{align*}
&\sum_{t=0}^T\sum_{s_t,a_t}(\kappa^r_{(s_t,a_t,:)})\trans B_r(s_t,a_t)\kappa^r_{(s_t,a_t,:)}\\
=&\me\bigg[\sum_{t=0}^T\gamma^{2t}\mv_{r_t|s_t,a_t}[r_t]\Big[\frac{1}{P_M(s_t,a_t)}\sum_{\tau_{[0:t]}}\mathbb{I}[(s_t,a_t)\in\tau_{[0:t]}]P_{\pb}(\tau_{[0:t]})\Big(\sum_{t_1=0}^t\frac{\partial \log\ptb{t_1}}{\partial \theta_i}\Big)\Big]^2\bigg].\numberthis\label{CR:reward}
\end{align*}
As for the states part, 
we first calculate $K_{(s_t,a_t,s_{t+1})}$, i.e. the partial derivative of $P(s_{t+1}|s_t,a_t)$:
\begingroup
\allowdisplaybreaks
\begin{align*}
&\frac{1}{P(s_{t+1}|s_t,a_t)}\sum_{\tau}\mathbb{I}[(s_t,a_t,s_{t+1})\in\tau]P_\pb(\tau)\Big(\sum_{t_1=0}^T\frac{\partial \log\ptb{t_1}}{\partial \theta_i}\sum_{t_2=t_1}^T\gamma^{t_2}r_{t_2}\Big)\\
=&\frac{1}{P(s_{t+1}|s_t,a_t)}\sum_{\tau}\mathbb{I}[(s_t,a_t,s_{t+1})\in\tau]P_\pb(\tau)\Big(\sum_{t_1=0}^{t}\frac{\partial \log\ptb{t_1}}{\partial \theta_i}\sum_{t_2=t_1}^t\gamma^{t_2}r_{t_2}\\
&\quad\quad\quad\quad+\sum_{t_1=0}^t\frac{\partial \log\ptb{t_1}}{\partial \theta_i}\sum_{t_2=t+1}^T\gamma^{t_2}r_{t_2}+\sum_{t_1=t+1}^T\frac{\partial \log\ptb{t_1}}{\partial \theta_i}\sum_{t_2=t_1}^T\gamma^{t_2}r_{t_2}\Big)\\
=&C_s(s_t,a_t)+\frac{1}{P(s_{t+1}|s_t,a_t)}\sum_{\tau}\mathbb{I}[(s_t,a_t,s_{t+1})\in\tau]P_\pb(\tau)\Big(\sum_{t_1=0}^{t}\frac{\partial \log\ptb{t_1}}{\partial \theta_i}\sum_{t_2=t+1}^T\gamma^{t_2}r_{t_2}\Big)\\
&+\frac{1}{P(s_{t+1}|s_t,a_t)}\sum_{\tau}\mathbb{I}[(s_t,a_t,s_{t+1})\in\tau]P_\pb(\tau)\Big(\sum_{t_1=t+1}^{T}\frac{\partial \log\ptb{t_1}}{\partial \theta_i}\sum_{t_2=t_1}^T\gamma^{t_2}r_{t_2}\Big)\\
=&C_s(s_t,a_t)+\underbrace{\gamma^{t+1}\sum_{\tau_{[0:t]}}\mathbb{I}[(s_t,a_t)\in\tau_{[0:t]}] P_\pb(\tau_{[0:t]})\Big(V^\pb(s_{t+1})\sum_{t_1=0}^{t}\frac{\partial \log\ptb{t_1}}{\partial \theta_i}+\gbt V^\pb(s_{t+1})\Big)}_{y_{(s_t,a_t,s_{t+1})}}.\numberthis\label{kappa_s}
\end{align*}
\endgroup
where we use $C_s(s_t,a_t)$ to represent the part which is determined given $(s_t,a_t)$. Then, we have,
\begingroup
\allowdisplaybreaks
\begin{align*}
&(\kappa^s_{(s_t,a_t,:)})\trans B_s(s_t,a_t)\kappa^s_{(s_t,a_t,:)}\\
=&\frac{1}{P_M(s_t,a_t)}\Big[\sum_{s_{t+1}}P(s_{t+1}|s_t,a_t)\Big(C_s(s_t,a_t)+y_{(s_t,a_t,s_{t+1})}\Big)^2-\Big(\sum_{s_{t+1}}P(s_{t+1}|s_t,a_t)\Big(C_s(s_t,a_t)+y_{(s_t,a_t,s_{t+1})}\Big)\Big)^2\Big]\\
=&\frac{1}{P_M(s_t,a_t)}\Big[\sum_{s_{t+1}}P(s_{t+1}|s_t,a_t)y^2_{(s_t,a_t,s_{t+1})}-\Big(\sum_{s_{t+1}}P(s_{t+1}|s_t,a_t)y_{(s_t,a_t,s_{t+1})}\Big)^2\Big]\\
&+\frac{2C_s(s_t,a_t)}{P_M(s_t,a_t)}\Big(\sum_{s_{t+1}}P(s_{t+1}|s_t,a_t)y_{(s_t,a_t,s_{t+1})}-\Big(\sum_{s_{t+1}}P(s_{t+1}|s_t,a_t)\Big)\Big(\sum_{s_{t+1}}P(s_{t+1}|s_t,a_t)y_{(s_t,a_t,s_{t+1})}\Big)\Big)\\
&+\frac{C_s^2(s_t,a_t)}{P_M(s_t,a_t)}\Big[\sum_{s_{t+1}}P(s_{t+1}|s_t,a_t)-\Big(\sum_{s_{t+1}}P(s_{t+1}|s_t,a_t)\Big)^2\Big]\\
=&\frac{1}{P_M(s_t,a_t)}\Big[\sum_{s_{t+1}}P(s_{t+1}|s_t,a_t)y^2_{(s_t,a_t,s_{t+1})}-\Big(\sum_{s_{t+1}}P(s_{t+1}|s_t,a_t)y_{(s_t,a_t,s_{t+1})}\Big)^2\Big]\\
=&\frac{1}{P_M(s_t,a_t)}\mv_{s_{t+1}|s_t,a_t}[y_{(s_t,a_t,s_{t+1})}].
\end{align*}
\endgroup
Therefore,
\begin{align*}
\sum_{t=-1}^{T-1}\sum_{s_t,a_t}(\kappa^s_{(s_t,a_t,:)})\trans B_r(s_t,a_t)\kappa^s_{(s_t,a_t,:)}=&\me\Big[\sum_{t=-1}^{T-1}\mv_{s_{t+1}|s_t,a_t}\Big[\frac{y_{(s_{t},a_t,s_{t+1})}}{P_M(s_{t},a_{t})}\Big]\Big]\\
=&\me\Big[\sum_{t=0}^{T}\mv_{s_t|s_{t-1},a_{t-1}}\Big[\frac{y_{(s_{t-1},a_{t-1},s_{t})}}{P_M(s_{t-1},a_{t-1})}\Big]\Big].\numberthis\label{CR:transition}
\end{align*}
Combine (\ref{CR:reward}) and (\ref{CR:transition}) can we obtain the Cramer-Rao lower bound for DAG-MDP:
\begin{align*}
&\me\Big[\sum_{t=0}^T\gamma^{2t}\bigg\{\mv_{r_t|s_t,a_t}[r_t]\Big[\sum_{\tau_{[0:t]}}\mathbb{I}[(s_t,a_t)\in\tau_{[0:t]}]\frac{P_{\pb}(\tau_{[0:t]})}{P_M(s_t,a_t)}\Big(\sum_{t_1=0}^t\frac{\partial \log\ptb{t_1}}{\partial \theta_i}\Big)\Big]^2\\
&\quad\quad\quad\quad+\mv_{s_t|s_{t-1},a_{t-1}}\Big[\sum_{\tau_{[0:t-1]}}\mathbb{I}[(s_{t-1},a_{t-1})\in\tau_{[0:t-1]}]\frac{P_\pb(\tau_{[0:t-1]})}{P_M(s_{t-1},a_{t-1})}\Big(V_t^\pb\sum_{t_1=0}^{t-1}\frac{\partial \log\ptb{t_1}}{\partial \theta_i}+\frac{\partial V_t^\pb}{\partial \theta_i}\Big)\Big]\bigg\}\Big].
\end{align*}
\end{proof}

\begin{remark} \label{rem:achieve}
As for Tree MDP, which is the special case of DAG-MDP, for each state $s_t$, there only exists one trajectory starting from step 0 and end at $s_t$. Therefore, $\mv_{s_t|s_{t-1},a_{t-1}}=\mv_{s_t|s_0,a_0,...,s_{t-1},a_{t-1}}$, and we use $\mv_t$ to replace $\mv_{s_t|s_{t-1},a_{t-1}}$. As a result, the lower bound for Tree-MDP case should be

\begin{align*}
&\me\Big[\sum_{t=0}^T\gamma^{2t}\bigg\{\mv_{r_t|s_t,a_t}[r_t]\Big[\sum_{\tau_{[0:t]}}\mathbb{I}[(s_t,a_t)\in\tau_{[0:t]}]\frac{P_{\pb}(\tau_{[0:t]})}{P_M(s_t,a_t)}\Big(\sum_{t_1=0}^t\frac{\partial \log\ptb{t_1}}{\partial \theta_i}\Big)\Big]^2\\
&\quad\quad\quad\quad+\mv_{s_t|s_{t-1},a_{t-1}}\Big[\sum_{\tau_{[0:t-1]}}\mathbb{I}[(s_{t-1},a_{t-1})\in\tau_{[0:t-1]}]\frac{P_\pb(\tau_{[0:t-1]})}{P_M(s_{t-1},a_{t-1})}\Big(V_t^\pb\sum_{t_1=0}^{t-1}\frac{\partial \log\ptb{t_1}}{\partial \theta_i}+\frac{\partial V_t^\pb}{\partial \theta_i}\Big)\Big]\bigg\}\Big]\\
=&\me\Big[\sum_{t=0}^T\gamma^{2t}\bigg\{\mv_{t+1}[r_t]\Big[\sum_{r_t}P(r_t|s_t,a_t)\Big(\sum_{t_1=0}^t\frac{\partial \log\ptb{t_1}}{\partial \theta_i}\Big)\Big]^2\\
&+\mv_{t}\Big[\sum_{r_{t-1}}P(r_{t-1}|s_{t-1},a_{t-1})\Big(V_t^\pb\sum_{t_1=0}^{t-1}\frac{\partial \log\ptb{t_1}}{\partial \theta_i}+\frac{\partial V_t^\pb}{\partial \theta_i}\Big)\Big]\bigg\}\Big]\\
=&\me\Big[\sum_{t=0}^T\gamma^{2t}\bigg\{\mv_{t+1}[r_t]\Big[\Big(\sum_{t_1=0}^t\frac{\partial \log\ptb{t_1}}{\partial \theta_i}\Big)\Big]^2+\mv_t\Big[\Big(V_t^\pb\sum_{t_1=0}^{t-1}\frac{\partial \log\ptb{t_1}}{\partial \theta_i}+\frac{\partial V_t^\pb}{\partial \theta_i}\Big)\Big]\bigg\}\Big].
\end{align*}
We can observe that, if both $Q^\pb$ and $\gbt Q^\pb$ are well-estimated by $\hq^\pb$ and $\gbt\hq^\pb$, the variance of the PG estimator derived from doubly robust OPE estimator, i.e. Eq.\eqref{drpg:maintext:variance_all}, can achieve this lower bound.
\end{remark}

\section{Experiment Details}\label{exp:true_estimators}
We provide detailed specifications of the methods compared in the experiments, which are (a) Per-step IS, (b) Per-step IS with state-dependent baseline,  (c) Per-step IS with state-action-dependent baseline, (d) Per-step IS with trajectory-wise control variate, and (e) our DR-PG estimator. As for (a)--(d), we use the same estimators and hyperparameter setting as \citet{cheng2019trajectory}. Besides, we implement our DR-PG estimator based on their implementation of trajectory-wise control variate, and inherit their practical modifications as-is. The detailed formula for these estimators are given below.
\paragraph{(a) Per-step IS}
\begin{equation}
\sum_{t=0}^T\gbt\log\ptb{}(a_t|s_t)\Big[\sum_{t'=t}^T(\gamma\delta)^{t'-t}r_{t'}\Big].\label{exp:mc}
\end{equation}

\paragraph{(b) Per-step IS with State-Dependent Baseline}
\begin{equation}
\sum_{t=0}^T\gbt\log\ptb{}(a_t|s_t)\Big[\sum_{t'=t}^T(\gamma\delta)^{t'-t}r_{t'} - \tilde{V}(s_{t'})\Big].\label{exp:st}
\end{equation}

\paragraph{(c) Per-step IS with State-Action-Dependent Baseline}
\begin{equation}
\sum_{t=0}^T\Big\{\gbt\log\pi(a_t|s_t)\Big[\sum_{t'=t}^T(\gamma\delta)^{t'-t}r_{t'}\Big]-\Big(\tq(s_{t},a_{t})\gbt\log\pi(a_t|s_t)-\tilde{G}_1(s_{t}) \Big)\Big\}.\label{exp:sa}
\end{equation}

\paragraph{(d) Per-step IS with Trajectory-wise Control Variate}
\begin{align*}
\sum_{t=0}^T\Big\{&\gbt\log\pi(a_t|s_t)\Big[\sum_{t_1=t}^T(\gamma\delta)^{t_1-t}r_{t_1}-\sum_{t_2=t+1}^T(\theta\delta)^{t_2-t} \Big(\tq(s_{t'},a_{t'})-\tv(s_{t'}) \Big)\Big]\\
&-\Big(\tq(s_{t'},a_{t'})\gbt\log\pi(a_t|s_t)-\tilde{G}_1(s_{t'})\Big)\Big\}.\numberthis\label{exp:traj}
\end{align*}
Note that the $\theta$ and $\delta$ hyperparameters are the practical modifications introduced by \citet{cheng2019trajectory}, and we inherent them as-is in both (d) and (e).

\paragraph{(e) DR-PG}

\begin{align*}
\sum_{t=0}^T\Big\{&\gbt\log\pi(a_t|s_t)\Big[\sum_{t_1=t}^T(\gamma\delta)^{t_1-t}r_{t_1}-\sum_{t_2=t+1}^T(\theta\delta)^{t_2-t} \Big(\tq(s_{t'},a_{t'})-\tv(s_{t'}) \Big)\Big]\\
&-\Big(\tq(s_{t'},a_{t'})\gbt\log\pi(a_t|s_t)-\tilde{G}_1(s_{t'})\Big)-\Big(\tilde{\gbt Q}(s_{t'},a_{t'})-\tilde{G}_2(s_{t'})\Big)\Big\}.\numberthis\label{exp:dr}
\end{align*}
Next we explain some new notations we used in the above equations. First, $\tilde{V}$ in (b) is the value function estimator. The functions $\tq,\tv$ and $\tilde{G}_1$ in (c)--(e) is defined as
\begin{align*}
\tq(s_t) =& r_t + \delta \tilde{V}(s_{t+1}).\\
\tilde{V}(s_t) =& \frac{1}{n}\sum_{i=1}^n \Big(r(s_t, a_t^i)+\delta \tilde{V}(s^i_{t+1})\Big).\\
\tilde{G}_1(s_t)=&\frac{1}{n}\sum_{i=1}^n\Big(r(s_t, a_t^i)+\delta \tilde{V}(s^i_{t+1})-\tilde{V}(s_t)\Big)\gbt \log\ptb{}(a_t|s_t).
\end{align*}
where $a_t^i\sim \ptb{}(\cdot|s_t)$, and both $r(s_t,a^i)$ and $s^i_{t+1}$ are sampled from $\tilde{d}$ given $(s_t,a^i_t)$, and $\tilde{V}(s_t)$ in $\title{G}_1(s_t)$ is used for reducing estimation variance. Besides, in (\ref{exp:dr}), we have two functions $\tilde{\gbt Q}$ and $\tilde{G}_2$ defined as
\begin{align*}
\tilde{\gbt Q}(s,a)=&\frac{1}{n_q}\sum_{i=1}^{n_q}\sum_{k=0}^{L-1}(\gamma')^k\Big(r(s^i_k,a^i_k)+\delta \tilde{V}(s^i_{k+1})- \tilde{V}(s^i_k)\Big)\gbt\log\ptb{}(a^i_k|s^i_k).\\
\tilde{G}_2(s)=&\frac{1}{n_v}\sum_{i=1}^{n_v}\tilde{\gbt Q}(s,a^i).
\end{align*}
where $\{s^i_k,a^i_k,r(s^i_k,a^i_k)\}_{k=0}^{L-1}$ is a trajectory sampled from $\tilde{d}$ starting from $(s,a)$, and $a^i\sim \pi(\cdot|s)$.

In the experiments, we use $n=1000,n_q=20,n_v=20,L=30, \gamma=1,\delta=0.999,\theta=\gamma'=0.9$.

\end{document}